\documentclass[conference]{IEEEtran}

\usepackage{fancyhdr}
\fancypagestyle{preprint}{
  \fancyhf{}
  \fancyhead[L]{PREPRINT VERSION}
  
  \setlength{\topmargin}{-0.7in}
  \setlength{\headheight}{15pt}
  \setlength{\headsep}{25pt}
}

\usepackage{times}
\usepackage{algorithm}
\usepackage{algpseudocode}
\usepackage[numbers]{natbib}
\usepackage{multicol}
\usepackage[bookmarks=true,hidelinks]{hyperref}
\hypersetup{hidelinks}
\usepackage{graphicx}
\usepackage{booktabs}
\usepackage{amsmath}
\usepackage{amsfonts}
\usepackage{amssymb}
\usepackage{amsthm}
\usepackage{cuted}
\usepackage{capt-of}
\usepackage{dblfloatfix}
\usepackage{multirow}
\usepackage{subcaption}
\usepackage{tabularx}
\usepackage{siunitx}
\usepackage{makecell}

\usepackage{arydshln}
\usepackage{xcolor}
\usepackage{comment}
\usepackage{array}

\newtheorem*{remark}{Remark}
\newtheorem{theorem}{Theorem}
\newtheorem{lemma}{Lemma}

\begin{document}
\title{
Natural Functional Gradients \\ for Smooth Trajectory Optimization
}

\author{
\IEEEauthorblockN{Kisang Park$^{1}$, Chanwoo Kim$^{1}$, Kyungjae Lee$^{2}$, Sungjoon Choi$^{1,3}$}
\IEEEauthorblockA{$^{1}$Department of Artificial Intelligence, Korea University, Seoul, Republic of Korea}
\IEEEauthorblockA{$^{2}$Department of Statistics, Korea University, Seoul, Republic of Korea}
\IEEEauthorblockA{$^{3}$RLWRLD, Seoul, Republic of Korea}
\IEEEauthorblockA{\texttt{\{qkrrltkd1626, chanwoo-kim, kyungjae\_lee, sungjoon-choi\}@korea.ac.kr}}
}

\maketitle
\pagestyle{preprint}
\IEEEpeerreviewmaketitle

\begin{abstract}
Generating collision-free and smooth motions remains a central challenge in robotic manipulation, particularly in cluttered environments and narrow passages where feasible regions are highly constrained and fragmented. 
We propose a trajectory optimization framework that performs geometry-aware updates directly in function space using natural functional gradients. 
The method optimizes a Gaussian-smoothed surrogate objective that regularizes the optimization landscape through smooth trajectory perturbations while preserving trajectory-level structure. 
Because the updates are defined intrinsically in function space, trajectory regularity can be controlled independently of a particular time discretization. 
We derive a practical Monte-Carlo estimator of the natural functional gradient that requires only black-box trajectory evaluations, making the method applicable when analytic gradients are unavailable or unreliable due to collision checking and contact-rich simulation. 
Experiments on constrained robotic manipulation tasks demonstrate that the proposed method improves trajectory feasibility and produces smoother motions than representative planning and trajectory optimization baselines in environments with narrow geometric clearances. 
Additional results, videos, and implementation details are available at
\href{https://kisangpark.github.io/natural-functional-gradient/}{the project page.}
\end{abstract}

\section{Introduction}

Trajectory optimization is a fundamental component of robotic motion generation. 
In manipulation tasks, a planner must simultaneously satisfy collision avoidance, kinematic feasibility, and trajectory smoothness under limited geometric clearance. 
These requirements become particularly challenging in cluttered environments, where feasible motions occupy only a small subset of the trajectory space and small deviations can induce collisions. 
In such regimes, feasibility alone is insufficient: the temporal regularity of the solution directly impacts tracking accuracy and robustness during execution.

Existing approaches provide complementary advantages but expose a persistent trade-off between feasibility and smoothness. 
Sampling-based planners effectively explore highly nonconvex configuration spaces and discover collision-free solutions, but often return piecewise-linear paths that require additional smoothing before execution \cite{kavraki1996prm,lavalle1998rrt,karaman2011rrtstar}. 
Trajectory optimization methods instead generate smooth motions directly, but they typically rely on local information and can become sensitive to initialization in fragmented feasible regions \cite{zucker2013chomp,schulman2014trajopt,kalakrishnan2011stomp}. 
Recent GPU-accelerated methods improve optimization throughput, yet trajectory generation under narrow geometric constraints remains challenging \cite{sundaralingam2023curobo}.

A central difficulty is that feasibility and trajectory regularity are often handled using separate representations and optimization procedures. 
Methods focused on feasibility commonly operate on sparse geometric paths and defer smoothness to post-processing, while smooth trajectory optimization methods directly optimize discretized trajectories and can struggle to escape poor local regions. 
As a result, maintaining trajectory stability while satisfying collision constraints remains difficult, particularly when optimization relies on black-box simulation costs or lacks reliable analytic gradients.

This paper adopts a function-space perspective on trajectory optimization. 
Rather than representing trajectories using a fixed finite-dimensional parameterization, we model trajectories as elements of a Hilbert space and perform updates directly in function space. 
This formulation enables trajectory regularity to be controlled through the geometry induced by Gaussian kernels and allows smooth perturbations to be generated independently of a particular trajectory discretization \cite{marinho2016rkhs,mukadam2016gpmp,dong2016gpmp2}. 
Building on this viewpoint, we propose a zeroth-order trajectory optimization framework based on natural functional gradients induced by Gaussian smoothing.

The proposed update direction admits a simple weight-times-noise form, enabling practical Monte-Carlo optimization even when analytic gradients are unavailable or unreliable due to collision checking and contact-rich simulation. 
We additionally analyze the Gaussian-smoothed objective directly in Hilbert space and establish convergence guarantees without defining the optimization geometry through a particular finite-dimensional trajectory parameterization. 
The resulting framework combines smooth function-space perturbations with gradient-free optimization for constrained robotic motion generation tasks.

The contributions of this paper are threefold. 
First, we introduce a function-space trajectory optimization framework based on natural functional gradients that performs geometry-aware trajectory updates directly in Hilbert space. 
Second, we develop a Gaussian smoothing formulation for zeroth-order trajectory optimization and provide convergence guarantees in function space. 
Third, we demonstrate on constrained robotic manipulation tasks that the proposed method improves trajectory feasibility and produces smoother trajectories than representative planning and trajectory optimization baselines in cluttered environments with narrow geometric clearances.

\begin{table*}[t]
\centering
\footnotesize
\setlength{\tabcolsep}{5pt}
\renewcommand{\arraystretch}{1.15}

\caption{Conceptual comparisons of representative motion planning and trajectory optimization approaches.}
\label{tab:related_work_comparison}
\vspace{3pt}

\resizebox{\textwidth}{!}{
\begin{tabular}{lcccc}
\toprule
Method family
& Trajectory representation
& Cost differentiability required
& Gradient access
& Smoothness handling \\
\midrule

Sampling-based
{\cite{kavraki1996prm,lavalle1998rrt}}
& Discrete path
& No
& None
& Post-processing \\

Gradient-based
{\cite{zucker2013chomp,schulman2014trajopt}}
& Finite-dimensional
& Yes / approximated
& Analytic
& Explicit regularization \\

Stochastic optimization
{\cite{kalakrishnan2011stomp,theodorou2010pi2}}
& Finite-dimensional
& No
& Zeroth-order
& Implicit / penalty-based \\

Inference-based / GP planning
{\cite{toussaint2009approxinf,levine2018controlinf, mukadam2016gpmp}}
& Probabilistic trajectory model
& Yes / model-dependent
& Variational / analytic
& Model-based prior \\

Evolutionary algorithms
{\cite{hansen2001completely}}
& Finite-dimensional
& No
& Zeroth-order
& Not intrinsic \\

Zeroth-order optimization
{\cite{flaxman2005online,duchi2015optimal}}
& Finite-dimensional
& No
& Zeroth-order
& Via smoothing \\

Homotopy / continuation
{\cite{mobahi2015continuation,hazan2016graduated}}
& Varies
& Varies
& Varies
& Via smoothing schedule \\

\midrule

Proposed (natural functional gradients)
& Function space
& No
& Zeroth-order
& Intrinsic (geometry-induced) \\

\bottomrule
\end{tabular}
}

\vspace{-8pt}
\end{table*}

\section{Related Work}
\label{sec:related_work}
Trajectory optimization for robotic motion generation has been studied from multiple complementary perspectives, including sampling-based planning, gradient-based trajectory optimization, stochastic optimization, and zeroth-order black-box optimization. 
These approaches address different aspects of the core challenge: discovering feasible motions in highly nonconvex environments while maintaining trajectory smoothness and execution stability. 
Table~\ref{tab:related_work_comparison} summarizes representative approaches according to trajectory representation, gradient access, differentiability requirements, and smoothness handling.

Sampling-based planners such as PRM \cite{kavraki1996prm} and RRT \cite{lavalle1998rrt,kuffner2000rrtconnect,karaman2011rrtstar} explore configuration spaces without relying on gradient information and are effective for discovering collision-free motions in cluttered environments. 
However, the resulting solutions are typically represented as sparse geometric paths and often require additional shortcutting or smoothing before execution. 
Gradient-based trajectory optimization methods such as CHOMP and TrajOpt \cite{ratliff2009chomp,zucker2013chomp,schulman2013trajopt,schulman2014trajopt} instead optimize smooth trajectories directly using local cost information. 
Related probabilistic and function-space formulations incorporate smoothness priors through Gaussian-process trajectory models and reproducing kernel Hilbert spaces \cite{mukadam2016gpmp,dong2016gpmp2,marinho2016rkhs}. 
Despite their effectiveness, these methods commonly rely on analytic or approximated gradients and can become sensitive to initialization in environments dominated by narrow feasible regions or nondifferentiable collision costs.

Stochastic and gradient-free approaches relax differentiability requirements by estimating updates from sampled perturbations or rollout trajectories. 
Representative examples include STOMP \cite{kalakrishnan2011stomp}, path-integral methods \cite{theodorou2010pi2,williams2015mppi}, evolutionary strategies \cite{hansen2001completely,hansen2019pycma}, and recent GPU-parallelized motion generation systems \cite{sundaralingam2023curobo}. 
These methods are generally more flexible with respect to black-box objectives and simulation-based costs, but most operate on finite-dimensional trajectory parameterizations and enforce smoothness indirectly through discretization-dependent penalties or post-processing. 
More broadly, zeroth-order optimization theory and Gaussian smoothing methods provide principled tools for optimization using only function evaluations \cite{flaxman2005online,nesterov2017random,duchi2015optimal,gao2022generalizing}. 
Related continuation and homotopy methods further study how smoothed objectives can improve optimization behavior in nonconvex landscapes \cite{mobahi2015continuation,hazan2016graduated}.

Our work combines these perspectives through a function-space formulation of zeroth-order trajectory optimization. 
Rather than optimizing a finite-dimensional trajectory parameterization, we perform geometry-aware updates directly in Hilbert space using natural functional gradients induced by Gaussian smoothing. 
This formulation enables smooth trajectory perturbations to be generated intrinsically in function space while remaining compatible with black-box trajectory evaluation. 
Unlike prior smooth trajectory optimization methods, the proposed approach does not rely on analytic gradients or discretization-tuned smoothness penalties. 
At the same time, it differs from conventional stochastic optimization methods by explicitly incorporating function-space geometry into the update rule and smoothing process.

We additionally analyze the Gaussian-smoothed objective directly in function space and establish convergence guarantees without reducing the optimization problem to a finite-dimensional representation. 
As a result, the proposed method occupies a distinct position at the intersection of function-space trajectory optimization, Gaussian smoothing, and gradient-free robotic motion generation.

\section{Preliminaries}
\label{sec:preliminaries}

This section introduces the minimum notation and background required to describe our method.
Our goal is to formalize trajectories as elements of a function space and to fix the probabilistic and geometric structures used later for defining geometry-aware, zeroth-order updates.
All methodological contributions begin in Section~\ref{sec:method}; the material here is standard and is included only to establish a common framework.

\subsection{Trajectories in a Hilbert space}
Let $T>0$ be a fixed time horizon and let $\xi:[0,T]\to\mathbb R^d$ denote a continuous-time trajectory.
We model trajectories as elements of a separable real Hilbert space $\mathcal H$ equipped with inner product $\langle\cdot,\cdot\rangle_{\mathcal H}$ and norm $\|\cdot\|_{\mathcal H}$.
This viewpoint is deliberately resolution-independent: time discretization is used only for numerical approximation and does not define the optimization geometry.
In particular, the choice of $\mathcal H$ encodes a notion of regularity (e.g., smoothness) at the level of functions rather than finite-dimensional vectors.

In robotic motion generation, this abstraction is natural because physically executable trajectories must satisfy temporal regularity constraints, and many objectives are most naturally expressed as functionals of the entire path.
Sobolev spaces and reproducing kernel Hilbert spaces (RKHSs) are commonly used to encode smoothness and prior structure in trajectory optimization and motion planning
\cite{marinho2016rkhs,mukadam2016gpmp,dong2016gpmp2}.
Although our theoretical development is stated for a general Hilbert space, later sections will specialize to an RKHS-based implementation where the induced geometry directly shapes the smoothness of updates.

\subsection{Zeroth-order objective}
Let $f:\mathcal H\to\mathbb R\cup\{-\infty\}$ be a trajectory objective functional.
In robotic manipulation and motion planning, $f$ may aggregate task objectives, control effort, collision penalties, and environment-specific costs.
We do not assume that $f$ is differentiable with respect to $\xi$, nor that analytic gradients are available or reliable.
Instead, we assume only zeroth-order access: given a candidate trajectory $\xi$, the value $f(\xi)$ can be evaluated, for example by collision checking and simulation-based rollout.

This assumption reflects common practice in robotics and simulation-based control.
Collision checking and contact-rich dynamics often introduce nondifferentiabilities, and many pipelines incorporate black-box components or hard feasibility checks.
As a result, gradient-free and zeroth-order optimization methods are widely used when analytic derivatives are unavailable or unstable
\cite{flaxman2005online,duchi2015optimal,nesterov2017random}.
Our method builds on this zeroth-order setting while explicitly addressing the need for smooth, executable trajectories.

\subsection{Gaussian perturbations and Cameron--Martin space}
Let $\Sigma:\mathcal H\to\mathcal H$ be a self-adjoint, positive, trace-class, and injective operator, and let $\sigma>0$.
We consider Gaussian perturbations $\varepsilon\sim\mathcal N(0,\sigma^2\Sigma)$ taking values in $\mathcal H$.
Such perturbations are the basis of Gaussian smoothing, a classical technique for regularizing nonsmooth or nonconvex objectives by averaging over local perturbations
\cite{nesterov2017random,gao2022generalizing}.
In the context of trajectories, the covariance operator $\Sigma$ shapes how perturbations vary over time and across dimensions, thereby directly influencing the smoothness of induced updates.

A distinctive feature of Gaussian measures in infinite-dimensional spaces is that differentiation is naturally restricted to a subspace determined by the covariance.
The associated Cameron--Martin space is defined as $\mathcal H_\Sigma=\mathrm{Range}(\Sigma^{1/2})$, equipped with the inner product
\begin{equation}
\langle h_1,h_2\rangle_{\mathcal H_\Sigma}
=
\langle \Sigma^{-1/2}h_1,\Sigma^{-1/2}h_2\rangle_{\mathcal H},
\quad h_1,h_2\in\mathcal H_\Sigma.
\end{equation}
The Cameron--Martin construction characterizes precisely the directions along which shifts of a Gaussian measure remain absolutely continuous and thus differentiable
\cite{amari1998natural, bogachev1998gaussian, da2014stochastic}.
In Section~\ref{sec:method}, this geometry will induce a natural functional gradient for Gaussian-smoothed objectives, yielding smooth trajectory updates aligned with the covariance structure.

\section{Trajectory Optimization via Natural Functional Gradient}
\label{sec:method}

We now present our trajectory optimization method based on natural functional gradients. 
We treat a trajectory as a time-indexed function (e.g., $t \mapsto q(t)$ or $t \mapsto x(t)$), rather than a finite vector of waypoints, and formulate the optimization in a Hilbert space where the inner product encodes trajectory smoothness.
In this setting, the kernel defines the geometry of the trajectory space, determining how trajectories are compared and how updates are shaped.

The central idea is to perform zeroth-order optimization directly in this function space, rather than on a finite-dimensional parameterization.
By combining exponential transformation with Gaussian smoothing in a Hilbert space, we obtain a geometry-aware update rule that produces smooth trajectories intrinsically while requiring only black-box evaluations of the underlying cost functional.
This resulting update can be interpreted as a smooth deformation of the entire trajectory.

Throughout this section, we emphasize a clear separation between three conceptual components:
(i) the definition of a Gaussian-smoothed surrogate objective in function space,
(ii) the derivation of its natural functional gradient under the Cameron--Martin geometry, and
(iii) a practical discretization that enables efficient Monte-Carlo estimation and implementation.
This organization mirrors the logical flow from problem formulation to theory and finally to computation.

\subsection{Gaussian-Smoothed Surrogate Objective}

We consider trajectory optimization over a feasible set $S \subset \mathcal H$, where $\mathcal H$ is a separable real Hilbert space of trajectories $\xi:[0,T]\to\mathbb R^d$.
The feasible set $S$ encodes boundary conditions and admissibility constraints such as collision avoidance and joint limits.
We assume only zeroth-order access to an objective functional $f:S\to\mathbb R$, evaluated through collision checking and/or simulation.

To handle feasibility within a unified objective, we extend $f$ to all of $\mathcal H$ by setting $f(\xi)=-\infty$ for $\xi\notin S$.
This allows us to write all subsequent objectives as expectations over $\mathcal H$ without introducing explicit constraints.
For a power parameter $N_{\mathrm{pow}}>0$, we define the exponentially transformed functional
{\small\begin{equation}
g_{N_{\mathrm{pow}}}(\xi)
:=
\exp\!\big(N_{\mathrm{pow}}\, f(\xi)\big),
\label{eq:g_def}
\end{equation}}
which amplifies relative differences between trajectories while enforcing feasibility via the indicator.

We smooth $g_{N_{\mathrm{pow}}}$ by Gaussian perturbations in $\mathcal H$.
Let $\Sigma:\mathcal H\to\mathcal H$ be a self-adjoint, positive, trace-class, and injective covariance operator, and let
\begin{equation}
\varepsilon \sim \mathcal N(0,\sigma^2\Sigma),
\label{eq:eps_def}
\end{equation}
where $\sigma>0$ controls the magnitude of perturbations.
The Gaussian-smoothed surrogate objective is then defined as
\begin{equation}
F_{N_{\mathrm{pow}}}(\mu)
:=
\mathbb E\!\left[g_{N_{\mathrm{pow}}}(\mu+\varepsilon)\right],
\qquad \mu\in\mathcal H,
\label{eq:FN_def}
\end{equation}
where $\mu$ denotes the current mean trajectory.

The role of this construction is twofold.
First, Gaussian smoothing regularizes the optimization landscape by averaging over local perturbations in trajectory space.
Second, the exponential transformation induces a homotopy-like effect: as $N_{\mathrm{pow}}$ increases, the surrogate objective increasingly concentrates around trajectories with high original cost.
Crucially, evaluating $F_{N_{\mathrm{pow}}}$ requires only evaluations of $f(\mu+\varepsilon)$ and does not rely on analytic gradients.

\vspace{-5pt}

\subsection{Natural Functional Gradient}
{\begin{algorithm}[t]
\caption{Natural Functional Gradient}
\label{alg:nfg_rkhs}
\begin{algorithmic}[1]
\Require
Initial mean trajectory $\mu_0$ (discretized on grid $\{t_i\}_{i=1}^m$);
trajectory objective functional $f(\cdot)$ (black-box evaluation);
kernel $k(\cdot,\cdot)$ and regularization $\lambda>0$;
noise scale $\sigma>0$;
power parameter $N_{\mathrm{pow}}>0$;
Monte-Carlo samples $B$;
stepsizes $\{\eta_k\}_{k=0}^{K-1}$;
iterations $K$.
\Ensure
Optimized mean trajectory $\mu_K$.

\State Construct kernel matrix $K\in\mathbb R^{m\times m}$ by $K_{ij}=k(t_i,t_j)$ and set $K_\lambda \gets K+\lambda I$.
\State Compute a matrix square root $L$ such that $LL^\top=K_\lambda$ (e.g., Cholesky factor).
\State Initialize $\mu \gets \mu_0$.
\For{$k=0$ to $K-1$}
    \State $\widehat g \gets 0 \in \mathbb R^{m}$.
    \For{$s=1$ to $B$}
        \State Sample $z^{(s)} \sim \mathcal N(0,I_m)$.
        \State $\varepsilon^{(s)} \gets \sigma L z^{(s)}$ \Comment{$\varepsilon^{(s)}\sim\mathcal N(0,\sigma^2K_\lambda)$}
        \State Evaluate weight $w^{(s)} \gets \exp\!\big(N_{\mathrm{pow}}\, f(\mu+\varepsilon^{(s)})\big)$.
        \State $\widehat g \gets \widehat g + \frac{1}{B\sigma^{2}}\,w^{(s)}\,\varepsilon^{(s)}$.
    \EndFor
    \State $\mu \gets \mu + \eta_k\,\widehat g$.
\EndFor
\State \Return $\mu$.
\end{algorithmic}
\end{algorithm}
}

Gaussian measures in infinite-dimensional spaces admit differentiation only along a covariance-dependent subspace.
The Gaussian measure $\mathcal N(0,\sigma^2\Sigma)$ induces the Cameron--Martin space
\begin{equation}
\mathcal H_\Sigma := \mathrm{Range}(\Sigma^{1/2}),
\end{equation}
equipped with the inner product
\begin{equation}
\langle h_1,h_2\rangle_{\mathcal H_\Sigma}
=
\langle \Sigma^{-1/2}h_1,\Sigma^{-1/2}h_2\rangle_{\mathcal H},
\qquad h_1,h_2\in\mathcal H_\Sigma.
\label{eq:cm_ip}
\end{equation}
This geometry characterizes the admissible directions along which the smoothed objective $F_{N_{\mathrm{pow}}}$ can be differentiated.

\begin{lemma}[Directional derivative identity]
\label{lem:cm_directional}
For any $\mu\in\mathcal H$ and any $h\in\mathcal H_\Sigma$, the directional derivative of $F_{N_{\mathrm{pow}}}$ exists and satisfies
\begin{equation}
D_hF_{N_{\mathrm{pow}}}(\mu)
=
\mathbb E\!\left[
g_{N_{\mathrm{pow}}}(\mu+\varepsilon)
\sum_{k:\lambda_k>0}
\frac{
\langle h,e_k\rangle_{\mathcal H}
}{
\sigma\sqrt{\lambda_k}
}
\xi_k
\right],
\label{eq:Dh_identity}
\end{equation}
where $\varepsilon=\sigma\sum_{k:\lambda_k>0}\sqrt{\lambda_k}\,\xi_k e_k$ with $\xi_k\overset{i.i.d.}{\sim}\mathcal N(0,1)$ is the Karhunen--Lo\`eve representation associated with the eigendecomposition \(\Sigma e_k=\lambda_k e_k\).
\end{lemma}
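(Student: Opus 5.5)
The plan is to reduce the directional derivative to a Radon--Nikodym derivative via the Cameron--Martin theorem, and then differentiate under the integral. Write $\gamma=\mathcal N(0,\sigma^2\Sigma)$. Then $F_{N_{\mathrm{pow}}}(\mu+th)=\int g_{N_{\mathrm{pow}}}(\mu+th+x)\,\gamma(dx)=\int g_{N_{\mathrm{pow}}}(\mu+y)\,\gamma_{th}(dy)$, where $\gamma_{th}$ is $\gamma$ shifted by $th$. Since $h\in\mathcal H_\Sigma=\mathrm{Range}(\Sigma^{1/2})$, and $\mathrm{Range}((\sigma^2\Sigma)^{1/2})=\mathrm{Range}(\Sigma^{1/2})$, the Cameron--Martin theorem gives $\gamma_{th}\ll\gamma$. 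The density is
\begin{equation*}
\frac{d\gamma_{th}}{d\gamma}(y)=\exp\!\Big(t\,\ell_h(y)-\tfrac{t^2}{2}\|h\|^2_{\sigma^2\Sigma}\Big),
\end{equation*}
where $\ell_h$ is the Paley--Wiener functional extending $y\mapsto\langle (\sigma^2\Sigma)^{-1}h,y\rangle_{\mathcal H}$. Hence $F_{N_{\mathrm{pow}}}(\mu+th)=\mathbb E\big[g_{N_{\mathrm{pow}}}(\mu+\varepsilon)\exp(t\ell_h(\varepsilon)-\tfrac{t^2}{2}\|h\|^2_{\sigma^2\Sigma})\big]$. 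The $t$-dependence is now entirely in an explicit smooth weight, and $g_{N_{\mathrm{pow}}}$ itself is never differentiated. This matters because $f$ is discontinuous and takes the value $-\infty$ off $S$.

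Next I identify $\ell_h$ in the Karhunen--Lo\`eve coordinates. With $\varepsilon=\sigma\sum_k\sqrt{\lambda_k}\xi_ke_k$, set $h_k=\langle h,e_k\rangle_{\mathcal H}$. Then
\begin{equation*}
\ell_h(\varepsilon)=\sum_k \frac{h_k}{\sigma^2\lambda_k}\,\sigma\sqrt{\lambda_k}\,\xi_k=\sum_k\frac{h_k}{\sigma\sqrt{\lambda_k}}\xi_k .
\end{equation*}
This series converges in $L^2$ and a.s., being a sum of independent centered Gaussians with variance sum $\sum_k h_k^2/(\sigma^2\lambda_k)=\|h\|^2_{\mathcal H_\Sigma}/\sigma^2<\infty$, which is finite precisely because $h\in\mathcal H_\Sigma$. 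Injectivity of $\Sigma$ means every $\lambda_k>0$, so the restriction $\lambda_k>0$ in the statement is harmless. I would verify this identification by checking it on finite truncations $h^{(n)}=\sum_{k\le n}h_ke_k$, where the formula is the classical finite-dimensional Gaussian density ratio, and then passing to the $L^2$ limit.

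Finally I differentiate at $t=0$, which is the main obstacle: justifying the interchange of $\frac{d}{dt}$ and $\mathbb E$. Since $f$ takes values in $\mathbb R\cup\{-\infty\}$, the weight $g_{N_{\mathrm{pow}}}$ lies in $[0,\infty)$, and I would need $g_{N_{\mathrm{pow}}}$ to be bounded, i.e.\ $f$ bounded above, which is natural for a cost-type objective and which I would state explicitly if the paper has not. Given $0\le g_{N_{\mathrm{pow}}}\le M$, the difference quotient is $\mathbb E[g\,(e^{t\ell_h-t^2c/2}-1)/t]$ with $c=\|h\|^2_{\sigma^2\Sigma}$. For $|t|\le1$ it is dominated by $M|\ell_h|\exp(|\ell_h|+c/2)$, because $|e^a-1|\le|a|e^{|a|}$, and this bound is integrable since $\ell_h$ is Gaussian with finite variance. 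Dominated convergence then yields $D_hF_{N_{\mathrm{pow}}}(\mu)=\mathbb E[g_{N_{\mathrm{pow}}}(\mu+\varepsilon)\ell_h(\varepsilon)]$, which equals \eqref{eq:Dh_identity}. If boundedness of $g_{N_{\mathrm{pow}}}$ is unavailable, I would fall back on assuming $g_{N_{\mathrm{pow}}}(\mu+\varepsilon)\in L^p$ for some $p>1$ and using H\"older, since the exponential weight has all moments.
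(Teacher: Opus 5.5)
Your proposal follows the paper's proof essentially step for step. It rewrites $F_{N_{\mathrm{pow}}}(\mu+th)$ against the shifted Gaussian measure, applies the Cameron--Martin theorem to obtain the explicit density, and identifies the Paley--Wiener functional in Karhunen--Lo\`eve coordinates (your $\ell_h$ is the paper's $\widehat h/\sigma$). It then differentiates under the expectation using boundedness of $g_{N_{\mathrm{pow}}}$, which the paper's proof also invokes even though the lemma statement omits it.

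You are more explicit than the paper about the dominated-convergence step. The one slip is the dominating function: it should be $M(|\ell_h|+c/2)\exp(|\ell_h|+c/2)$ rather than $M|\ell_h|\exp(|\ell_h|+c/2)$. This bound is still integrable, so the argument is unaffected.
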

The full proof can be found in the supplementary materials.
The result follows by rewriting $F_{N_{\mathrm{pow}}}(\mu+th)$ as an expectation with respect to a translated Gaussian measure and applying the Cameron--Martin theorem to obtain the corresponding Radon--Nikodym derivative (see Chapter~2 in \cite{bogachev1998gaussian}).

\begin{lemma}[Closed form of the natural functional gradient]
\label{lem:natgrad_closed}
There exists a unique $\nabla_{\mathcal H_\Sigma}F_{N_{\mathrm{pow}}}(\mu)\in\mathcal H_\Sigma$ such that
\[
D_hF_{N_{\mathrm{pow}}}(\mu)
=
\langle \nabla_{\mathcal H_\Sigma}F_{N_{\mathrm{pow}}}(\mu),h\rangle_{\mathcal H_\Sigma}
\quad\text{for all }h\in\mathcal H_\Sigma,
\]
and it admits the closed-form representation
\begin{equation}
\nabla_{\mathcal H_\Sigma}F_{N_{\mathrm{pow}}}(\mu)
=
\sigma^{-2}\,
\mathbb E\!\left[g_{N_{\mathrm{pow}}}(\mu+\varepsilon)\,\varepsilon\right].
\label{eq:natgrad_closed}
\end{equation}
\end{lemma}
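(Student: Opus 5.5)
The proof combines Lemma~\ref{lem:cm_directional} with the Riesz representation theorem on the Hilbert space $(\mathcal H_\Sigma,\langle\cdot,\cdot\rangle_{\mathcal H_\Sigma})$. Fix $\mu$ and write $D_hF_{N_{\mathrm{pow}}}(\mu)$ in the form of \eqref{eq:Dh_identity}.

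\textbf{Step 1: a square-integrable random variable.} Since $\Sigma$ is injective, every $\lambda_k>0$ and $\{e_k\}$ is an orthonormal basis of $\mathcal H$. For $h\in\mathcal H_\Sigma$ put $a_k:=\langle h,e_k\rangle_{\mathcal H}/\sqrt{\lambda_k}$. These are the coordinates of $\Sigma^{-1/2}h$, so $\sum_k a_k^2=\|h\|_{\mathcal H_\Sigma}^2$. Hence the Gaussian series $W_h:=\sigma^{-1}\sum_k a_k\xi_k$ converges in $L^2(\Omega)$, with $\mathbb E[W_h^2]=\sigma^{-2}\|h\|_{\mathcal H_\Sigma}^2$. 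The right-hand side of \eqref{eq:Dh_identity} is then $\mathbb E[g_{N_{\mathrm{pow}}}(\mu+\varepsilon)W_h]$.

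\textbf{Step 2: boundedness.} The map $h\mapsto D_hF_{N_{\mathrm{pow}}}(\mu)$ is linear. To bound it, I use that $g_{N_{\mathrm{pow}}}(\mu+\varepsilon)$ lies in $L^2$. This is where I expect the main obstacle, since it needs some integrability of $e^{N_{\mathrm{pow}} f}$. I would assume $f$ is bounded above on $S$, which the convergence analysis presumably requires anyway. Then $0\le g_{N_{\mathrm{pow}}}\le e^{N_{\mathrm{pow}}\sup f}$, using $g=0$ off $S$. Cauchy--Schwarz gives $|D_hF_{N_{\mathrm{pow}}}(\mu)|\le C\sigma^{-1}\|h\|_{\mathcal H_\Sigma}$, so the functional is bounded. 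Riesz representation on the Hilbert space $\mathcal H_\Sigma$ then yields a unique representer $\nabla_{\mathcal H_\Sigma}F_{N_{\mathrm{pow}}}(\mu)$. Uniqueness follows because a vector orthogonal to all of $\mathcal H_\Sigma$ in its own inner product is zero.

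\textbf{Step 3: identify the representer.} Set $v:=\sigma^{-2}\mathbb E[g\,\varepsilon]$, a Bochner integral in $\mathcal H$. It is well defined because $g$ is bounded and $\mathbb E\|\varepsilon\|_{\mathcal H}<\infty$, as $\Sigma$ is trace class. Its coordinates are $\langle v,e_k\rangle_{\mathcal H}=\sigma^{-1}\sqrt{\lambda_k}\,c_k$, where $c_k:=\mathbb E[g\,\xi_k]$.

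\emph{Membership in $\mathcal H_\Sigma$.} Since $\{\xi_k\}$ is orthonormal in $L^2(\Omega)$, Bessel's inequality gives $\sum_k c_k^2\le\mathbb E[g^2]<\infty$. Therefore $\Sigma^{-1/2}v=\sigma^{-1}\sum_k c_k e_k$ converges in $\mathcal H$, so $v\in\mathcal H_\Sigma$.

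\emph{The representing identity.} Compute
\[
\langle v,h\rangle_{\mathcal H_\Sigma}=\sum_k \frac{\langle v,e_k\rangle_{\mathcal H}\langle h,e_k\rangle_{\mathcal H}}{\lambda_k}=\sigma^{-1}\sum_k c_k a_k .
\]
On the other side, $\mathbb E[gW_h]=\sigma^{-1}\sum_k a_k c_k$. Here the interchange of expectation and sum is justified by $L^2$ convergence of $W_h$ together with $g\in L^2$. The two expressions agree, so $\langle v,h\rangle_{\mathcal H_\Sigma}=D_hF_{N_{\mathrm{pow}}}(\mu)$ for every $h\in\mathcal H_\Sigma$. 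By uniqueness, $v=\nabla_{\mathcal H_\Sigma}F_{N_{\mathrm{pow}}}(\mu)$, which is \eqref{eq:natgrad_closed}.

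If injectivity were dropped, the same argument works with the sums restricted to $\lambda_k>0$, matching the form of Lemma~\ref{lem:cm_directional}.
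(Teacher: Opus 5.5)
Your proposal is correct and follows essentially the paper's route: the directional derivative identity (Lemma~\ref{lem:cm_directional}) plus a Gaussian moment bound makes $h\mapsto D_hF_{N_{\mathrm{pow}}}(\mu)$ a bounded functional on $\mathcal H_\Sigma$, Riesz gives the unique representer, and the Karhunen--Lo\`eve coordinates $\langle \mathbb E[g\,\varepsilon],e_k\rangle_{\mathcal H}=\sigma\sqrt{\lambda_k}\,\mathbb E[g\,\xi_k]$ identify it as $\sigma^{-2}\mathbb E[g\,\varepsilon]$. Your Bessel-inequality check that $\sigma^{-2}\mathbb E[g\,\varepsilon]$ actually lies in $\mathcal H_\Sigma$ (needed before appealing to uniqueness of the representer) and your $L^2$ justification for exchanging sum and expectation are steps the paper's proof leaves implicit. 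Your consistent use of the $\mathcal H_\Sigma$ inner product also avoids the paper's intermediate $\sigma$-normalization slips.
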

Lemma~\ref{lem:natgrad_closed} shows that the directional derivative of $F_{N_{\mathrm{pow}}}$ admits a Riesz representer in the Cameron--Martin space, yielding the closed-form expression in \eqref{eq:natgrad_closed}~\cite{riesz2012functional}.
This representation leads to a practical estimator, where Gaussian perturbations are weighted by their exponentiated objective values.
Crucially, the resulting update depends only on zeroth-order evaluations of $f$, making it applicable in nondifferentiable or simulation-based settings.

\subsection{Monte-Carlo Estimation and Update Rule}

In practice, the expectation in \eqref{eq:natgrad_closed} is approximated using Monte-Carlo sampling.
Given i.i.d.\ samples $\varepsilon^{(1)},\dots,\varepsilon^{(B)} \sim \mathcal N(0,\sigma^2\Sigma)$, we use the unbiased estimator
\begin{equation}
\widehat{\nabla}_{\mathcal H_\Sigma}F_{N_{\mathrm{pow}}}(\mu)
=
\frac{1}{B\sigma^2}
\sum_{b=1}^{B}
g_{N_{\mathrm{pow}}}(\mu+\varepsilon^{(b)})\,\varepsilon^{(b)}.
\label{eq:mc_natgrad}
\end{equation}
The number of samples $B$ controls the variance of the estimator, while the covariance operator $\Sigma$ determines the smoothness of sampled perturbations.

We perform an explicit natural-gradient ascent step
\begin{equation}
\mu_{k+1}
=
\mu_k
+
\eta_k\,
\widehat{\nabla}_{\mathcal H_\Sigma}F_{N_{\mathrm{pow}}}(\mu_k),
\label{eq:update}
\end{equation}
where $\eta_k>0$ is a step size.
Because the update direction lies in $\mathcal H_\Sigma$, trajectory smoothness is preserved intrinsically by construction.

\subsection{Discretization for Implementation}

For numerical implementation, we discretize trajectories on a time grid $\{t_i\}_{i=1}^m\subset[0,T]$ and represent $\mu$ by its values on the grid.
We implement the covariance operator $\Sigma$ via a reproducing kernel Hilbert space (RKHS) kernel $k:[0,T]\times[0,T]\to\mathbb R$ and its kernel matrix
$K\in\mathbb R^{m\times m}$ defined by $K_{ij}=k(t_i,t_j)$.
We use a regularized kernel matrix $K_\lambda=K+\lambda I$ with $\lambda>0$ to ensure numerical stability.

Gaussian perturbations are sampled as
\begin{equation}
\varepsilon^{(s)}=\sigma K_\lambda^{1/2}z^{(s)},
\qquad z^{(s)}\sim\mathcal N(0,I_m),
\label{eq:eps_sampling}
\end{equation}
so that $\varepsilon^{(s)}\sim\mathcal N(0,\sigma^2K_\lambda)$.
A direct discretization of \eqref{eq:natgrad_closed} yields the estimator
\begin{equation}
\widehat g(\mu)
=
\frac{1}{B\sigma^2}\sum_{s=1}^{B}
\exp\!\big(N_{\mathrm{pow}} f(\mu+\varepsilon^{(s)})\big)\,
\varepsilon^{(s)}.
\label{eq:disc_estimator_raw}
\end{equation}
Then, the mean trajectory is updated according to \eqref{eq:update}.

Algorithm~\ref{alg:nfg_rkhs} summarizes the proposed optimizer in its practical RKHS discretization.
At each iteration, we draw smooth Gaussian perturbations by sampling $z^{(s)}\sim\mathcal N(0,I_m)$ and mapping them through the Cholesky factor $L$ of the regularized kernel matrix $K_\lambda$, which implements the covariance structure that controls trajectory regularity.
The update direction is estimated via \eqref{eq:disc_estimator_raw}: each perturbation is weighted by $\exp(N_{\mathrm{pow}} f(\mu+\varepsilon^{(s)}))$ and multiplied by the corresponding direction $Lz^{(s)}$, avoiding explicit matrix inversion.
The resulting Monte-Carlo estimate $\widehat g$ is then used for an explicit ascent step $\mu\leftarrow \mu+\eta_k\widehat g$, yielding smooth trajectory updates whose structure is determined by the kernel while remaining compatible with black-box, potentially nondifferentiable objectives.

\begin{remark}[Resolution-agnostic discretization]
Although Algorithm~\ref{alg:nfg_rkhs} is implemented on a finite time grid, the optimization problem is formulated in the continuous Hilbert space $\mathcal H$.
The discretization serves only as a numerical representation and can therefore be refined or changed without altering the underlying trajectory optimization problem.
\end{remark}

\begin{remark}[Finite-dimensional approximation and covariance truncation]
Although the proposed optimization method is formulated intrinsically in
the infinite-dimensional Hilbert space $\mathcal H$, the practical
implementation operates on a finite-dimensional approximation induced by
trajectory discretization and RKHS kernel representations.

Let $(e_k)_{k\ge1}$ denote the eigenbasis of the covariance operator
$\Sigma$, satisfying
$\Sigma e_k=\lambda_k e_k,$ for $\lambda_k>0$.
For a truncation level $d_{\mathrm{eff}}\ge1$, define the finite-dimensional subspace
$\mathcal H_{d_{\mathrm{eff}}}:=\operatorname{span}\{e_1,\dots,e_{d_{\mathrm{eff}}}\}\subset \mathcal H,$
and let $P_{d_{\mathrm{eff}}}:\mathcal H\to\mathcal H_{d_{\mathrm{eff}}}$ denote the $\mathcal H$-orthogonal projection $P_{d_{\mathrm{eff}}} h=\sum_{k=1}^{d_{\mathrm{eff}}}\langle h,e_k\rangle_{\mathcal H} e_k$.
Here, $d_{\mathrm{eff}}$ is interpreted as an effective dimension of
the trajectory optimization problem and is closely related to the
trajectory discretization level $m$ used in the RKHS implementation.
The projection operator introduced later plays an analogous role in the
stochastic analysis and iteration-complexity guarantees.
\end{remark}

\section{Iteration Complexity of Natural Functional Gradient}
This subsection establishes theoretical guarantees for Gaussian-smoothed natural functional gradient ascent. In particular, we show that exponential smoothing induces an optimization landscape oriented toward global optimality that eliminates spurious stationary points, and we derive a global iteration-complexity bound based on Monte–Carlo estimation.
$\xi^*$ denotes a unique optimizer of the original objective $f(\xi)$. We optimize a Gaussian-smoothed surrogate that serves as a tractable proxy in the zeroth-order setting.
\begin{theorem}[Directional ascent toward optimal point]
\label{thm:ascent}
Assume that \(g_N(x):=e^{Nf(x)}\mathbf 1_{\{x\in S\}}\) is bounded for each \(N\).
For \(M>0\), define the ellipsoid \(K_M:=\overline{\left\{\mu\in\mathcal H_{\Sigma}:\|\mu\|_{\mathcal H_{\Sigma}}\le M\right\}}.\)
Then, for every \(\epsilon>0\) and \(M>0\), there exists
\(N_0=N_0(\epsilon,M,\sigma,\Sigma)<\infty\) such that for all \(N\ge N_0\), for every \(\mu\in K_M\) satisfying $\|\mu-\xi^\star\|_{\mathcal H}\ge\epsilon$,
letting
\begin{align}
u:=(\xi^\star-\mu)/\|\xi^\star-\mu\|_{\mathcal H},   \end{align}
we have $D_{\Sigma u}F_N(\mu)>0$.
\end{theorem}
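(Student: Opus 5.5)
We start from Lemma~\ref{lem:cm_directional} with $h=\Sigma u$. Since $\langle \Sigma u,e_k\rangle_{\mathcal H}=\lambda_k\langle u,e_k\rangle_{\mathcal H}$, the series in \eqref{eq:Dh_identity} becomes
\[
\sum_k \frac{\sqrt{\lambda_k}\langle u,e_k\rangle_{\mathcal H}}{\sigma}\xi_k=\sigma^{-2}\langle u,\varepsilon\rangle_{\mathcal H}.
\]
Equivalently, by Lemma~\ref{lem:natgrad_closed} we have $\langle \nabla_{\mathcal H_\Sigma}F_N,\Sigma u\rangle_{\mathcal H_\Sigma}=\langle \nabla_{\mathcal H_\Sigma}F_N,u\rangle_{\mathcal H}$. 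This gives
\[
D_{\Sigma u}F_N(\mu)=\sigma^{-2}\,\mathbb E\big[g_N(\mu+\varepsilon)\langle u,\varepsilon\rangle_{\mathcal H}\big].
\]
Writing $x=\mu+\varepsilon$, with law $\gamma_\mu=\mathcal N(\mu,\sigma^2\Sigma)$, we get $\langle u,\varepsilon\rangle_{\mathcal H}=\langle u,x-\mu\rangle_{\mathcal H}$. The claim is therefore that the $e^{Nf}$-tilted measure $\nu_{N,\mu}(dx)\propto g_N(x)\,\gamma_\mu(dx)$ has mean displacement with a positive component along $u$:
\[
\int \langle u,x-\mu\rangle_{\mathcal H}\,\nu_{N,\mu}(dx)>0.
\]
This is a Laplace-principle statement. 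As $N\to\infty$, $\nu_{N,\mu}$ should concentrate near $\xi^\star$, and then the integrand approaches $\langle u,\xi^\star-\mu\rangle_{\mathcal H}=\|\xi^\star-\mu\|_{\mathcal H}\ge\epsilon$.

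To make this rigorous, I split $\mathcal H$ into a ball $B_\delta=\{\|x-\xi^\star\|_{\mathcal H}<\delta\}$ with $\delta=\epsilon/2$ and its complement.
\begin{itemize}
\item On $B_\delta$ the integrand is at least $\epsilon-\delta=\epsilon/2$, so the contribution is at least $(\epsilon/2)\,\mathbb E[g_N(x)\mathbf 1_{B_\delta}]$.
\item On $B_\delta^c$ I bound $|\langle u,x-\mu\rangle|$ by Cauchy--Schwarz, using the Gaussian moment $\mathbb E[\langle u,\varepsilon\rangle^2]^{1/2}=\sigma\|\Sigma^{1/2}u\|\le\sigma\|\Sigma\|^{1/2}$, together with $g_N\le e^{N\sup_{B_\delta^c}f}$.
\end{itemize}
This requires a uniform \emph{separation} assumption implicit in ``$\xi^\star$ is the unique optimizer'': $\sup_{x\in S\setminus B_{\delta'}}f(x)\le f(\xi^\star)-\eta$ for some $\eta>0$, and $f\ge f(\xi^\star)-\eta/2$ on a small ball $B_{\rho}(\xi^\star)\cap S$ of positive measure. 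I would state this as the interpretation of uniqueness, e.g.\ via upper semicontinuity plus a well-posedness condition, and use boundedness of $g_N$ for integrability. The two bounds then give
\[
\sigma^2 D_{\Sigma u}F_N\ \ge\ \tfrac{\epsilon}{2}\,e^{N(f^\star-\eta/2)}\gamma_\mu(B_\rho(\xi^\star)\cap S)\;-\;C\sigma e^{N(f^\star-\eta)}.
\]
This is positive once $N$ exceeds a threshold depending on $\log\big(C\sigma/(\epsilon\,\gamma_\mu(B_\rho\cap S))\big)/\eta$.

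The main obstacle is making the threshold uniform over $\mu\in K_M$, which requires a lower bound
\[
\inf_{\mu\in K_M}\gamma_\mu(B_\rho(\xi^\star)\cap S)>0.
\]
Here the Cameron--Martin structure is essential. For $\mu\in\mathcal H_\Sigma$, the Cameron--Martin theorem (already used for Lemma~\ref{lem:cm_directional}) gives
\[
\frac{d\gamma_\mu}{d\gamma_0}(x)=\exp\Big(\langle\Sigma^{-1/2}\mu,\cdot\rangle\text{-type term}-\tfrac{\|\mu\|^2_{\mathcal H_\Sigma}}{2\sigma^2}\Big).
\]
By Cauchy--Schwarz/Jensen (e.g.\ $\gamma_\mu(A)\ge \gamma_0(A)^{2}\,e^{-\|\mu\|_{\mathcal H_\Sigma}^2/\sigma^2}$ via Hölder on the density), this yields $\gamma_\mu(A)\ge c(M,\sigma,\gamma_0(A))>0$ uniformly for $\|\mu\|_{\mathcal H_\Sigma}\le M$. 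It is also the reason for restricting to the ellipsoid $K_M$; the closure in $\mathcal H$ is handled by continuity of $\mu\mapsto\gamma_\mu(A)$ on bounded CM-balls, or by working with the open ball directly. We still need $\gamma_0(B_\rho(\xi^\star)\cap S)>0$, which holds if $\xi^\star$ lies in the support of $\gamma_0$ (the $\mathcal H$-closure of $\mathcal H_\Sigma$, which is all of $\mathcal H$ by injectivity) and $S$ contains a neighbourhood of $\xi^\star$ of positive measure. I would add this as a standing assumption. The lower-tail term is uniform in $\mu$ already, since the Gaussian moment bound does not depend on $\mu$. Choosing $N_0$ as the maximum of the resulting threshold over these uniform constants finishes the proof.
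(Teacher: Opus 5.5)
Your proof is correct and has the same skeleton as the paper's. Lemma~\ref{lem:cm_directional} with $h=\Sigma u$ gives $D_{\Sigma u}F_N(\mu)=\sigma^{-2}\mathbb E[g_N(X)\langle X-\mu,u\rangle_{\mathcal H}]$. The expectation is then split near and far from $\xi^\star$, the far part is bounded by a Gaussian first moment times $e^{N\sup f}$, and the exponential gap wins. You depart from the paper in two places, both to your advantage.

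\emph{Two-scale split.} The paper uses one set $A=S\cap B(\xi^\star,\delta)$ both for $f\ge D$ on $A$ and for $f\le V_\delta$ off $A$. Its step ``shrinking $\delta$ if necessary'' silently enlarges $V_\delta$, so the chosen $D$ need no longer exceed it; for $f(x)=-\|x-\xi^\star\|_{\mathcal H}$ no single $\delta$ satisfies both requirements. Your split repairs this: $B_{\epsilon/2}$ fixes the sign of the integrand, a smaller $B_\rho$ gives near-optimality, and the annulus is dropped because its contribution is nonnegative.

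\emph{Uniformity over $K_M$.} The paper argues that $K_M$ is $\mathcal H$-compact (since $\Sigma^{1/2}$ is compact) and asserts, without proof, continuity and pointwise positivity of $\mu\mapsto\mathbb P_\mu(X\in A)$. This yields $p_A>0$ only non-constructively. Your Cameron--Martin/Cauchy--Schwarz bound $\gamma_\mu(A)\ge\gamma_0(A)^2e^{-\|\mu\|_{\mathcal H_\Sigma}^2/\sigma^2}$ needs neither compactness nor continuity. It gives the explicit condition $N>\frac{2}{\eta}\bigl(\frac{M^2}{\sigma^2}+\log\frac{2\sigma\|\Sigma\|_{\mathrm{op}}^{1/2}}{\epsilon\,\gamma_0(A)^2}\bigr)$ with $A=B_\rho(\xi^\star)\cap S$, which shows exactly how the Cameron--Martin radius $M$ drives $N_0$.

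Your hedge about the closure can be dropped. Let $\overline{B}_M$ be the closed $\mathcal H$-ball of radius $M$; then $\{\mu\in\mathcal H_\Sigma:\|\mu\|_{\mathcal H_\Sigma}\le M\}=\Sigma^{1/2}(\overline{B}_M)$ is already $\mathcal H$-closed, being the weakly compact image of a weakly compact ball. So your bound covers all of $K_M$ directly.

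The extra hypotheses you flag are an interior point, continuity of $f$ at $\xi^\star$, and a uniform gap outside balls. These are exactly what the paper uses implicitly when it assumes $\xi^\star\in\operatorname{int}(S)$ and derives $V_\delta<f(\xi^\star)$ ``by uniqueness.''
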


The proof can be found in the supplementary materials. Theorem 1 shows that, under stated conditions, the natural functional gradient of the surrogate is directionally aligned with $\xi^*$: outside an $\epsilon$-neighborhood of $\xi^*$, the update provides a strict ascent direction toward $\xi^*$ for sufficiently large N. Consequently, the optimization is guided toward an $\epsilon$ neighborhood of the true optimum.

\begin{remark}[Eigen-directional ascent outside an $\epsilon$-ball]
\label{rem:eigen-ascent}
Let $e_i$ be an eigenfunction of the covariance operator $\Sigma$, i.e., $\Sigma e_i=\lambda_i e_i$ with $\lambda_i>0$.
Consider an iterate of the form \(\mu=\xi^\star - t\epsilon e_i\) with \(t>1,\) so that $\|\mu-\xi^\star\|_{\mathcal H}>\epsilon$.
Then, by Theorem~\ref{thm:ascent}, the Cameron--Martin direction $\Sigma e_i$ is an ascent direction of the smoothed objective $F_N$ for all sufficiently large $N$.
Equivalently, since directional derivatives are linear and $\lambda_i>0$, the eigenfunction direction $e_i$ itself satisfies \(D_{e_i}F_N(\mu)>0.\)
Hence, outside the \(\epsilon\)-ball around \(\xi^\star\), the natural functional gradient always admits an ascent component toward the optimizer along the corresponding eigenmode. 
In particular, no point outside this neighborhood can be stationary for the smoothed objective.
\end{remark}

We now derive an iteration complexity bound for the proposed stochastic natural functional gradient method by adapting standard stochastic gradient analysis to the Cameron--Martin geometry induced by Gaussian smoothing.

\begin{lemma}[Second directional derivative identity]
\label{lem:second-directional}
Assume that 
\(g_{N_{\mathrm{pow}}}\) is bounded. Then, for any
\(h,k\in\mathcal H_\Sigma\), the mixed second directional derivative exists and satisfies
{\footnotesize\begin{align}
D^2&F_{N_{\mathrm{pow}}}(\mu)[h,k]
=\frac{1}{\sigma^{2}}\mathbb E\!\left[
g_{N_{\mathrm{pow}}}(\mu+\varepsilon)
\left(
\widehat h(\varepsilon)
\widehat k(\varepsilon)
-
\langle h,k\rangle_{\mathcal H_{\Sigma}}
\right)
\right],
\label{eq:second-directional-cm}
\end{align}}%
\allowdisplaybreaks
where $\widehat h(\varepsilon):=\sum_{j:\lambda_j>0}
\langle h,e_j\rangle_{\mathcal H}
\xi_j /\sqrt{\lambda_j}$ and $\widehat k(\varepsilon):=\sum_{j:\lambda_j>0}\langle k,e_j\rangle_{\mathcal H} \xi_j/ \sqrt{\lambda_j}$.
\(D^2F_{N_{\mathrm{pow}}}(\mu)[h,k]=D^2F_{N_{\mathrm{pow}}}(\mu)[k,h]\) also hold.
\end{lemma}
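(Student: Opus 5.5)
The plan is to obtain the second derivative by the same Cameron--Martin change-of-measure argument behind Lemma~\ref{lem:cm_directional}, applied to a two-parameter shift. Fix $\mu\in\mathcal H$ and $h,k\in\mathcal H_\Sigma$. For real $s,t$ the shift $sh+tk$ lies in $\mathcal H_\Sigma$. Writing $\gamma$ for the law of $\varepsilon\sim\mathcal N(0,\sigma^2\Sigma)$, the Cameron--Martin theorem gives
\[
F_{N_{\mathrm{pow}}}(\mu+sh+tk)=\mathbb E\!\left[g_{N_{\mathrm{pow}}}(\mu+\varepsilon)\,\rho_{s,t}(\varepsilon)\right],
\]
where the density is
\[
\rho_{s,t}(\varepsilon)=\exp\!\Big(\tfrac{s\widehat h(\varepsilon)+t\widehat k(\varepsilon)}{\sigma}-\tfrac{1}{2\sigma^2}\|sh+tk\|_{\mathcal H_\Sigma}^2\Big).
\]
Here $\widehat h(\varepsilon)=\sum_j\langle h,e_j\rangle_{\mathcal H}\xi_j/\sqrt{\lambda_j}$ is the Paley--Wiener functional. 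Under $\gamma$ it is a centered Gaussian with variance $\|h\|_{\mathcal H_\Sigma}^2$, and the series converges in $L^2(\gamma)$ because $\sum_j\langle h,e_j\rangle^2/\lambda_j=\|h\|_{\mathcal H_\Sigma}^2<\infty$. Linearity of $h\mapsto\widehat h$ makes the exponent jointly Gaussian in $(s,t)$, with $\operatorname{Cov}(\widehat h,\widehat k)=\langle h,k\rangle_{\mathcal H_\Sigma}$. The scaling by $\sigma$ should be checked against \eqref{eq:Dh_identity}: the first derivative at $s=t=0$ is $\sigma^{-1}\mathbb E[g\,\widehat h]$, which agrees with Lemma~\ref{lem:cm_directional}.

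The next step is to differentiate under the expectation. A direct computation gives
\[
\partial_s\partial_t\rho_{s,t}=\rho_{s,t}\Big[\big(\tfrac{\widehat h}{\sigma}-\tfrac{\langle h,sh+tk\rangle_{\Sigma}}{\sigma^2}\big)\big(\tfrac{\widehat k}{\sigma}-\tfrac{\langle k,sh+tk\rangle_{\Sigma}}{\sigma^2}\big)-\tfrac{\langle h,k\rangle_{\Sigma}}{\sigma^2}\Big],
\]
where $\langle\cdot,\cdot\rangle_\Sigma$ abbreviates $\langle\cdot,\cdot\rangle_{\mathcal H_\Sigma}$. At $s=t=0$ this reduces to $\sigma^{-2}(\widehat h\widehat k-\langle h,k\rangle_{\mathcal H_\Sigma})$, which is exactly \eqref{eq:second-directional-cm}.

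The main obstacle is justifying the interchange of derivative and expectation, and this is where boundedness of $g_{N_{\mathrm{pow}}}$ is used. On $|s|,|t|\le1$, the integrand $g\,\partial^\alpha\rho_{s,t}$ (for any multi-index $|\alpha|\le 2$) is bounded by
\[
\|g\|_\infty\, C\,(1+|\widehat h|+|\widehat k|)^2\, e^{(|\widehat h|+|\widehat k|)/\sigma}.
\]
This dominating function is $\gamma$-integrable because $\widehat h,\widehat k$ are Gaussian, so all their exponential moments are finite. Dominated convergence then gives that $(s,t)\mapsto F_{N_{\mathrm{pow}}}(\mu+sh+tk)$ is $C^2$ near the origin, with derivatives computed under the integral. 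In particular, the mixed second directional derivative exists. I would define it as $D_k(D_hF_{N_{\mathrm{pow}}})(\mu)$, which equals $\partial_t\partial_s$ at $0$, since $D_hF_{N_{\mathrm{pow}}}(\mu+tk)=\partial_s F_{N_{\mathrm{pow}}}(\mu+sh+tk)|_{s=0}$.

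Finally, symmetry follows from the explicit formula, because $\widehat h\widehat k-\langle h,k\rangle_{\mathcal H_\Sigma}$ is symmetric in $(h,k)$. Equivalently, it follows from Schwarz's theorem, since the two-variable function is $C^2$.
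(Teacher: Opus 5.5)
Your proposal is correct and follows essentially the same route as the paper: a two-parameter Cameron--Martin change of measure for the shift $sh+tk$, then differentiating the density $R_{s,t}$ twice at the origin, with symmetry read off from the symmetric integrand. Your explicit dominating function $\|g\|_\infty C(1+|\widehat h|+|\widehat k|)^2 e^{(|\widehat h|+|\widehat k|)/\sigma}$ makes rigorous the interchange of derivative and expectation that the paper only asserts. Your $\sigma$-scaling check also keeps $\widehat h$ consistent with the statement's definition.
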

The proof can be found in the supplementary materials.
Lemma~\ref{lem:second-directional} provides a closed-form second-order identity via Gaussian integration by parts.
This identity directly yields a uniform bound on the operator norm of the Hessian along Cameron--Martin directions, leading to a Lipschitz constant that depends only on $\|g_{N_{\mathrm{pow}}}\|_\infty$ and $\sigma$.
\begin{lemma}[Lipschitz continuity of the natural functional gradient]
\label{lem:lipschitz-from-second-derivative}
Assume that \(g_{N_{\mathrm{pow}}}\) is bounded, and define $\|g_{N_{\mathrm{pow}}}\|_\infty:=\sup_{\xi\in\mathcal H}|g_{N_{\mathrm{pow}}}(\xi)|<\infty$
Then the natural functional gradient
$\nabla_{\mathcal H_{\Sigma}}F_{N_{\mathrm{pow}}}$ is Lipschitz continuous along Cameron--Martin directions: for any $\mu,\nu\in\mathcal H$ such that $\mu-\nu\in\mathcal H_{\Sigma}$, we have
{\footnotesize\begin{align}
\left\|
\nabla_{\mathcal H_\Sigma}
F_{N_{\mathrm{pow}}}(\mu)
-
\nabla_{\mathcal H_\Sigma}
F_{N_{\mathrm{pow}}}(\nu)
\right\|_{\mathcal H_\Sigma}
\le
\frac{2\|g_{N_{\mathrm{pow}}}\|_\infty}{\sigma^2}
\,
\|\mu-\nu\|_{\mathcal H_\Sigma}.
\end{align}}
\end{lemma}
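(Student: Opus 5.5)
The plan is to combine Lemma~\ref{lem:second-directional} with a mean-value argument along the segment joining $\nu$ and $\mu$. Set $\delta:=\mu-\nu\in\mathcal H_\Sigma$ and $\mu_t:=\nu+t\delta$ for $t\in[0,1]$. Note that $\mu_t-\nu\in\mathcal H_\Sigma$ for every $t$. The norm on the left is realized by duality in $\mathcal H_\Sigma$:
\[
\|\nabla_{\mathcal H_\Sigma}F(\mu)-\nabla_{\mathcal H_\Sigma}F(\nu)\|_{\mathcal H_\Sigma}=\sup_{\|k\|_{\mathcal H_\Sigma}\le1}\bigl(D_kF(\mu)-D_kF(\nu)\bigr),
\]
using Lemma~\ref{lem:natgrad_closed}. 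For a fixed $k$, I consider $\phi(t):=D_kF(\mu_t)$.

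The derivative of $\phi$ is $D^2F(\mu_t)[\delta,k]$ by Lemma~\ref{lem:second-directional}. Hence
\[
\phi(1)-\phi(0)=\int_0^1 D^2F(\mu_t)[\delta,k]\,dt.
\]
Some care is needed to justify the fundamental theorem of calculus here. I would argue continuity of $t\mapsto D^2F(\mu_t)[\delta,k]$ from the Cameron--Martin representation $F(\mu_t)=\mathbb E[g(\nu+\varepsilon)\rho_t(\varepsilon)]$. In this form the density $\rho_t$ is smooth in $t$ and dominated in $L^1$, since $g$ is bounded. Alternatively, a mean-value inequality suffices, and it only needs a uniform bound on the derivative.

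The key step is the uniform Hessian bound
\[
|D^2F(\mu)[h,k]|\le \frac{2\|g\|_\infty}{\sigma^2}\|h\|_{\mathcal H_\Sigma}\|k\|_{\mathcal H_\Sigma}.
\]
From \eqref{eq:second-directional-cm}, $|D^2F[h,k]|\le\sigma^{-2}\|g\|_\infty\bigl(\mathbb E|\widehat h\widehat k|+|\langle h,k\rangle_{\mathcal H_\Sigma}|\bigr)$. Now $\widehat h=\sum_j\langle h,e_j\rangle\xi_j/\sqrt{\lambda_j}$ is a centered Gaussian with variance $\sum_j\langle h,e_j\rangle^2/\lambda_j=\|h\|_{\mathcal H_\Sigma}^2$. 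This series converges in $L^2$ precisely because $h\in\mathcal H_\Sigma$; it is the Paley--Wiener map. By Cauchy--Schwarz, $\mathbb E|\widehat h\widehat k|\le\|h\|_{\mathcal H_\Sigma}\|k\|_{\mathcal H_\Sigma}$, and likewise $|\langle h,k\rangle_{\mathcal H_\Sigma}|\le\|h\|_{\mathcal H_\Sigma}\|k\|_{\mathcal H_\Sigma}$. Adding the two terms gives the factor $2$.

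Substituting $h=\delta$ and $\|k\|_{\mathcal H_\Sigma}\le1$ yields $|\phi(1)-\phi(0)|\le 2\|g\|_\infty\sigma^{-2}\|\delta\|_{\mathcal H_\Sigma}$. Taking the supremum over $k$ gives the claim. I also need the sup-over-$k$ identity, i.e.\ that the difference of gradients lies in $\mathcal H_\Sigma$ and is the Riesz representer of $k\mapsto D_kF(\mu)-D_kF(\nu)$. This holds because both gradients belong to $\mathcal H_\Sigma$ by Lemma~\ref{lem:natgrad_closed}.

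I expect the main obstacle to be justifying differentiability of $t\mapsto D_kF(\mu_t)$ with derivative given by Lemma~\ref{lem:second-directional} at every base point $\mu_t$. The lemma is stated for arbitrary $\mu\in\mathcal H$, so it applies directly at each $\mu_t$. If full differentiability is delicate, I would fall back on a Lipschitz argument for $\phi$: by the mean-value theorem for Dini derivatives, a bounded one-sided derivative everywhere already forces $|\phi(1)-\phi(0)|\le\sup_t|\phi'(t)|$.
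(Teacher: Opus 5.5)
Your proposal is correct and follows the paper's proof essentially step for step. Both arguments use the same three steps. First, the uniform bound $|D^2F_{N_{\mathrm{pow}}}(\mu)[h,k]|\le 2\sigma^{-2}\|g_{N_{\mathrm{pow}}}\|_\infty\|h\|_{\mathcal H_\Sigma}\|k\|_{\mathcal H_\Sigma}$, obtained from Cauchy--Schwarz on $\widehat h,\widehat k$ together with $|\langle h,k\rangle_{\mathcal H_\Sigma}|\le\|h\|_{\mathcal H_\Sigma}\|k\|_{\mathcal H_\Sigma}$. Second, integration of $D^2F_{N_{\mathrm{pow}}}$ along the segment. Third, passage to the gradient norm via the Riesz identity of Lemma~\ref{lem:natgrad_closed}. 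Your extra justification of the differentiation and fundamental-theorem-of-calculus step goes beyond the paper, which simply asserts $\psi'(t)=D^2F_{N_{\mathrm{pow}}}(\mu+td)[h,d]$ and integrates.
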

The proof can be found in the supplementary materials.
We then control the stochasticity introduced by Monte-Carlo estimation.
The following lemma shows unbiasedness and a second-moment bound that scales with $d_{\mathrm{eff}}$, capturing the intrinsic effective dimension of the perturbations.
\begin{lemma}[Second moment of the projected Monte-Carlo natural gradient estimator]
\label{lem:mc-second-moment}
Let \((e_k)_{k\ge1}\) be an eigenbasis of \(\Sigma\), and define
\(\mathcal H_{d_{\mathrm{eff}}}:=\operatorname{span}\{e_1,\dots,e_{d_{\mathrm{eff}}}\}.\)
Let \(P_{d_{\mathrm{eff}}}:\mathcal H\to\mathcal H_{d_{\mathrm{eff}}}\) be the \(\mathcal H\)-orthogonal projection. Let $\widehat g(\mu)$ be the discretized Monte-Carlo estimator defined in \eqref{eq:disc_estimator_raw},
where $\varepsilon^{(b)} \overset{i.i.d.}{\sim}\mathcal N(0,\sigma^2K_{\lambda})$.
Then \(\widehat g(\mu)\) is an unbiased estimator of $P_{d_{\mathrm{eff}}}\nabla_{\mathcal H_\Sigma}F_{N_{\mathrm{pow}}}(\mu)$, and satisfies
\begin{align}
\mathbb E\!\left[
\big\|
\widehat g(\mu)
\big\|_{\mathcal H_\Sigma}^2
\right]
\le
\frac{
\|g_{N_{\mathrm{pow}}}\|_\infty^2 d_{\mathrm{eff}}
}{
B\sigma^2
}.
\end{align}
\end{lemma}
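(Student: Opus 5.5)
The plan is to work entirely in the $d_{\mathrm{eff}}$-dimensional subspace $\mathcal H_{d_{\mathrm{eff}}}$. First we identify the discretized covariance $\sigma^2K_\lambda$ with the truncated operator $\sigma^2P_{d_{\mathrm{eff}}}\Sigma P_{d_{\mathrm{eff}}}$, as suggested by the remark on covariance truncation. Each perturbation then has the Karhunen--Lo\`eve form
\[
\varepsilon^{(b)}=\sigma\sum_{k=1}^{d_{\mathrm{eff}}}\sqrt{\lambda_k}\,\xi^{(b)}_k e_k ,
\]
with i.i.d.\ standard normal coefficients. Two consequences follow.
\begin{itemize}
\item Every $\varepsilon^{(b)}$ lies in $\mathcal H_\Sigma\cap\mathcal H_{d_{\mathrm{eff}}}$.
\item By \eqref{eq:cm_ip}, $\|\varepsilon^{(b)}\|_{\mathcal H_\Sigma}^2=\sigma^2\sum_{k\le d_{\mathrm{eff}}}(\xi^{(b)}_k)^2$. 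This is $\sigma^2$ times a $\chi^2_{d_{\mathrm{eff}}}$ variable, so its mean is $\sigma^2 d_{\mathrm{eff}}$.
\end{itemize}

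\emph{Unbiasedness.} We apply Lemma~\ref{lem:cm_directional} and Lemma~\ref{lem:natgrad_closed} to the truncated Gaussian measure. Equivalently, we pair $\sigma^{-2}\mathbb E[g_{N_{\mathrm{pow}}}(\mu+\varepsilon)\varepsilon]$ with each $e_k$, $k\le d_{\mathrm{eff}}$. The coefficient $\langle\cdot,e_k\rangle_{\mathcal H}$ of the Riesz representer matches the corresponding coefficient of $\nabla_{\mathcal H_\Sigma}F_{N_{\mathrm{pow}}}(\mu)$, and all coefficients with $k>d_{\mathrm{eff}}$ vanish. Hence $\mathbb E[\widehat g(\mu)]=P_{d_{\mathrm{eff}}}\nabla_{\mathcal H_\Sigma}F_{N_{\mathrm{pow}}}(\mu)$, by linearity over the $B$ i.i.d.\ samples.

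\emph{Second moment.} Write $w_b:=g_{N_{\mathrm{pow}}}(\mu+\varepsilon^{(b)})$, so that $0\le w_b\le\|g_{N_{\mathrm{pow}}}\|_\infty$. Expanding the squared norm of the sum gives
\[
\mathbb E\|\widehat g(\mu)\|_{\mathcal H_\Sigma}^2=\frac{1}{B^2\sigma^4}\Big(\sum_{b}\mathbb E\big[w_b^2\|\varepsilon^{(b)}\|_{\mathcal H_\Sigma}^2\big]+\sum_{b\ne b'}\mathbb E\langle w_b\varepsilon^{(b)},w_{b'}\varepsilon^{(b')}\rangle_{\mathcal H_\Sigma}\Big).
\]
\begin{itemize}
\item \emph{Diagonal terms.} Each satisfies $\mathbb E[w_b^2\|\varepsilon^{(b)}\|_{\mathcal H_\Sigma}^2]\le\|g_{N_{\mathrm{pow}}}\|_\infty^2\,\sigma^2 d_{\mathrm{eff}}$. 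Summing over the $B$ samples yields exactly the claimed bound $\|g_{N_{\mathrm{pow}}}\|_\infty^2d_{\mathrm{eff}}/(B\sigma^2)$.
\item \emph{Cross terms.} By independence, each cross term equals $\sigma^4\|P_{d_{\mathrm{eff}}}\nabla_{\mathcal H_\Sigma}F_{N_{\mathrm{pow}}}(\mu)\|_{\mathcal H_\Sigma}^2$.
\end{itemize}

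The main obstacle is the cross terms. Their total contribution is $(1-1/B)\|P_{d_{\mathrm{eff}}}\nabla_{\mathcal H_\Sigma}F_{N_{\mathrm{pow}}}\|_{\mathcal H_\Sigma}^2$. This does not decay in $B$, so it must be absorbed before the stated bound can follow.

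My first attempt would be to bound the projected gradient itself. By Cauchy--Schwarz on \eqref{eq:natgrad_closed},
\[
\|P_{d_{\mathrm{eff}}}\nabla_{\mathcal H_\Sigma}F_{N_{\mathrm{pow}}}\|_{\mathcal H_\Sigma}^2\le\sigma^{-4}\,\|g_{N_{\mathrm{pow}}}\|_\infty^2\,\big(\mathbb E\|\varepsilon\|_{\mathcal H_\Sigma}\big)^2\le\|g_{N_{\mathrm{pow}}}\|_\infty^2 d_{\mathrm{eff}}/\sigma^2 .
\]
This bound is $O(1)$ rather than $O(1/B)$. So the argument must instead show that the cross terms are dominated, or absorbed into the diagonal, for example by exploiting the Gaussian symmetry $\mathbb E[\varepsilon]=0$ to recentre the weights as $w_b-\mathbb E w_b$.

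I expect this absorption step to be where the literal $1/B$ rate is decided. Once the cross-term contribution is controlled, combining it with the diagonal bound completes the proof of Lemma~\ref{lem:mc-second-moment}.
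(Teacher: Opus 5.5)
\textbf{The step you leave open cannot be closed, because the inequality as stated is false.} Your diagonal/cross-term decomposition is correct, and so is your diagnosis of the cross terms. For any estimator with mean $P_{d_{\mathrm{eff}}}\nabla_{\mathcal H_\Sigma}F_{N_{\mathrm{pow}}}(\mu)$, Jensen gives $\mathbb E\|\widehat g(\mu)\|_{\mathcal H_\Sigma}^2\ge\|P_{d_{\mathrm{eff}}}\nabla_{\mathcal H_\Sigma}F_{N_{\mathrm{pow}}}(\mu)\|_{\mathcal H_\Sigma}^2$. The right-hand side does not depend on $B$, so an $O(1/B)$ bound fails for large $B$ whenever the projected gradient is nonzero.

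Concretely, take $\mathcal H=\mathbb R$, $\Sigma=1$, $\sigma=1$, $d_{\mathrm{eff}}=1$, $\mu=0$, and $f\equiv 0$ on $S=(0,\infty)$. Then $g_{N_{\mathrm{pow}}}=\mathbf{1}_{(0,\infty)}$ and $\|g_{N_{\mathrm{pow}}}\|_\infty=1$, and
\[
\mathbb E\big[\widehat g(0)^2\big]=\frac{1}{2B}+\Big(1-\frac{1}{B}\Big)\frac{1}{2\pi}.
\]
This exceeds the claimed bound $1/B$ for every $B\ge 5$.

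Recentring the weights cannot rescue the argument. Because $\mathbb E\varepsilon=0$, replacing $w_b$ by $w_b-c$ leaves $\mathbb E\widehat g$ unchanged, so every cross term is unchanged too; it also changes the estimator the lemma is about.

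The paper's proof passes this point in one line: ``using independence and Jensen's inequality,'' it bounds $\mathbb E\|\widehat g\|_{\mathcal H_\Sigma}^2$ by $\frac{1}{B\sigma^4}\|g_{N_{\mathrm{pow}}}\|_\infty^2\,\mathbb E\|P_{d_{\mathrm{eff}}}\varepsilon\|_{\mathcal H_\Sigma}^2$. That step silently drops exactly your cross terms, and it is valid only for the centred second moment.

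What your diagonal computation does establish is the variance bound
\[
\mathbb E\big\|\widehat g(\mu)-P_{d_{\mathrm{eff}}}\nabla_{\mathcal H_\Sigma}F_{N_{\mathrm{pow}}}(\mu)\big\|_{\mathcal H_\Sigma}^2\le\frac{\|g_{N_{\mathrm{pow}}}\|_\infty^2\,d_{\mathrm{eff}}}{B\sigma^2}.
\]
Equivalently, $\mathbb E\|\widehat g(\mu)\|_{\mathcal H_\Sigma}^2$ is at most the stated bound plus $\|P_{d_{\mathrm{eff}}}\nabla_{\mathcal H_\Sigma}F_{N_{\mathrm{pow}}}(\mu)\|_{\mathcal H_\Sigma}^2$. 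That corrected statement is what you should prove. It still suffices for the one-step ascent lemma once $\eta\le\sigma^2/(2\|g_{N_{\mathrm{pow}}}\|_\infty)$, at the cost of a factor $1/2$ on the gradient term.

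\textbf{Your unbiasedness step also fails under your own identification.} If $\sigma^2K_\lambda$ is identified with $\sigma^2P_{d_{\mathrm{eff}}}\Sigma P_{d_{\mathrm{eff}}}$, the weights become $g_{N_{\mathrm{pow}}}(\mu+P_{d_{\mathrm{eff}}}\varepsilon)$. Then $\mathbb E\widehat g(\mu)$ is the natural gradient of the truncated surrogate $\nu\mapsto\mathbb E[g_{N_{\mathrm{pow}}}(\nu+P_{d_{\mathrm{eff}}}\varepsilon)]$. Its $e_k$-coefficients involve $\mathbb E[g_{N_{\mathrm{pow}}}(\mu+P_{d_{\mathrm{eff}}}\varepsilon)\xi_k]$ rather than $\mathbb E[g_{N_{\mathrm{pow}}}(\mu+\varepsilon)\xi_k]$. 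These differ whenever $g_{N_{\mathrm{pow}}}$ depends on the discarded directions, so the claimed coefficient match does not hold.

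The paper's appendix avoids this by sampling $\varepsilon^{(b)}\sim\mathcal N(0,\sigma^2\Sigma)$ at full covariance and projecting only the direction: $\widehat g(\mu)=\frac{1}{B\sigma^2}\sum_b g_{N_{\mathrm{pow}}}(\mu+\varepsilon^{(b)})P_{d_{\mathrm{eff}}}\varepsilon^{(b)}$. With that definition, unbiasedness is plain linearity, and your $\chi^2_{d_{\mathrm{eff}}}$ computation applies verbatim to $\|P_{d_{\mathrm{eff}}}\varepsilon^{(b)}\|_{\mathcal H_\Sigma}^2$.
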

The proof can be found in the supplementary materials.
Combining the Lipschitz property with the second-moment bound yields a one-step expected ascent inequality.
\begin{lemma}[One-step expected ascent]
\label{lem:onestep-stochastic}
Let \(P_{d_{\mathrm{eff}}}:\mathcal H\to\mathcal H_{d_{\mathrm{eff}}}\) be the \(\mathcal H\)-orthogonal projection.
Suppose that \(\{\mu_t\}\) is generated by the projected update rule. Then, conditioning on \(\mu_t\),
{\small\begin{align}
\mathbb E\!\left[
F_{N_{\mathrm{pow}}}(\mu_{t+1})
\,\big|\,
\mu_t
\right]
\ge\;
&
F_{N_{\mathrm{pow}}}(\mu_t)
+
\eta_t
\big\|
P_{d_{\mathrm{eff}}}\nabla_{\mathcal H_\Sigma}
F_{N_{\mathrm{pow}}}(\mu_t)
\big\|_{\mathcal H_\Sigma}^{2}\nonumber\\
&-
\frac{
\|g_{N_{\mathrm{pow}}}\|_\infty^3 m
}{
B\sigma^4
}
\eta_t^2 .
\end{align}}
\end{lemma}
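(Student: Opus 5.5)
The plan is the standard descent-lemma argument for stochastic gradient methods, carried out in the Cameron--Martin geometry. Write $L:=2\|g_{N_{\mathrm{pow}}}\|_\infty/\sigma^2$ for the Lipschitz constant of Lemma~\ref{lem:lipschitz-from-second-derivative}. The projected update is
\[
\mu_{t+1}=\mu_t+\eta_t\widehat g(\mu_t),
\]
with $\widehat g(\mu_t)\in\mathcal H_{d_{\mathrm{eff}}}\subset\mathcal H_\Sigma$. The increment therefore lies in $\mathcal H_\Sigma$, so the restriction of $F_{N_{\mathrm{pow}}}$ to the segment $s\mapsto \mu_t+s\eta_t\widehat g(\mu_t)$ is differentiable by Lemma~\ref{lem:cm_directional}.

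Step 1 is the deterministic quadratic lower bound. By the fundamental theorem of calculus along the segment and Lemma~\ref{lem:natgrad_closed},
\[
F(\mu_t+\eta\widehat g)-F(\mu_t)=\eta\int_0^1\langle\nabla_{\mathcal H_\Sigma}F(\mu_t+s\eta\widehat g),\widehat g\rangle_{\mathcal H_\Sigma}\,ds .
\]
I subtract $\eta\langle\nabla_{\mathcal H_\Sigma}F(\mu_t),\widehat g\rangle_{\mathcal H_\Sigma}$ and apply Cauchy--Schwarz together with Lemma~\ref{lem:lipschitz-from-second-derivative}. This gives
\[
F(\mu_{t+1})\ge F(\mu_t)+\eta_t\langle\nabla_{\mathcal H_\Sigma}F(\mu_t),\widehat g(\mu_t)\rangle_{\mathcal H_\Sigma}-\tfrac{L}{2}\eta_t^2\|\widehat g(\mu_t)\|_{\mathcal H_\Sigma}^2 .
\]
Equivalently, one could use Lemma~\ref{lem:second-directional} directly: its integrand is bounded by $\|g\|_\infty(|\widehat h\widehat k|+|\langle h,k\rangle|)$, so the second derivative is bounded by $L\|h\|_{\mathcal H_\Sigma}\|k\|_{\mathcal H_\Sigma}$.

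Step 2 takes the conditional expectation given $\mu_t$. By Lemma~\ref{lem:mc-second-moment}, $\mathbb E[\widehat g(\mu_t)\mid\mu_t]=P_{d_{\mathrm{eff}}}\nabla_{\mathcal H_\Sigma}F(\mu_t)$. Since $P_{d_{\mathrm{eff}}}$ is the $\mathcal H$-orthogonal projection onto eigenvectors of $\Sigma$, it commutes with $\Sigma^{-1/2}$ and is therefore also self-adjoint and idempotent in $\mathcal H_\Sigma$. Hence
\[
\langle\nabla F, P_{d_{\mathrm{eff}}}\nabla F\rangle_{\mathcal H_\Sigma}=\|P_{d_{\mathrm{eff}}}\nabla F\|_{\mathcal H_\Sigma}^2,
\]
which is exactly the linear term in the statement. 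For the quadratic term I use the second-moment bound of Lemma~\ref{lem:mc-second-moment}:
\[
\tfrac L2\eta_t^2\,\mathbb E\|\widehat g\|^2_{\mathcal H_\Sigma}\le \frac{\|g\|_\infty}{\sigma^2}\cdot\frac{\|g\|_\infty^2 d_{\mathrm{eff}}}{B\sigma^2}\,\eta_t^2=\frac{\|g\|_\infty^3 d_{\mathrm{eff}}}{B\sigma^4}\eta_t^2 .
\]

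Step 3 is the identification $d_{\mathrm{eff}}\le m$ (or $d_{\mathrm{eff}}=m$). The remark on covariance truncation identifies the effective dimension with the discretization level $m$, and the discretized perturbations live in $\mathbb R^m$, so at most $m$ eigenmodes are active. Replacing $d_{\mathrm{eff}}$ by $m$ therefore only weakens the bound and yields the stated constant.

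I expect the main obstacle to be the bookkeeping between the discretized estimator and the continuous objects rather than any inequality. Specifically, I must ensure that the discrete covariance $K_\lambda$ is consistent with $\Sigma$ restricted to $\mathcal H_{d_{\mathrm{eff}}}$, so that unbiasedness holds and $\widehat g\in\mathcal H_\Sigma$. I must also make sure the Lipschitz step is applied only along Cameron--Martin increments. I would handle both by working throughout in the eigen-coordinates $\xi_k$, where all norms are explicit. Measurability of the conditional expectation is then routine, since $g$ is bounded.
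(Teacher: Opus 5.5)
Your three steps match the paper's proof. The ascent inequality in $\mathcal H_\Sigma$ with constant $\|g\|_\infty/\sigma^2$ you derive from Lemma~\ref{lem:lipschitz-from-second-derivative} (the paper only asserts it). The linear term comes from conditional unbiasedness plus $\mathcal H_\Sigma$-orthogonality of $P_{d_{\mathrm{eff}}}$, and the quadratic term from Lemma~\ref{lem:mc-second-moment}; write $F,g$ for $F_{N_{\mathrm{pow}}},g_{N_{\mathrm{pow}}}$. The genuine gap is that last input, which you import unchecked and which is false for $B>1$. Put $G_t:=\nabla_{\mathcal H_\Sigma}F(\mu_t)$ and $Y_b:=g(\mu_t+\varepsilon^{(b)})P_{d_{\mathrm{eff}}}\varepsilon^{(b)}$. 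The $Y_b$ are i.i.d.\ but \emph{not centred}: $\mathbb E Y_b=\sigma^2P_{d_{\mathrm{eff}}}G_t$ is exactly what you are estimating. Hence
\[
\mathbb E\bigl[\|\widehat g(\mu_t)\|_{\mathcal H_\Sigma}^2\,\big|\,\mu_t\bigr]
=\Bigl(1-\frac1B\Bigr)\|P_{d_{\mathrm{eff}}}G_t\|_{\mathcal H_\Sigma}^2+\frac{\mathbb E\|Y_1\|_{\mathcal H_\Sigma}^2}{B\sigma^4}
\le\|P_{d_{\mathrm{eff}}}G_t\|_{\mathcal H_\Sigma}^2+\frac{\|g\|_\infty^2\,d_{\mathrm{eff}}}{B\sigma^2}.
\]
Independence makes only the variance decay like $1/B$. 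The squared mean does not decay, and the ``independence and Jensen'' step behind Lemma~\ref{lem:mc-second-moment} silently drops it. The real obstacle is this term, not the discretization bookkeeping you anticipated.

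This is not cosmetic: the inequality as stated is false. Since $0\le F\le\|g\|_\infty$, the left-hand side is at most $\|g\|_\infty$. Choose $\eta_t=B\sigma^4\|P_{d_{\mathrm{eff}}}G_t\|_{\mathcal H_\Sigma}^2/(2\|g\|_\infty^3m)$. Then the right-hand side equals $F(\mu_t)+B\sigma^4\|P_{d_{\mathrm{eff}}}G_t\|_{\mathcal H_\Sigma}^4/(4\|g\|_\infty^3m)$, unbounded in $B$ once $P_{d_{\mathrm{eff}}}G_t\neq0$. For example, $S=\{x:\langle x,e_1\rangle_{\mathcal H}>0\}$, $f\equiv0$ on $S$, $\mu_t=0$ gives $G_t=(2\pi)^{-1/2}\sigma^{-1}\sqrt{\lambda_1}\,e_1$. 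Intuitively, as $B\to\infty$ the claim would give deterministic ascent a gain of $\eta_t\|P_{d_{\mathrm{eff}}}G_t\|^2$ at every step size. With the correct second moment, your Steps 1--2 prove
\[
\mathbb E[F(\mu_{t+1})\mid\mu_t]\ge F(\mu_t)+\eta_t\Bigl(1-\frac{\|g\|_\infty}{\sigma^2}\,\eta_t\Bigr)\|P_{d_{\mathrm{eff}}}G_t\|_{\mathcal H_\Sigma}^2-\frac{\|g\|_\infty^3\,d_{\mathrm{eff}}}{B\sigma^4}\,\eta_t^2 .
\]
So you need a step-size restriction such as $\eta_t\le\sigma^2/(2\|g\|_\infty)$. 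Under it the lemma holds with $\eta_t/2$ replacing $\eta_t$ in the linear term, and with $d_{\mathrm{eff}}\le m$ as in your Step 3. That restriction must then also be checked for the step size chosen in Theorem~\ref{thm:complexity}.
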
%
The proof can be found in the supplementary materials.
Finally, we conclude with an iteration-complexity bound for the covariance-truncated stochastic dynamics.
The result gives a zeroth-order convergence rate, expressed in terms of $\|g_{N_{\mathrm{pow}}}\|_\infty$, $\sigma$, and $d_{\mathrm{eff}}$.

\begin{theorem}[Iteration complexity in a Hilbert space]
\label{thm:complexity}
Let \(\{\mu_t\}_{t\ge 0}\subset\mathcal H\) be generated by the projected update rule with a constant stepsize \(\eta>0\). Then the iterates satisfy
\begin{align}
\min_{0\le t\le T-1}
\mathbb E\!\left[
\big\|
P_{d_{\mathrm{eff}}}\nabla_{\mathcal H_\Sigma}
F_{N_{\mathrm{pow}}}(\mu_t)
\big\|_{\mathcal H_\Sigma}
\right]
\le
\varepsilon
\end{align}
provided that
$T
\ge
\frac{
4\,
\|g_{N_{\mathrm{pow}}}\|_\infty^{4}\,
d_{\mathrm{eff}}
}{
B\,\sigma^{4}\varepsilon^{4}
}$.
\end{theorem}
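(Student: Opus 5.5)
The argument follows the standard nonconvex SGD telescoping template, with Lemma~\ref{lem:onestep-stochastic} as the engine. Write $G_t:=\|P_{d_{\mathrm{eff}}}\nabla_{\mathcal H_\Sigma}F_{N_{\mathrm{pow}}}(\mu_t)\|_{\mathcal H_\Sigma}$ and use the constant stepsize $\eta$.

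\textbf{Step 1: telescoping.} Take total expectation in Lemma~\ref{lem:onestep-stochastic} and sum over $t=0,\dots,T-1$. This gives
\begin{align}
\eta\sum_{t=0}^{T-1}\mathbb E[G_t^2]\le \mathbb E[F_{N_{\mathrm{pow}}}(\mu_T)]-F_{N_{\mathrm{pow}}}(\mu_0)+T\eta^2 C,
\end{align}
where $C$ is the variance constant. The left-hand side of the difference is controlled because $0\le g_{N_{\mathrm{pow}}}\le\|g_{N_{\mathrm{pow}}}\|_\infty$, so $0\le F_{N_{\mathrm{pow}}}\le \|g_{N_{\mathrm{pow}}}\|_\infty$. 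Hence $F(\mu_T)-F(\mu_0)\le \|g_{N_{\mathrm{pow}}}\|_\infty=:G_\infty$, and
\begin{align}
\min_t \mathbb E[G_t^2]\le \frac{1}{T}\sum_t\mathbb E[G_t^2]\le \frac{G_\infty}{T\eta}+\eta C.
\end{align}

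\textbf{Step 2: choice of $C$.} The constant should be stated in terms of $d_{\mathrm{eff}}$ rather than $m$. I would rederive the quadratic term by combining the smoothness constant $L=2G_\infty/\sigma^2$ from Lemma~\ref{lem:lipschitz-from-second-derivative}, through the usual descent-lemma expansion $\tfrac{L}{2}\eta^2\mathbb E\|\widehat g\|^2$, with the second-moment bound of Lemma~\ref{lem:mc-second-moment}. This gives $C=G_\infty^3 d_{\mathrm{eff}}/(B\sigma^4)$, identifying $m$ with $d_{\mathrm{eff}}$ as the earlier remark suggests.

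\textbf{Step 3: optimizing the stepsize.} Minimizing $G_\infty/(T\eta)+\eta C$ gives $\eta=\sqrt{G_\infty/(TC)}$ and the bound $2\sqrt{G_\infty C/T}=2G_\infty^2\sqrt{d_{\mathrm{eff}}}/(\sigma^2\sqrt{BT})$. Jensen's inequality gives $\mathbb E[G_t]\le\sqrt{\mathbb E[G_t^2]}$, so
\begin{align}
\min_t\mathbb E[G_t]\le \Bigl(\frac{4G_\infty^4 d_{\mathrm{eff}}}{B\sigma^4 T}\Bigr)^{1/4}.
\end{align}
This is at most $\varepsilon$ exactly when $T\ge 4G_\infty^4 d_{\mathrm{eff}}/(B\sigma^4\varepsilon^4)$, which is the claimed threshold. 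The theorem's ``constant stepsize $\eta>0$'' must therefore be read as this tuned choice, or any $\eta$ of that order adjusted by constants.

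\textbf{Main obstacle.} The delicate point is the bookkeeping of constants so that the final threshold comes out exactly as stated. Three sources of slack need care. First, the range bound on $F_{N_{\mathrm{pow}}}$ is what yields the single factor $G_\infty$. Second, the factor $\tfrac12$ in the descent lemma against the $2$ in $L$ must cancel. Third, $m$ must be matched with $d_{\mathrm{eff}}$. If the constants do not align exactly, I would absorb the discrepancy by choosing $\eta$ slightly smaller, which only affects the absolute constant $4$.
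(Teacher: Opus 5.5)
Your proposal is correct and follows the paper's proof essentially step for step. You telescope the one-step ascent inequality of Lemma~\ref{lem:onestep-stochastic} with $C=\|g_{N_{\mathrm{pow}}}\|_\infty^3 d_{\mathrm{eff}}/(B\sigma^4)$ (the appendix version of that lemma indeed uses $d_{\mathrm{eff}}$ rather than $m$), bound $F_{N_{\mathrm{pow}}}(\mu_T)-F_{N_{\mathrm{pow}}}(\mu_0)$ by $\|g_{N_{\mathrm{pow}}}\|_\infty$, take $\eta=\sqrt{B\sigma^4/(\|g_{N_{\mathrm{pow}}}\|_\infty^2 d_{\mathrm{eff}}T)}$, and finish with Jensen. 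Your constants reproduce the stated threshold exactly, so the fallback adjustment of $\eta$ is never needed. Your remark that the bound holds only for this $T$-dependent tuned stepsize, not for an arbitrary constant $\eta>0$, is accurate and applies equally to the paper's argument.
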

The proof is provided in the supplementary materials.
Here, \(d_{\mathrm{eff}}\) denotes the effective dimension induced by the truncated covariance eigenbasis.
The iteration complexity scales linearly with \(d_{\mathrm{eff}}\), yielding an overall complexity of \(\mathcal O(d_{\mathrm{eff}}\varepsilon^{-4})\) for reaching \(\varepsilon\)-stationarity.
To our knowledge, this is the first global convergence-rate result for trajectory optimization formulated directly in function space.

\begin{remark}[Comparison with Existing Trajectory Optimization Methods]
The proposed method is related to sampling-based planners such as RRT*, which are asymptotically optimal but often require additional smoothing for discrete paths \cite{karaman2011rrtstar}. In contrast, our method directly optimizes smooth trajectories through the Cameron--Martin geometry.

Unlike CHOMP and TrajOpt, which rely on analytic gradients and discretization-dependent regularization \cite{zucker2013chomp,schulman2014trajopt}, the proposed approach does not require differentiable costs and remains applicable under nondifferentiable collision checking. 

From a zeroth-order optimization viewpoint, the resulting
$\mathcal O(\varepsilon^{-4})$ complexity matches standard Gaussian smoothing rates while explicitly revealing how the covariance operator controls exploration and trajectory regularity \cite{nesterov2017random,gao2022generalizing}.
\end{remark}

\section{Experiments}\label{sec:experiments}
This section evaluates the proposed natural functional gradient (NFG) trajectory optimizer across progressively more constrained settings. 
We first study optimization behavior in a synthetic narrow-passage environment (\S\ref{susbsec:synthetic}), where the feasible region is fragmented and highly sensitive to trajectory perturbations. 
We then evaluate manipulation performance in cluttered cabinet insertion tasks (\S\ref{susbsec:simulation}), including unified execution validation under shared simulation conditions. 
Finally, we demonstrate real-world execution on a physical robotic platform (\S\ref{susbsec:real}) to illustrate the practical feasibility and smooth execution behavior of the proposed function-space trajectory optimization framework.

\subsection{Synthetic Narrow-Passage Experiment}\label{susbsec:synthetic}
We first evaluate the proposed method in a synthetic narrow-passage trajectory optimization problem designed to study optimization behavior under fragmented feasible regions. 
The environment consists of multiple box-shaped obstacles arranged along the temporal axis, producing a narrow collision-free corridor that occupies only a small subset of the trajectory space. 
Because small trajectory perturbations can easily induce collision, the environment provides a controlled benchmark for evaluating optimization behavior under constrained geometry.

The task is formulated as a one-dimensional trajectory optimization problem over a fixed $1\,\mathrm{s}$ horizon discretized at $100\,\mathrm{Hz}$. 
The objective combines collision avoidance with a trajectory regularity bonus based on the average absolute jerk of the trajectory. 
Additional details and environment specifications are provided in the supplementary material.

For the proposed method, trajectory perturbations are generated using a squared exponential (SE) kernel, defined as $k_{\mathrm{SE}}(x,x') = g^{2}\exp\big(-\tfrac{1}{2l^{2}}|x-x'|^{2}\big)$, with variance $g^{2}=0.29$ and length-scale $l=0.22$. 
The exponential smoothing parameter is set to $N_{\mathrm{pow}}=100$, and each optimization step uses $100$ Monte-Carlo trajectory perturbations. 
We compare against representative trajectory optimization baselines including CHOMP~\cite{zucker2013chomp}, STOMP~\cite{kalakrishnan2011stomp}, and MPPI~\cite{williams2015model}. 
All methods are initialized using the same linear interpolation trajectory between the initial and final states and evaluated under the same trajectory horizon and temporal discretization.
NFG, CHOMP \cite{zucker2013chomp}, and STOMP \cite{kalakrishnan2011stomp} are executed for $100$ iterations, while MPPI~\cite{williams2015model} is evaluated using its standard online planning configuration. 
All experiments are repeated over five random seeds while using the same obstacle configuration and optimization setup across seeds.
Performance is evaluated using success rate, runtime, path length, and average jerk. 
A trajectory is counted as successful if it remains collision-free throughout the trajectory horizon. 
Average jerk is computed from successful trajectories only.

Table~\ref{tab:synthetic} summarizes the quantitative results. 
The proposed method achieves the highest success rate among the baselines, consistently discovering feasible trajectories despite the fragmented feasible region. 
In contrast, local trajectory optimization methods, e.g., CHOMP~\cite{zucker2013chomp} and STOMP~\cite{kalakrishnan2011stomp}, frequently fail to enter the narrow feasible corridor, while MPPI~\cite{williams2015model} occasionally discovers feasible solutions but produces trajectories with substantially larger jerk. 
Although some baselines achieve lower jerk values in individual successful trials, the proposed method maintains a more consistent balance between feasibility and trajectory regularity across repeated runs.

\subsection{Simulation Experiments in Cabinet Environments}\label{susbsec:simulation}
We next evaluate the proposed method on robotic manipulation tasks in cluttered cabinet environments with progressively constrained geometry. 
The experiments study trajectory generation under narrow collision-free clearances, where feasible motions require stable trajectory deformation near the insertion region. 
We first describe the cabinet environments and optimization setup (\S\ref{subsubsec:setup}), followed by the execution-level evaluation protocol (\S\ref{subsubsec:evaluation}) and quantitative comparisons against representative planning and trajectory optimization baselines (\S\ref{subsubsec:results}).

\begin{table}[t]
\centering
\footnotesize
\caption{Comparison with the baseline planners in the synthetic box-avoidance scenario.}
\label{tab:synthetic}
\setlength{\tabcolsep}{3.5pt}
\renewcommand{\arraystretch}{1.05}
\resizebox{\columnwidth}{!}{
\begin{tabular}{l S S S S}
\toprule
\textbf{Method} &
\textbf{Success Rate $\uparrow$} &
\textbf{Time $\downarrow$} &
\textbf{Path Length $\downarrow$} &
\textbf{Avg Jerk $\downarrow$} \\
&
\textbf{(\%)} &
\textbf{($\mathrm{s}$)} &
\textbf{($\mathrm{m})$} &
\textbf{($\mathrm{m/s^3}$)} \\
\midrule
\textbf{NFG (Ours)}
& \textbf{{100.0}}
& {0.73 $\pm$ 0.03}
& \textbf{{10.06 $\pm$ 1.25}}
& {90.91 $\pm$ 33.43} \\
STOMP \cite{kalakrishnan2011stomp}
& {20.0}
& {0.63 $\pm$ 0.01}
& {10.62 $\pm$ 0.00}
& \textbf{{34.67 $\pm$ 0.00}} \\
CHOMP \cite{zucker2013chomp}
& {20.0}
& \textbf{{0.02 $\pm$ 0.00}}
& {10.82 $\pm$ 0.00}
& {43.52 $\pm$ 0.00} \\
MPPI \cite{williams2015model}
& {60.0}
& {0.37 $\pm$ 0.00}
& {7.68 $\pm$ 0.43}
& {621.22 $\pm$ 86.07} \\
\bottomrule
\end{tabular}
}
\vspace{-2mm}
\end{table}

\subsubsection{Environment Setup}\label{subsubsec:setup}
We evaluate the proposed method using a Franka Research 3 manipulator in a cabinet insertion task. 
The robot inserts a cylindrical object into a cabinet while avoiding collisions with surrounding obstacles and cabinet walls. 
Because the insertion region contains limited geometric clearance, small trajectory deviations near the cabinet opening can induce collision during execution. 
To study performance under progressively constrained geometry, we consider four environments with increasing levels of difficulty: free space, fully open, quarter-closed, and half-closed, as illustrated in Fig.~\ref{fig:cabinet}. 
The free space environment is included as a sanity-check setting without cabinet constraints, while the remaining environments introduce progressively narrower cabinet openings around the insertion region. 
All environments share the same initial and final joint configurations, allowing the effect of geometric constraints on trajectory generation to be evaluated consistently across environments.

The trajectory objective combines collision avoidance with joint-space motion regularization. 
Collision costs are computed from penetration depth and contact information obtained during simulation, while an additional path-length regularization term penalizes excessive joint-space motion along the trajectory horizon. 
This formulation encourages trajectories that avoid collision while limiting unnecessarily aggressive configuration changes during execution. 
All methods are initialized using the same joint-space linear interpolation trajectory between the start and goal configurations and evaluated under the same simulation and execution settings.

\begin{figure}[t]
    \centering
    \includegraphics[width=0.99\linewidth]{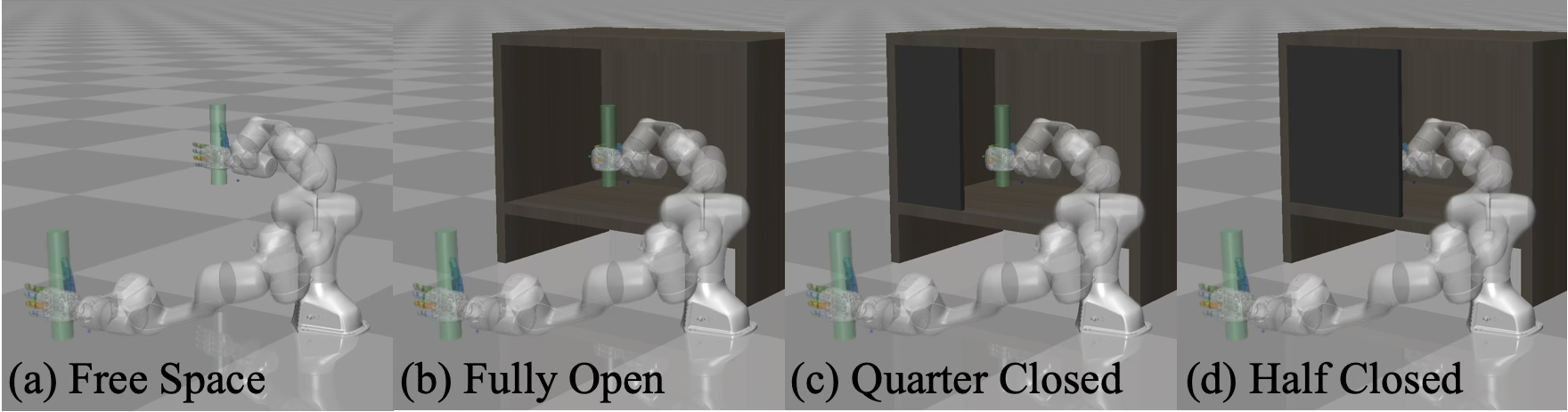}
    \vspace{-3mm}
    \caption{Cabinet insertion environments with increasing geometric constraints: (a) free space, (b) fully open, (c) quarter-closed, and (d) half-closed.}
    \label{fig:cabinet}
    \vspace{-10pt}
\end{figure}

\begin{figure*}[t]
    \centering
    {\includegraphics[width=\linewidth]{figs/joint_traj.png}}
    \caption{
    Joint trajectories for the fully open cabinet environment. 
    The top and bottom rows show the joint positions $\mathrm{(rad)}$ and joint velocities $\mathrm{(rad/s)}$ of the Franka Research 3 manipulator, respectively. 
    STOMP~\cite{kalakrishnan2011stomp} fails to generate a feasible trajectory.
}
 \vspace{-10pt}
 \label{fig:sim_joint_plot}
\end{figure*}

Following the synthetic narrow-passage experiment, the proposed method uses a squared exponential (SE) kernel to generate smooth trajectory perturbations in function space. 
The kernel variance is set to $g^{2}=1.0$, and the length-scale is chosen as half of the trajectory horizon. 
Trajectory optimization is performed over a fixed $5\,\mathrm{s}$ horizon discretized at $100\,\mathrm{Hz}$ using $100$ sampled trajectories per optimization step with $N_{\mathrm{pow}}=20$. 
Each optimization run is executed for at most $30$ iterations or $300\,\mathrm{s}$ and terminates early once a collision-free trajectory is found.

We compare the proposed method against representative motion planning and trajectory optimization baselines, including BiTRRT~\cite{jaillet2010sampling}, CHOMP~\cite{zucker2013chomp}, STOMP~\cite{kalakrishnan2011stomp}, and cuRobo~\cite{sundaralingam2023curobo}. 
These methods cover sampling-based planning, gradient-based trajectory optimization, stochastic trajectory optimization, and GPU-accelerated motion generation, respectively. 
All baselines are evaluated using the same start and goal configurations, trajectory discretization, and execution settings. 
For each cabinet environment, all methods are evaluated over five random seeds under the same environment geometry and task configuration. 
Additional implementation details and evaluation settings are provided in the supplementary material.

\subsubsection{Unified Execution Evaluation}\label{subsubsec:evaluation}
The compared planners employ different internal trajectory representations, optimization strategies, and collision models. 
Some methods generate sparse geometric waypoints~\cite{jaillet2010sampling}, while others directly optimize discretized joint trajectories~\cite{zucker2013chomp,kalakrishnan2011stomp} or rely on planner-specific collision approximations during optimization~\cite{sundaralingam2023curobo}.
As a result, the native outputs of different planners are not directly comparable under a shared execution setting without additional trajectory processing and validation.

To enable a consistent comparison, all generated trajectories are converted into a unified time-parameterized joint-space trajectory and resampled at a fixed execution frequency of $100\,\mathrm{Hz}$. 
The resampled trajectories are then evaluated in the MuJoCo simulator using the same collision-checking and execution settings across all methods. 
A rollout is considered successful if the resampled execution trajectory remains collision-free throughout the full trajectory horizon.

In addition to success rate, we report total optimization time and average jerk computed from the executed trajectories. 
Average jerk is reported only for successful trajectories and is used to evaluate trajectory regularity during execution.

\begin{table}[t]
\centering
\scriptsize
\renewcommand{\arraystretch}{1.25}
\caption{Comparison with baseline planners across cabinet environments.}
\label{tab:cabinet}
\vspace{-10pt}
\begin{subtable}{\columnwidth}
\centering
\setlength{\tabcolsep}{7pt}
\caption{\textbf{Success Rate ($\uparrow$, $\mathrm{\%}$)}}
\label{tab:cabinet_success}
\vspace{-2pt}
\begin{tabularx}{\columnwidth}{@{} l S S S S @{}}
\toprule
\textbf{Method} & {\textbf{Free Space}} & {\textbf{Fully Open}} & {\textbf{Quarter-Closed}} & {\textbf{Half-Closed}} \\
\midrule
\textbf{NFG (Ours)}                   & \textbf{{100.0}} & \textbf{{100.0}} & \textbf{{100.0}} & \textbf{{100.0}} \\
BiTRRT \cite{jaillet2010sampling}     & \textbf{{100.0}} & {40.0}   & {13.3}    & {16.7}   \\
STOMP  \cite{kalakrishnan2011stomp}   & \textbf{{100.0}} & {0.0}    & {0.0}    & {0.0}    \\
CHOMP  \cite{zucker2013chomp}         & \textbf{{100.0}} & \textbf{{100.0}} & {0.0}    & {0.0}    \\
cuRobo \cite{sundaralingam2023curobo} & \textbf{{100.0}} & \textbf{{100.0}} & \textbf{{100.0}} & \textbf{{100.0}}  \\
\bottomrule
\end{tabularx}
\end{subtable}

\vspace{5pt}

\begin{subtable}{\columnwidth}
\centering
\setlength{\tabcolsep}{5.0pt}
\caption{\textbf{Optimization Time ($\downarrow$, $\mathrm{s}$)}}
\vspace{-2pt}
\label{tab:cabinet_time}
\begin{tabularx}{\columnwidth}{@{} l S S S S @{}}
\toprule
\textbf{NFG (Ours)}                   & {0.31 $\pm$ 0.01}  & {4.10 $\pm$ 2.89}      & {103.68 $\pm$ 75.29}  & {166.10 $\pm$ 91.63} \\
BiTRRT \cite{jaillet2010sampling}     & {0.19 $\pm$ 0.02}  & \textbf{{0.30 $\pm$ 0.04}}      & \textbf{{0.51 $\pm$ 0.29}}  & \textbf{{0.82 $\pm$ 0.18}}  \\
STOMP  \cite{kalakrishnan2011stomp}   & \textbf{{0.00 $\pm$ 0.00}}  & {-}      & {-}  & {-}  \\
CHOMP  \cite{zucker2013chomp}         & {0.13 $\pm$ 0.01}  & {0.31 $\pm$ 0.10}      & {-}  & {-}  \\
cuRobo \cite{sundaralingam2023curobo} & {0.01 $\pm$ 0.00}  & {0.020 $\pm$ 0.01}  & {14.24 $\pm$ 0.17}  & {128.87 $\pm$ 0.18}  \\
\bottomrule
\end{tabularx}
\end{subtable}

\vspace{5pt}

\begin{subtable}{\columnwidth}
\centering
\setlength{\tabcolsep}{2.1pt}
\caption{\textbf{Average Jerk ($\downarrow$, $\mathrm{rad/s^3}$)}}
\label{tab:cabinet_jerk}
\vspace{-2pt}
\begin{tabularx}{\columnwidth}{@{} l S S S S @{}}
\toprule
\textbf{NFG (Ours)}                   & {461.61 $\pm$ 178.53}  & {461.61 $\pm$ 178.52}      & \textbf{{694.06 $\pm$ 275.11}}  & \textbf{{840.40 $\pm$ 370.27}} \\
STOMP  \cite{kalakrishnan2011stomp}   & \textbf{{0.82 $\pm$ 0.18}}  & {-}      & {-}  & {-}  \\
CHOMP  \cite{zucker2013chomp}         & {57.62 $\pm$ 0.05}  & \textbf{{109.21 $\pm$ 6.39}}      & {-}  & {-}  \\
cuRobo \cite{sundaralingam2023curobo} & {1141.31 $\pm$ 39.28}  & {859.73 $\pm$ 11.19}      & {1219.58 $\pm$ 54.75}   & {1177.27 $\pm$ 79.62}  \\
\bottomrule
\end{tabularx}
\end{subtable}

\vspace{-15pt}
\end{table}

\subsubsection{Quantitative Results}\label{subsubsec:results}
Table~\ref{tab:cabinet} summarizes the quantitative results across the cabinet environments. 
We report success rate, total optimization time, and average jerk measured from the executed trajectories after trajectory resampling and simulation-based validation. 
A trajectory is counted as successful if the resampled execution trajectory remains collision-free throughout the full trajectory horizon. 
Average jerk is reported only for successful trajectories and is used to evaluate trajectory regularity during execution.

In the free space and fully open cabinet environments, multiple planners, including the proposed method, BiTRRT~\cite{jaillet2010sampling}, and cuRobo~\cite{sundaralingam2023curobo}, are able to generate collision-free trajectories under the shared execution setting. 
Because the geometric constraints are relatively mild in these scenarios, feasible motions can often be discovered without substantial trajectory deformation. 
As the cabinet opening becomes smaller, however, the feasible region becomes increasingly narrow, making feasible trajectory generation significantly more difficult in the quarter-closed and half-closed environments.

Under these more constrained settings, the proposed method maintains successful trajectory generation across all cabinet configurations while producing lower average jerk than cuRobo~\cite{sundaralingam2023curobo}. 
The joint trajectories in Fig.~\ref{fig:sim_joint_plot} exhibit gradual configuration changes throughout the insertion motion without abrupt transitions near the constrained region. 
In contrast, sampling-based and optimization-based baselines show reduced success rates as the cabinet opening becomes narrower.

cuRobo~\cite{sundaralingam2023curobo} also achieves high feasibility across the evaluated cabinet environments while demonstrating substantially faster optimization times. 
However, the resulting trajectories exhibit larger execution jerk under the shared evaluation setting. 
Overall, the results suggest that the proposed method provides stable trajectory generation in cluttered manipulation environments while maintaining smooth trajectory evolution near narrow geometric constraints.

\begin{figure*}[t]
    \centering
    {\includegraphics[width=\linewidth]{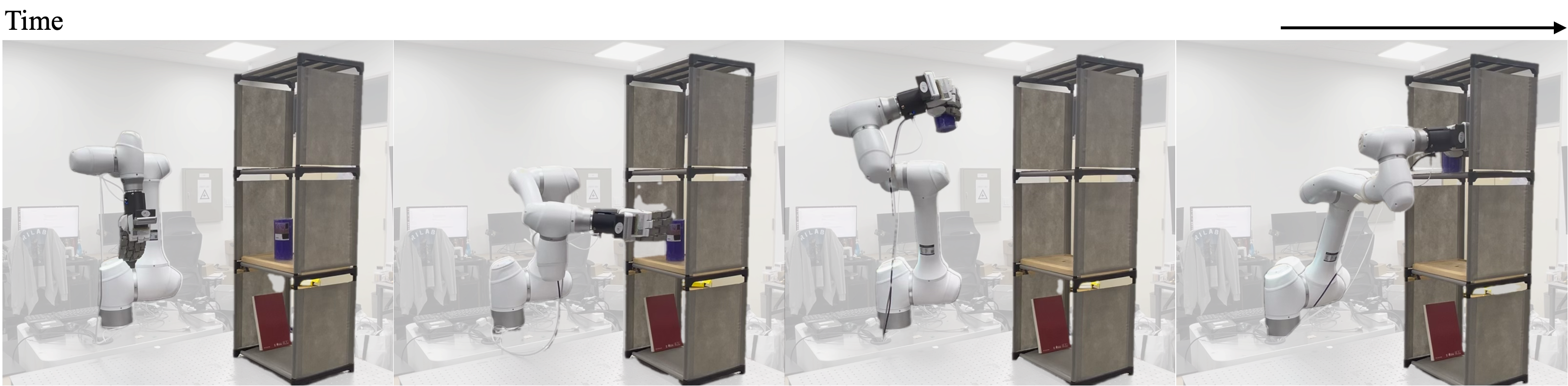}}
    \caption{Real-world robotic manipulation in a cluttered environment with a narrow box constraint.}
 \vspace{-10pt}
 \label{fig:real}
\end{figure*}

\subsection{Real-World Robotic Demonstration}\label{susbsec:real}
We further evaluate the proposed method on a real-world manipulation task using a Doosan Robotics A0912 manipulator in a cluttered bookshelf environment. 
The task requires the robot to transfer a cylindrical object from a second-level shelf to a target placement region located inside a narrow compartment on the third level of the bookshelf. 
Because the target region is partially enclosed by surrounding shelf structures, feasible motions require accurate trajectory generation near constrained workspace boundaries.

The initial and final robot configurations are determined from inverse kinematics solutions corresponding to the grasp pose and target placement pose of the cylindrical object, respectively. 
Trajectory optimization is initialized using a joint-space linear interpolation trajectory between the initial and final configurations. 
The optimized trajectories are generated over a fixed $5\,\mathrm{s}$ horizon discretized at $100\,\mathrm{Hz}$ and executed directly on the robot using the same temporal discretization.

Figure~\ref{fig:real} shows representative snapshots from the real-world experiment. 
The robot successfully completes the transfer and insertion task while maintaining stable motion near the constrained shelf geometry. 
These results demonstrate that the proposed method can generate feasible and smooth motions in cluttered real-world manipulation environments.

\section{Discussion and Limitations}
This work demonstrates that natural functional gradient optimization provides a practical framework for trajectory generation in constrained robotic manipulation environments. 
By operating directly in function space and applying Gaussian smoothing, the proposed method generates smooth trajectory perturbations while remaining compatible with zeroth-order optimization and black-box trajectory evaluation. 
The resulting framework allows feasibility and trajectory regularity to be handled within a unified optimization procedure, particularly in environments with narrow geometric clearances and fragmented feasible regions.

Table~\ref{tab:method_comparison} summarizes conceptual differences between the proposed method and representative planning and trajectory optimization approaches. 
Unlike methods that rely on geometric primitives or approximated collision representations, the proposed framework directly incorporates simulator-based contact evaluation during optimization. 
This allows trajectory updates to be computed using the same simulation environment employed during execution without requiring additional analytical approximations of collision geometry. 
The formulation also operates as a unified framework that jointly handles smooth trajectory generation and collision-aware optimization while remaining compatible with parallel rollout-based evaluation.

The primary limitation of the proposed approach is the additional computational cost introduced by Monte-Carlo estimation of the natural functional gradient. 
Although the optimization procedure is naturally parallelizable, the required number of sampled trajectory perturbations can still become expensive in high-dimensional problems or time-critical settings. 
Improving sample efficiency and exploiting large-scale parallel computation, similar to recent GPU-accelerated motion generation systems \cite{sundaralingam2023curobo}, remain important directions.

The performance of the method additionally depends on the covariance operator used to generate trajectory perturbations. 
Although fixed squared exponential kernels are effective across the evaluated tasks, performance can vary depending on the smoothing scale and covariance structure \cite{nesterov2017random,gao2022generalizing}. 
Designing adaptive or task-dependent covariance operators that better reflect environmental geometry, contact structure, or task constraints remains an important open problem.

\begin{table}[t]
\centering
\scriptsize
\setlength{\tabcolsep}{4pt}
\renewcommand{\arraystretch}{1.15}
\caption{Comparison of motion planning methods}
\label{tab:method_comparison}
\begin{tabular}{lccccc}
\toprule
\textbf{Method}
& \textbf{Smooth} 
& \textbf{No Primitive} 
& \textbf{Simulator} 
& \textbf{Single} 
& \textbf{Parallel} \\
& \textbf{Trajectory} 
& \textbf{Approx.} 
& \textbf{Contact Solver} 
& \textbf{Package} 
& \textbf{-izable} \\
\midrule
Bi-RRT \cite{jaillet2010sampling} 
&  & \checkmark &  &  & \checkmark \\
STOMP \cite{kalakrishnan2011stomp}
& \checkmark &  &  & \checkmark & \checkmark \\
CHOMP \cite{zucker2013chomp}
& \checkmark &  &  & \checkmark &  \\
MPPI \cite{williams2015model}
& \checkmark & \checkmark & \checkmark &  & \checkmark \\
CuRobo \cite{sundaralingam2023curobo}
& \checkmark &  &  & \checkmark & \checkmark \\
\textbf{Ours}
& \checkmark & \checkmark & \checkmark & \checkmark & \checkmark \\
\bottomrule
\end{tabular}
\vspace{-10pt}
\end{table}

\vspace{-3pt}
\section{Conclusion}
This paper presented a trajectory optimization framework that performs geometry-aware updates directly in function space using natural functional gradients induced by Gaussian smoothing. 
The proposed method enables zeroth-order trajectory optimization with smooth function-space perturbations without relying on analytic gradients or discretization-dependent smoothness penalties. 
We additionally analyzed the Gaussian-smoothed objective directly in Hilbert space and established convergence guarantees for the resulting optimization procedure.
Experiments on constrained robotic manipulation tasks demonstrated that the proposed method improves success rate and produces smoother execution trajectories than representative planning and trajectory optimization baselines in cluttered environments with narrow geometric clearances. 
These results suggest that function-space natural functional gradient optimization provides a practical framework for trajectory generation when feasible regions are sparse and reliable gradient information is difficult to obtain.

\vspace{-3pt}
\section*{Acknowledgement}
We thank RLWRLD for generously supporting our experiments by providing access to the Franka Research 3 robot platform and the Inspire Hand system. 
\vspace{-5pt}

\bibliographystyle{ieeetr}
\bibliography{references}

\clearpage
\onecolumn
\appendices
\section*{Supplementary Appendix}
\addcontentsline{toc}{section}{Supplementary Appendix}

\makeatletter
\@addtoreset{equation}{section}
\@addtoreset{figure}{section}
\@addtoreset{table}{section}
\@addtoreset{lemma}{section}
\@addtoreset{theorem}{section}
\@addtoreset{assumption}{section}
\makeatother
\renewcommand{\thesection}{\Alph{section}}
\renewcommand{\theequation}{\Alph{section}.\arabic{equation}}
\renewcommand{\thefigure}{\Alph{section}.\arabic{figure}}
\renewcommand{\thetable}{\Alph{section}.\arabic{table}}
\renewcommand{\thelemma}{\Alph{section}.\arabic{lemma}}
\renewcommand{\thetheorem}{\Alph{section}.\arabic{theorem}}
\renewcommand{\theassumption}{\Alph{section}.\arabic{assumption}}
\setcounter{section}{0}
\setcounter{equation}{0}
\setcounter{figure}{0}
\setcounter{table}{0}
\setcounter{lemma}{0}
\setcounter{theorem}{0}
\setcounter{assumption}{0}

\section{Proof}
\begin{lemma}[Directional derivative identity]
\label{app:lem:cm_directional}
Let \(\varepsilon\sim\mathcal N(0,\sigma^2\Sigma)\).
Then, for any \(\mu\in\mathcal H\) and any
\(h\in\mathcal H_\Sigma=\operatorname{Range}(\Sigma^{1/2})\),
the directional derivative of \(F_{N_{\mathrm{pow}}}\) exists and satisfies
\begin{equation}
D_hF_{N_{\mathrm{pow}}}(\mu)
=
\mathbb E\!\left[
g_{N_{\mathrm{pow}}}(\mu+\varepsilon)
\sum_{k:\lambda_k>0}
\frac{
\langle h,e_k\rangle_{\mathcal H}
}{
\sigma\sqrt{\lambda_k}
}
\xi_k
\right],
\label{app:eq:Dh_identity}
\end{equation}
where
\begin{align}
\varepsilon
=
\sigma
\sum_{k:\lambda_k>0}
\sqrt{\lambda_k}\,\xi_k e_k,
\qquad
\xi_k\overset{i.i.d.}{\sim}\mathcal N(0,1),
\end{align}
is the Karhunen--Lo\`eve representation associated with the eigendecomposition \(\Sigma e_k=\lambda_k e_k\).
\end{lemma}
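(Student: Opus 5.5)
The strategy is to move the translation onto the Gaussian measure via the Cameron--Martin theorem and then differentiate the resulting Radon--Nikodym density under the expectation. I assume, as the later lemmas do, that \(g_{N_{\mathrm{pow}}}\) is bounded and measurable. Write \(\gamma:=\mathcal N(0,\sigma^2\Sigma)\) for the law of \(\varepsilon\). Its Cameron--Martin space is \(\mathrm{Range}(\sigma\Sigma^{1/2})=\mathcal H_\Sigma\) as a set, with norm \(\sigma^{-1}\|\cdot\|_{\mathcal H_\Sigma}\).

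First, I rewrite \(F_{N_{\mathrm{pow}}}(\mu+th)=\int g_{N_{\mathrm{pow}}}(\mu+x)\,\gamma_{th}(dx)\), where \(\gamma_{th}\) is the law of \(\varepsilon+th\). By the Cameron--Martin theorem (Bogachev, Ch.~2), \(\gamma_{th}\ll\gamma\) for \(h\in\mathcal H_\Sigma\), with density
\[
\rho_t(x)=\exp\!\Big(t\,\widehat h(x)-\tfrac{t^2}{2}\,\sigma^{-2}\|h\|_{\mathcal H_\Sigma}^2\Big).
\]
Here \(\widehat h\) is the measurable linear functional \(\widehat h(\varepsilon)=\sum_{k:\lambda_k>0}\frac{\langle h,e_k\rangle_{\mathcal H}}{\sigma\sqrt{\lambda_k}}\xi_k\), and \(\xi_k=\langle\varepsilon,e_k\rangle_{\mathcal H}/(\sigma\sqrt{\lambda_k})\) are the Karhunen--Lo\`eve coordinates. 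Since \(h=\Sigma^{1/2}v\), we have \(\langle h,e_k\rangle=\sqrt{\lambda_k}\langle v,e_k\rangle\), so the coefficients \(\langle h,e_k\rangle_{\mathcal H}/\sqrt{\lambda_k}\) are square-summable with sum \(\|h\|_{\mathcal H_\Sigma}^2\). The series therefore converges in \(L^2\) and almost surely, and \(\widehat h\) is a centered Gaussian with variance \(\sigma^{-2}\|h\|_{\mathcal H_\Sigma}^2\). This gives \(F_{N_{\mathrm{pow}}}(\mu+th)=\mathbb E[g_{N_{\mathrm{pow}}}(\mu+\varepsilon)\rho_t(\varepsilon)]\).

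Second, I differentiate at \(t=0\). The difference quotient is \(\mathbb E[g_{N_{\mathrm{pow}}}(\mu+\varepsilon)\,(\rho_t(\varepsilon)-1)/t]\). Pointwise, \((\rho_t-1)/t\to\widehat h\). For domination with \(|t|\le1\), the mean value theorem gives
\[
|(\rho_t-1)/t|\le \sup_{|s|\le1}|\partial_s\rho_s|\le (|\widehat h|+c)\,e^{|\widehat h|},
\]
where \(c=\sigma^{-2}\|h\|_{\mathcal H_\Sigma}^2\). This bound is integrable because \(\widehat h\) is Gaussian. Boundedness of \(g_{N_{\mathrm{pow}}}\) then lets dominated convergence pass the limit inside the expectation, which yields \(D_hF_{N_{\mathrm{pow}}}(\mu)=\mathbb E[g_{N_{\mathrm{pow}}}(\mu+\varepsilon)\widehat h(\varepsilon)]\), i.e.\ \eqref{app:eq:Dh_identity}. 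Both one-sided limits agree, so the derivative exists.

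The main obstacle is the identification of the Cameron--Martin density in the Karhunen--Lo\`eve coordinates, and in particular showing that the \(a.s.\) limit of the series is the correct measurable-linear extension of \(x\mapsto\langle\Sigma^{-1}h,x\rangle\). I would handle this by a finite-dimensional reduction. Projecting onto \(\mathrm{span}\{e_1,\dots,e_n\}\) makes the density the explicit Gaussian ratio \(\exp(\sum_{k\le n}a_k\xi_k t-\tfrac{t^2}{2}\sum_{k\le n}a_k^2)\), with \(a_k=\langle h,e_k\rangle/(\sigma\sqrt{\lambda_k})\). I would then pass to \(n\to\infty\) by martingale convergence (uniform integrability follows from the \(\ell^2\) bound on the \(a_k\)), and finally invoke uniqueness of measures determined by cylinder sets. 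Alternatively, one may cite Bogachev's theorem directly, where this identification is standard.
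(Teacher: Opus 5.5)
Your proposal is correct and takes the same route as the paper's proof. Both write $F_{N_{\mathrm{pow}}}(\mu+th)$ as an integral against the translated Gaussian measure, apply the Cameron--Martin theorem to obtain the density, and differentiate that density under the expectation at $t=0$ using boundedness of $g_{N_{\mathrm{pow}}}$. Beyond the paper, you give an explicit dominating function $(|\widehat h|+c)e^{|\widehat h|}$ and a justification of the Karhunen--Lo\`eve form of $\widehat h$, which the paper only asserts. You also put $\sigma^{-1}$ inside $\widehat h$, which keeps the constants consistent with the stated identity; the paper's own final line drops this factor.
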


\begin{proof}
Let
\begin{align}
\gamma=\mathcal N(0,\sigma^2\Sigma).
\end{align}
By definition,
\begin{align}
F_{N_{\mathrm{pow}}}(\mu)
=
\int_{\mathcal H}
g_{N_{\mathrm{pow}}}(\mu+x)\,\gamma(dx).
\end{align}
For \(t\in\mathbb R\),
\begin{align}
F_{N_{\mathrm{pow}}}(\mu+th)
=
\int_{\mathcal H}
g_{N_{\mathrm{pow}}}(\mu+th+x)\,\gamma(dx).
\end{align}
Introducing the change of variables \(y=x+th\), we obtain
\begin{align}
F_{N_{\mathrm{pow}}}(\mu+th)
=
\int_{\mathcal H}
g_{N_{\mathrm{pow}}}(\mu+y)\,\gamma_t(dy),
\end{align}
where
\begin{align}
\gamma_t(A)=\gamma(A-th).
\end{align}

Since \(h\in\mathcal H_\Sigma=\operatorname{Range}(\Sigma^{1/2})\), we also have
\(th\in\operatorname{Range}(\Sigma^{1/2})\). Hence the Cameron--Martin theorem implies
\(\gamma_t\ll\gamma\), with Radon--Nikodym derivative
\begin{align}
\frac{d\gamma_t}{d\gamma}(y)
=
\exp\!\left(
\frac{t}{\sigma}\widehat h(y)
-\frac{1}{2\sigma^{2}} t^2\|h\|_{\mathcal H_{\Sigma}}^2
\right),
\end{align}
where \(\widehat h\) is the Gaussian measurable linear functional associated with \(h\).
Set
\begin{align}
R_t(y)
:=
\exp\!\left(
\frac{t}{\sigma}\widehat h(y)
-\frac{1}{2\sigma^{2}} t^2\|h\|_{\mathcal H_{\Sigma}}^2
\right).
\end{align}
Then
\begin{align}
F_{N_{\mathrm{pow}}}(\mu+th)
=
\int_{\mathcal H}
g_{N_{\mathrm{pow}}}(\mu+y)R_t(y)\,\gamma(dy).
\end{align}
Therefore,
\begin{align}
\frac{
F_{N_{\mathrm{pow}}}(\mu+th)-F_{N_{\mathrm{pow}}}(\mu)
}{t}
=
\mathbb E\!\left[
g_{N_{\mathrm{pow}}}(\mu+\varepsilon)
\frac{R_t(\varepsilon)-1}{t}
\right].
\end{align}

Now write
\begin{align}
\varepsilon
=
\sigma
\sum_{k:\lambda_k>0}
\sqrt{\lambda_k}\,\xi_k e_k,
\qquad
\xi_k\overset{i.i.d.}{\sim}\mathcal N(0,1).
\end{align}
For \(h\in\mathcal H_\Sigma\), define \(h_k=\langle h,e_k\rangle_{\mathcal H}\). The Gaussian
linear functional \(\widehat h\) is given by
\begin{align}
\widehat h(\varepsilon)
=
\sum_{k:\lambda_k>0}
\frac{h_k}{\sqrt{\lambda_k}}\xi_k.
\end{align}
This series converges in \(L^2\), because
\begin{align}
\mathbb E\!\left[
\left|
\sum_{k:\lambda_k>0}
\frac{h_k}{\sqrt{\lambda_k}}\xi_k
\right|^2
\right]
=
\sum_{k:\lambda_k>0}
\frac{h_k^2}{\lambda_k}
=
\|\Sigma^{-1/2}h\|_{\mathcal H}^2
<\infty.
\end{align}

Thus,
\begin{align}
\left.\frac{d}{dt}R_t(\varepsilon)\right|_{t=0}
=
\frac{\widehat h(\varepsilon)}{\sigma}
=
\sum_{k:\lambda_k>0}
\frac{\langle h,e_k\rangle_{\mathcal H}}
{\sigma\sqrt{\lambda_k}}\xi_k.
\end{align}
Since \(g_{N_{\mathrm{pow}}}\) is bounded and \(R_t(\varepsilon)\) is an exponential of a one-dimensional Gaussian random variable, differentiation under the expectation is justified. Taking \(t\to0\) gives
\begin{align}
D_h F_{N_{\mathrm{pow}}}(\mu)
=
\mathbb E\!\left[
g_{N_{\mathrm{pow}}}(\mu+\varepsilon)
\widehat h(\varepsilon)
\right].
\end{align}
\end{proof}
\begin{lemma}[Closed form of the natural functional gradient]
\label{app:lem:natgrad_closed}
There exists a unique element
\begin{align}
\nabla_{H_{\Sigma}} F_{N_{\mathrm{pow}}}(\mu)\in\mathcal H_{\Sigma}
\end{align}
such that
\begin{align}
D_hF_{N_{\mathrm{pow}}}(\mu)
=
\left\langle
\nabla_{H_{\Sigma}} F_{N_{\mathrm{pow}}}(\mu),
h
\right\rangle_{\mathcal H_{\Sigma}}
\qquad
\text{for all }h\in\mathcal H_\Sigma,
\end{align}
and it admits the closed-form representation
\begin{equation}
\nabla_{H_{\Sigma}} F_{N_{\mathrm{pow}}}(\mu)
=
\sigma^{-2}
\mathbb E\!\left[
g_{N_{\mathrm{pow}}}(\mu+\varepsilon)\varepsilon
\right].
\label{app:eq:natgrad_closed}
\end{equation}
\end{lemma}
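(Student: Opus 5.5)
We start from Lemma~\ref{app:lem:cm_directional}, which gives $D_hF_{N_{\mathrm{pow}}}(\mu)=\mathbb E[g_{N_{\mathrm{pow}}}(\mu+\varepsilon)\,\sigma^{-1}\widehat h(\varepsilon)]$ with $\widehat h(\varepsilon)=\sum_{k}\langle h,e_k\rangle_{\mathcal H}\xi_k/\sqrt{\lambda_k}$. The plan is to exhibit a candidate element of $\mathcal H_\Sigma$ whose $\mathcal H_\Sigma$-inner product with $h$ reproduces this expression, and then to get existence and uniqueness from the Riesz representation theorem on the Hilbert space $(\mathcal H_\Sigma,\langle\cdot,\cdot\rangle_{\mathcal H_\Sigma})$. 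Throughout we assume, as in the later results of the paper, that $g_{N_{\mathrm{pow}}}$ is bounded.

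\textbf{Step 1: a bounded functional.} Consider the map $h\mapsto D_hF_{N_{\mathrm{pow}}}(\mu)$ on $\mathcal H_\Sigma$. It is linear in $h$, since $\widehat h$ is linear in $h$. By Cauchy--Schwarz and the $L^2$ isometry computed in the proof of Lemma~\ref{app:lem:cm_directional},
\[
|D_hF_{N_{\mathrm{pow}}}(\mu)|\le \sigma^{-1}\|g_{N_{\mathrm{pow}}}\|_\infty\,\mathbb E[\widehat h(\varepsilon)^2]^{1/2}=\sigma^{-1}\|g_{N_{\mathrm{pow}}}\|_\infty\|h\|_{\mathcal H_\Sigma}.
\]
So the map is a bounded linear functional on $\mathcal H_\Sigma$. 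The space $\mathcal H_\Sigma$ is complete, being isometric to $\mathcal H$ (or to the closure of the range of $\Sigma^{1/2}$) via $\Sigma^{1/2}$. Riesz then gives a unique representer, which settles existence and uniqueness.

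\textbf{Step 2: identify the representer coordinatewise.} Define the coefficients
\[
c_k:=\sigma^{-1}\lambda_k^{-1/2}\,\mathbb E[g_{N_{\mathrm{pow}}}(\mu+\varepsilon)\xi_k].
\]
Interchanging the $L^2$-convergent sum with the expectation (justified since $g$ is bounded and the series converges in $L^2$) gives $D_hF=\sum_k h_k c_k$. Set $v:=\sum_k \lambda_k c_k e_k$. Then
\[
\langle v,h\rangle_{\mathcal H_\Sigma}=\sum_k \lambda_k c_k h_k/\lambda_k=\sum_k c_k h_k,
\]
which matches. We must also check that $v\in\mathcal H_\Sigma$, i.e.\ $\sum_k \lambda_k c_k^2<\infty$. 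Here $\sum_k\lambda_kc_k^2=\sigma^{-2}\sum_k\mathbb E[g\,\xi_k]^2\le \sigma^{-2}\mathbb E[g^2]$ by Bessel's inequality for the orthonormal family $(\xi_k)$ in $L^2$, and the right side is finite.

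\textbf{Step 3: rewrite $v$ in closed form.} Using the Karhunen--Lo\`eve expansion, the $e_k$-coefficient of $\sigma^{-2}\mathbb E[g(\mu+\varepsilon)\varepsilon]$ is
\[
\sigma^{-2}\sigma\sqrt{\lambda_k}\,\mathbb E[g\,\xi_k]=\lambda_k c_k,
\]
so $v=\sigma^{-2}\mathbb E[g_{N_{\mathrm{pow}}}(\mu+\varepsilon)\varepsilon]$. The Bochner integral exists because $\mathbb E\|\varepsilon\|_{\mathcal H}\le(\sigma^2\operatorname{tr}\Sigma)^{1/2}<\infty$ ($\Sigma$ is trace class) and $g$ is bounded. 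Coefficients may be extracted inside the expectation because $\langle\cdot,e_k\rangle_{\mathcal H}$ is a continuous linear functional.

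\textbf{Main obstacle.} The delicate step is the membership claim $v\in\mathcal H_\Sigma$. Note that $\varepsilon$ itself lies in $\mathcal H_\Sigma$ with probability zero, so one cannot argue pointwise that the expectation lands in the Cameron--Martin space. The Bessel-inequality bound in Step~2 is what I would rely on to handle this. Alternatively, the Riesz representer from Step~1 can be identified with $v$ through its $\mathcal H$-coefficients: both have coefficients $\lambda_kc_k$, obtained by testing against $h=e_k\in\mathcal H_\Sigma$, and this avoids the membership issue.
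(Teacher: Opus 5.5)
Your proof is correct and takes the same route as the paper. Both show that $h\mapsto D_hF_{N_{\mathrm{pow}}}(\mu)$ is a bounded linear functional on $\mathcal H_\Sigma$, invoke Riesz for existence and uniqueness, and identify the representer by matching Karhunen--Lo\`eve coefficients with those of $\sigma^{-2}\mathbb E[g_{N_{\mathrm{pow}}}(\mu+\varepsilon)\varepsilon]$. Your Bessel-inequality check that this element actually lies in $\mathcal H_\Sigma$, or alternatively testing the Riesz representer against $h=e_k$, supplies a step the paper uses only implicitly when it appeals to uniqueness of the representer; your tracking of the $\sigma$ factors is also cleaner than the paper's.
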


\begin{proof}
By Lemma~\ref{app:lem:cm_directional}, for every
\(h\in\mathcal H_{\Sigma}\),
\begin{align}
D_hF_{N_{\mathrm{pow}}}(\mu)
=
\mathbb E\!\left[
g_{N_{\mathrm{pow}}}(\mu+\varepsilon)
\sum_{k:\lambda_k>0}
\frac{\langle h,e_k\rangle_{\mathcal H}}
{\sigma\sqrt{\lambda_k}}
\xi_k
\right],
\end{align}
where
\begin{align}
\varepsilon
=
\sigma
\sum_{k:\lambda_k>0}
\sqrt{\lambda_k}\,\xi_k e_k,
\qquad
\xi_k\overset{i.i.d.}{\sim}\mathcal N(0,1).
\end{align}

Define
\begin{align}
\Lambda_\mu(h)
:=
D_hF_{N_{\mathrm{pow}}}(\mu).
\end{align}

Since \(g_{N_{\mathrm{pow}}}\) is bounded, say
\begin{align}
\|g_{N_{\mathrm{pow}}}\|_\infty\le M,
\end{align}
we obtain
\begin{align}
|\Lambda_\mu(h)|
\le
M\,
\mathbb E\!\left[
\left|
\sum_{k:\lambda_k>0}
\frac{\langle h,e_k\rangle_{\mathcal H}}
{\sigma\sqrt{\lambda_k}}
\xi_k
\right|
\right].
\end{align}

The random variable
\begin{align}
\sum_{k:\lambda_k>0}
\frac{\langle h,e_k\rangle_{\mathcal H}}
{\sigma\sqrt{\lambda_k}}
\xi_k
\end{align}
is centered Gaussian with variance
\begin{align}
\sum_{k:\lambda_k>0}
\frac{\langle h,e_k\rangle_{\mathcal H}^2}
{\sigma^2\lambda_k}
=
\frac{1}{\sigma^2}\|h\|_{\mathcal H_{\Sigma}}^2.
\end{align}
Since \(h\in\mathcal H_{\Sigma}=\operatorname{Range}(\Sigma^{1/2})\), the element
\(\Sigma^{-1/2}h\) is well-defined as the unique element in
\(\overline{\operatorname{Range}(\Sigma)}\) satisfying
\begin{align}
h=(\Sigma)^{1/2}(\Sigma)^{-1/2}h.
\end{align}
In the eigenbasis of \(\Sigma\), we have
\begin{align}
\Sigma e_k=\lambda_k e_k.
\end{align}
Hence
\begin{align}
\Sigma^{-1/2}h
=
\sum_{k:\lambda_k>0}
\frac{\langle h,e_k\rangle_{\mathcal H}}
{\sqrt{\lambda_k}}e_k,
\end{align}
and therefore
\begin{align}
\|h\|_{\mathcal H_{\Sigma}}^2
=
\|\Sigma^{-1/2}h\|_{\mathcal H}^2
=
\sum_{k:\lambda_k>0}
\frac{\langle h,e_k\rangle_{\mathcal H}^2}
{\lambda_k}
<\infty.
\end{align}
Thus the series is well-defined precisely because
\(h\in\mathcal H_{\Sigma}\).

Hence
\begin{align}
|\Lambda_\mu(h)|
\le
M\sqrt{\frac{2}{\pi}}
\,
\frac{\|h\|_{\mathcal H_{\Sigma}}}{\sigma^2}.
\end{align}

Therefore \(\Lambda_\mu\) is a bounded linear functional on the Hilbert space
\(\mathcal H_{\Sigma}\). By the Riesz representation theorem, there exists a unique
\begin{align}
G(\mu)\in\mathcal H_{\Sigma}
\end{align}
such that
\begin{align}
\Lambda_\mu(h)
=
\langle G(\mu),h\rangle_{\mathcal H_{\Sigma}}
\qquad
\forall h\in\mathcal H_{\Sigma}.
\end{align}

We now identify \(G(\mu)\). Define
\begin{align}
m(\mu)
:=
\mathbb E\!\left[
g_{N_{\mathrm{pow}}}(\mu+\varepsilon)\varepsilon
\right].
\end{align}

Since \(g_{N_{\mathrm{pow}}}\) is bounded and
\begin{align}
\mathbb E\|\varepsilon\|_{\mathcal H}^2
=
\sigma^2\operatorname{Tr}(\Sigma)
<
\infty,
\end{align}
the Bochner integral defining \(m(\mu)\) is well-defined in \(\mathcal H\).

Using the Karhunen--Lo\`eve expansion of \(\varepsilon\),
\begin{align}
\varepsilon
=
\sigma
\sum_{k:\lambda_k>0}
\sqrt{\lambda_k}\,\xi_k e_k,
\end{align}
we compute
\begin{align}
m(\mu)
=
\sigma
\sum_{k:\lambda_k>0}
\sqrt{\lambda_k}\,
\mathbb E\!\left[
g_{N_{\mathrm{pow}}}(\mu+\varepsilon)\xi_k
\right]
e_k.
\end{align}

Hence
\begin{align}
\frac{\langle m(\mu),e_k\rangle_{\mathcal H}}
{\sigma\sqrt{\lambda_k}}
=
\mathbb E\!\left[
g_{N_{\mathrm{pow}}}(\mu+\varepsilon)\xi_k
\right].
\end{align}

Therefore, for \(h\in\mathcal H_{\Sigma}\),
\begin{align}
\langle m(\mu),h\rangle_{\mathcal H_{\Sigma}}
&=
\sum_{k:\lambda_k>0}
\frac{
\langle m(\mu),e_k\rangle_{\mathcal H}
\langle h,e_k\rangle_{\mathcal H}
}{
\sigma^2\lambda_k
}
\\
&=
\sum_{k:\lambda_k>0}
\mathbb E\!\left[
g_{N_{\mathrm{pow}}}(\mu+\varepsilon)\xi_k
\right]
\frac{
\langle h,e_k\rangle_{\mathcal H}
}{
\sigma\sqrt{\lambda_k}
}
\\
&=
\mathbb E\!\left[
g_{N_{\mathrm{pow}}}(\mu+\varepsilon)
\sum_{k:\lambda_k>0}
\frac{
\langle h,e_k\rangle_{\mathcal H}
}{
\sigma\sqrt{\lambda_k}
}
\xi_k
\right]
\\
&=
D_hF_{N_{\mathrm{pow}}}(\mu).
\end{align}

Thus \(m(\mu)\) represents the bounded functional \(\Lambda_\mu\). By uniqueness
of the Riesz representer,
\begin{align}
G(\mu)=m(\mu).
\end{align}

Hence the gradient with respect to
\(\mathcal H_\Sigma\) is rescaled by \(\sigma^{-2}\):
\begin{align}
\nabla_{\mathcal H_\Sigma}
F_{N_{\mathrm{pow}}}(\mu)
=
\sigma^{-2}
\mathbb E\!\left[
g_{N_{\mathrm{pow}}}(\mu+\varepsilon)\varepsilon
\right].
\end{align}

This proves the claim.
\end{proof}
\begin{theorem}[Directional ascent toward optimal point]
\label{app:thm:ascent}
Assume that \(g_N(x):=e^{Nf(x)}\mathbf 1_{\{x\in S\}}\) is bounded for each \(N\).
For \(M>0\), define the ellipsoid \(K_M:=\overline{\left\{\mu\in\mathcal H_{\Sigma}:\|\mu\|_{\mathcal H_{\Sigma}}\le M\right\}}.\)
Then, for every \(\epsilon>0\) and \(M>0\), there exists
\(N_0=N_0(\epsilon,M,\sigma,\Sigma)<\infty\) such that for all \(N\ge N_0\), for every \(\mu\in K_M\) satisfying $\|\mu-\xi^\star\|_{\mathcal H}\ge\epsilon$,
letting
\begin{align}
u:=\frac{\xi^\star-\mu}{\|\xi^\star-\mu\|_{\mathcal H}},    
\end{align}
we have $D_{\Sigma u}F_{N_{\mathrm{pow}}}(\mu)>0$.
\end{theorem}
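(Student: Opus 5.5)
The plan is to reduce $D_{\Sigma u}F_N(\mu)$ to a weighted average of the displacement $\langle u,x-\mu\rangle_{\mathcal H}$ under a Gibbs-type tilted measure. I would then show by a Laplace-principle argument that this measure concentrates at $\xi^\star$ as $N\to\infty$, uniformly over $\mu\in K_M$.

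\textbf{Step 1 (rewrite the derivative).} Since $\Sigma$ is trace-class, $h:=\Sigma u\in\operatorname{Range}(\Sigma)\subset\mathcal H_\Sigma$, so Lemma~\ref{app:lem:cm_directional} applies. The coefficients are $\langle \Sigma u,e_k\rangle_{\mathcal H}=\lambda_k\langle u,e_k\rangle_{\mathcal H}$, hence
\[
\widehat h(\varepsilon)=\sum_k \sqrt{\lambda_k}\langle u,e_k\rangle_{\mathcal H}\xi_k=\sigma^{-1}\langle u,\varepsilon\rangle_{\mathcal H}.
\]
This gives
\[
D_{\Sigma u}F_N(\mu)=\sigma^{-2}\,\mathbb E\big[g_N(\mu+\varepsilon)\langle u,\varepsilon\rangle_{\mathcal H}\big].
\]
This agrees with pairing the closed form of Lemma~\ref{app:lem:natgrad_closed} against $\Sigma u$. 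Since $\sigma^{-2}>0$, it suffices to show that the numerator $A_N(\mu):=\mathbb E[g_N(\mu+\varepsilon)\langle u,\varepsilon\rangle]$ is positive. Set $Z_N(\mu):=\mathbb E[g_N(\mu+\varepsilon)]$ and $x=\mu+\varepsilon$. Then
\[
\frac{A_N}{Z_N}=\langle u,\xi^\star-\mu\rangle+\frac{\mathbb E[g_N(x)\langle u,x-\xi^\star\rangle]}{Z_N}
=\|\xi^\star-\mu\|_{\mathcal H}+\frac{\mathbb E[g_N(x)\langle u,x-\xi^\star\rangle]}{Z_N}.
\]
So the claim follows once the error term is bounded in absolute value by $\epsilon/2$ for all $N\ge N_0$, uniformly in $\mu\in K_M$.

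\textbf{Step 2 (localization).} I would make the implicit hypotheses on $\xi^\star$ explicit, as is needed anyway:
\begin{itemize}
\item $\xi^\star$ is a well-separated maximizer of $f$: for each $\delta>0$, $\sup_{\|x-\xi^\star\|\ge\delta}f(x)\le f(\xi^\star)-\eta(\delta)$ with $\eta(\delta)>0$;
\item $f$ is continuous at $\xi^\star$ relative to $S$;
\item every neighbourhood of $\xi^\star$ carries positive Gaussian mass; for example, $\xi^\star$ lies in the support $\overline{\mathcal H_\Sigma}$ and $S$ contains a ball around it.
\end{itemize}
Normalizing $f(\xi^\star)=0$, split the error integrand on $B_\delta:=\{\|x-\xi^\star\|<\delta\}$ with $\delta=\epsilon/4$. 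On $B_\delta$ the contribution is at most $\delta\le\epsilon/4$. On $B_\delta^c$ the integrand is bounded by $e^{-N\eta(\delta)}\,\mathbb E\|x-\xi^\star\|$. Meanwhile
\[
Z_N\ge \mathbb E\big[g_N(x)\mathbf 1_{B_\rho}\big]\ge e^{-N\eta(\delta)/2}\,\mathbb P(x\in B_\rho),
\]
where $\rho$ is chosen by continuity so that $f\ge-\eta(\delta)/2$ on $B_\rho$. Hence the tail ratio is $O\big(e^{-N\eta(\delta)/2}/\mathbb P(\mu+\varepsilon\in B_\rho)\big)$, which tends to $0$.

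\textbf{Step 3 (uniformity over $K_M$; the main obstacle).} The difficulty is that $N_0$ must not depend on $\mu$. Two quantities need uniform control. First, $\mathbb E\|\mu+\varepsilon-\xi^\star\|\le C\big(M,\|\xi^\star\|,\sigma^2\operatorname{Tr}\Sigma\big)$; this is easy, since $\|\mu\|_{\mathcal H}\le\|\Sigma\|^{1/2}M$. Second, and harder, is a uniform lower bound $\inf_{\mu\in K_M}\mathbb P(\mu+\varepsilon\in B_\rho)>0$. Here I would use the Cameron--Martin theorem, exactly as in the proof of Lemma~\ref{app:lem:cm_directional}: the law of $\mu+\varepsilon$ has density $\exp\big(\sigma^{-1}\widehat\mu-\tfrac12\sigma^{-2}\|\mu\|_{\mathcal H_\Sigma}^2\big)$ with respect to $\gamma$. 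Applying Cauchy--Schwarz (or Hölder) to the reverse density gives
\[
\mathbb P(\mu+\varepsilon\in B_\rho)\ge \gamma(B_\rho)^2\,e^{-cM^2/\sigma^2}.
\]
This is strictly positive because $\gamma(B_\rho)>0$ by the support assumption. An alternative is to argue via weak compactness of $K_M$ together with lower semicontinuity of $\mu\mapsto\mathbb P(\mu+\varepsilon\in B_\rho)$ for open $B_\rho$. I would try the Cameron--Martin bound first, because it is explicit. With both bounds in hand, choosing $N_0$ so that the tail term is below $\epsilon/4$ gives $A_N/Z_N\ge\epsilon-\epsilon/4-\epsilon/4>0$, and therefore $D_{\Sigma u}F_N(\mu)>0$.
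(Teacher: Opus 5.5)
Your proof is correct, and it takes a different route from the paper's. The paper never normalizes by $Z_N$. It splits the unnormalized quantity $\mathbb E[g_N(X)\langle X-\mu,u\rangle_{\mathcal H}]$ into two parts:
\begin{itemize}
\item $T_1$, on $A=S\cap B(\xi^\star,\delta)$. There the weight satisfies $\langle X-\mu,u\rangle_{\mathcal H}\ge\epsilon-\delta$, so $T_1\ge(\epsilon-\delta)e^{ND}\,\mathbb P_\mu(X\in A)$.
\item $T_2$, on $S\setminus A$. It is bounded by $e^{NV_\delta}\,\mathbb E|\langle\varepsilon,u\rangle_{\mathcal H}|\le e^{NV_\delta}\sigma\sqrt{2\|\Sigma\|_{\mathrm{op}}/\pi}$, which does not involve $\mu$ at all.
\end{itemize}
Uniformity then comes non-constructively. $K_M$ is compact in $\mathcal H$ because $\Sigma^{1/2}$ is compact, and together with full support and continuity of $\mu\mapsto\mathbb P_\mu(X\in A)$ this gives $p_A>0$.

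You instead center at $\xi^\star$ and read $g_N/Z_N$ as a Gibbs measure concentrating there. This costs you an extra moment bound on $\mathbb E\|\mu+\varepsilon-\xi^\star\|_{\mathcal H}$, so your $N_0$ also depends on $\|\xi^\star\|_{\mathcal H}$ and $\operatorname{Tr}\Sigma$. In exchange, your Cameron--Martin/Cauchy--Schwarz step is explicit. The reciprocal of the Cameron--Martin density has $\gamma$-mean $e^{\|\mu\|_{\mathcal H_\Sigma}^2/\sigma^2}$, so $\mathbb P(\mu+\varepsilon\in B_\rho)\ge\gamma(B_\rho)^2e^{-M^2/\sigma^2}$, i.e.\ $c=1$. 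This bound uses only the Cameron--Martin radius and no compactness. The closure in the definition of $K_M$ causes no trouble, since the Cameron--Martin ball is already $\mathcal H$-closed as the image of a closed ball under $\Sigma^{1/2}$.

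Your two decoupled radii ($\delta$ for the split, $\rho$ for the lower bound on $Z_N$) also avoid a small wrinkle in the paper. The paper fixes $D>V_\delta$ and then ``shrinks $\delta$ if necessary'' to get $f\ge D$ on $A$. After shrinking, $f\le V_\delta<D$ on the new $S\setminus A$ is no longer guaranteed. This is repairable, because the integrand is nonnegative on all of $B(\xi^\star,\delta)$, but your version never needs the repair.

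Your explicit hypotheses are exactly the ones the paper uses implicitly. Mere uniqueness of $\xi^\star$ does not by itself give $V_\delta<f(\xi^\star)$, since the supremum need not be attained, so stating well-separation is the right call. Also, since $\Sigma$ is injective, $\gamma$ has full support. Your third hypothesis therefore reduces to requiring a ball around $\xi^\star$ inside $S$, which is the paper's $\xi^\star\in\operatorname{int}(S)$.
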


\begin{proof}
Fix \(\epsilon>0\) and \(M>0\). Since \(\Sigma\) is trace class,
\(\Sigma^{1/2}\) is compact. Therefore
\begin{align}
K_M
=
\overline{\{v\in\mathcal H_{\Sigma}:\|v\|_{\mathcal H_{\Sigma}}\le M\}}
\end{align}
is compact in the \(\mathcal H\)-topology.

Choose \(\delta\in(0,\epsilon/2)\) small enough so that
\begin{align}
A:=S\cap B(\xi^\star,\delta)
\end{align}
contains a nonempty open ball. This is possible because
\(\xi^\star\in\operatorname{int}(S)\). Define
\begin{align}
V_\delta
:=
\sup_{\substack{x\in S\\ \|x-\xi^\star\|_{\mathcal H}\ge \delta}}
f(x).
\end{align}
By assumption of the uniqueness of $\xi^\star$,
\begin{align}
V_\delta<f(\xi^\star).
\end{align}
Choose \(D\) such that
\begin{align}
V_\delta<D<f(\xi^\star).
\end{align}
By continuity of \(f\) at \(\xi^\star\), shrinking \(\delta\) if necessary, we may assume that
\begin{align}
f(x)\ge D
\qquad
\text{for all }x\in A.
\end{align}

Fix \(\mu\in K_M\) satisfying
\begin{align}
\|\mu-\xi^\star\|_{\mathcal H}\ge\epsilon,
\end{align}
and set
\begin{align}
u
=
\frac{\xi^\star-\mu}{\|\xi^\star-\mu\|_{\mathcal H}}.
\end{align}
Then \(u\in\mathcal H\) and \(\|u\|_{\mathcal H}=1\). Moreover,
\begin{align}
\Sigma u\in \operatorname{Range}(\Sigma)\subset \operatorname{Range}(\Sigma^{1/2})
=
\mathcal H_\Sigma.
\end{align}

By Lemma~\ref{app:lem:cm_directional}, applied with \(h:=\Sigma u\),
\begin{align}
D_{\Sigma u}F_{N_{\mathrm{pow}}}(\mu)
=
\frac{1}{\sigma^2}\mathbb E\!\left[
e^{Nf(\mu+\varepsilon)}
\mathbf 1_{\{\mu+\varepsilon\in S\}}
\widehat{h}(\varepsilon)
\right].
\end{align}
Let
\begin{align}
\varepsilon
=
\sigma
\sum_{k:\lambda_k>0}
\sqrt{\lambda_k}\,\xi_k e_k.
\end{align}
Since
\begin{align}
\langle \Sigma u,e_k\rangle_{\mathcal H}
=
\lambda_k\langle u,e_k\rangle_{\mathcal H},
\end{align}
we have,
\begin{align}
\langle \varepsilon,u\rangle_{\mathcal H}
=
\sigma
\sum_{k:\lambda_k>0}
\sqrt{\lambda_k}\langle u,e_k\rangle_{\mathcal H}\xi_k.
\end{align}
Therefore
\begin{align}
\widehat{h}(\varepsilon)=\langle \varepsilon,u\rangle_{\mathcal H}.
\end{align}
Thus, with
\begin{align}
X:=\mu+\varepsilon,
\end{align}
finally, we obtain
\begin{equation}
D_{\Sigma u}F_{N_{\mathrm{pow}}}(\mu)
=
\frac{1}{\sigma^2}
\mathbb E\!\left[
e^{Nf(X)}
\mathbf 1_{\{X\in S\}}
\langle X-\mu,u\rangle_{\mathcal H}
\right].
\label{app:eq:Dh-sigmau-correct}
\end{equation}

We decompose the expectation in \eqref{app:eq:Dh-sigmau-correct} into
\begin{align}
T_1
:=
\mathbb E\!\left[
e^{Nf(X)}
\langle X-\mu,u\rangle_{\mathcal H}
\mathbf 1_{\{X\in A\}}
\right],
\end{align}
and
\begin{align}
T_2
:=
\mathbb E\!\left[
e^{Nf(X)}
\langle X-\mu,u\rangle_{\mathcal H}
\mathbf 1_{\{X\in S\setminus A\}}
\right].
\end{align}

If \(X\in A\), then
\begin{align}
\|X-\xi^\star\|_{\mathcal H}\le\delta.
\end{align}
Hence
\begin{align}
\begin{aligned}
\langle X-\mu,u\rangle_{\mathcal H}
&=
\langle X-\xi^\star,u\rangle_{\mathcal H}
+
\langle \xi^\star-\mu,u\rangle_{\mathcal H}  \\
&\ge
-\|X-\xi^\star\|_{\mathcal H}
+
\|\xi^\star-\mu\|_{\mathcal H} \\
&\ge
\epsilon-\delta
>
0.
\end{aligned}
\end{align}
Since \(f(X)\ge D\) on \(A\), we get
\begin{align}
T_1
\ge
(\epsilon-\delta)e^{ND}
\mathbb P_\mu(X\in A),
\end{align}
where \(X\sim\mathcal N(\mu,\sigma^2\Sigma)\).

Since \(A\) contains a nonempty open set and the Gaussian measure
\(\mathcal N(0,\sigma^2\Sigma)\) has full support, we have
\begin{align}
\mathbb P_\mu(X\in A)>0
\qquad
\text{for each }\mu\in K_M.
\end{align}
Moreover, the map
\begin{align}
\mu\mapsto \mathbb P_\mu(X\in A)
\end{align}
is continuous on \(K_M\). Since \(K_M\) is compact,
\begin{align}
p_A
:=
\inf_{\mu\in K_M}
\mathbb P_\mu(X\in A)
>
0.
\end{align}
Therefore
\begin{equation}
T_1
\ge
(\epsilon-\delta)p_A e^{ND}.
\label{app:eq:T1-final-correct}
\end{equation}

Next, on \(S\setminus A\), we have
\begin{align}
f(X)\le V_\delta.
\end{align}
Therefore
\begin{align}
|T_2|
\le
e^{NV_\delta}
\mathbb E\!\left[
|\langle X-\mu,u\rangle_{\mathcal H}|
\right].
\end{align}
Since
\begin{align}
X-\mu=\varepsilon,
\end{align}
and \(\langle \varepsilon,u\rangle_{\mathcal H}\) is centered Gaussian with variance
\begin{align}
\sigma^2\langle \Sigma u,u\rangle_{\mathcal H}
\le
\sigma^2\|\Sigma\|_{\mathrm{op}},
\end{align}
we obtain
\begin{align}
\mathbb E\!\left[
|\langle \varepsilon,u\rangle_{\mathcal H}|
\right]
\le
\sigma
\sqrt{\frac{2\|\Sigma\|_{\mathrm{op}}}{\pi}}.
\end{align}
Hence
\begin{equation}
|T_2|
\le
\sigma
\sqrt{\frac{2\|\Sigma\|_{\mathrm{op}}}{\pi}}
e^{NV_\delta}.
\label{app:eq:T2-final-correct}
\end{equation}

Since \(D>V_\delta\), there exists \(N_0<\infty\) such that for all
\(N\ge N_0\),
\begin{align}
\sigma
\sqrt{\frac{2\|\Sigma\|_{\mathrm{op}}}{\pi}}
e^{NV_\delta}
\le
\frac12
(\epsilon-\delta)p_Ae^{ND}.
\end{align}
Combining \eqref{app:eq:T1-final-correct} and \eqref{app:eq:T2-final-correct}, we get
\begin{align}
T_1+T_2
\ge
T_1-|T_2|
\ge
\frac12
(\epsilon-\delta)p_Ae^{ND}
>
0.
\end{align}
Substituting this into \eqref{app:eq:Dh-sigmau-correct} yields
\begin{align}
D_{\Sigma u}F_{N_{\mathrm{pow}}}(\mu)
=
\frac{1}{\sigma^2}(T_1+T_2)
>
0.
\end{align}
This holds uniformly for every
\begin{align}
\mu\in K_M
\qquad\text{with}\qquad
\|\mu-\xi^\star\|_{\mathcal H}\ge\epsilon.
\end{align}
The proof is complete.
\end{proof}

\begin{lemma}[Second directional derivative identity]
\label{app:lem:second-directional}
Assume that 
\(g_{N_{\mathrm{pow}}}\) is bounded. Then, for any
\(h,k\in\mathcal H_\Sigma=\operatorname{Range}(\Sigma^{1/2})\), the mixed
second directional derivative exists and satisfies
\begin{align}
D^2F_{N_{\mathrm{pow}}}(\mu)[h,k]
=
\frac{1}{\sigma^{2}}\mathbb E\!\left[
g_{N_{\mathrm{pow}}}(\mu+\varepsilon)
\left(
\widehat h(\varepsilon)
\widehat k(\varepsilon)
-
\langle h,k\rangle_{\mathcal H_{\Sigma}}
\right)
\right],
\label{app:eq:second-directional-cm}
\end{align}
where
\begin{align}
\widehat h(\varepsilon)
:=
\sum_{j:\lambda_j>0}
\frac{\langle h,e_j\rangle_{\mathcal H}}
{\sqrt{\lambda_j}}
\xi_j,
\qquad
\widehat k(\varepsilon)
:=
\sum_{j:\lambda_j>0}
\frac{\langle k,e_j\rangle_{\mathcal H}}
{\sqrt{\lambda_j}}
\xi_j,
\end{align}
and \(D^2F_{N_{\mathrm{pow}}}(\mu)[h,k]=D^2F_{N_{\mathrm{pow}}}(\mu)[k,h].\)
\end{lemma}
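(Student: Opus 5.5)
The plan is to differentiate the first-order identity of Lemma~\ref{app:lem:cm_directional} once more, using the same Cameron--Martin change of measure. Fix $h,k\in\mathcal H_\Sigma$. Lemma~\ref{app:lem:cm_directional}, applied at the shifted point $\mu+sk$, gives
\[
D_hF_{N_{\mathrm{pow}}}(\mu+sk)=\frac{1}{\sigma}\,\mathbb E\!\left[g_{N_{\mathrm{pow}}}(\mu+sk+\varepsilon)\,\widehat h(\varepsilon)\right].
\]
Here $\widehat h$ is the measurable linear functional $\sum_j \langle h,e_j\rangle_{\mathcal H}\xi_j/\sqrt{\lambda_j}$ of the statement. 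Note that $\widehat h(\varepsilon)=\widehat h(y)/\sigma$-type normalizations must be kept consistent with the $\xi_j$'s, so I will work throughout in the coordinates $\xi_j$.

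\textbf{Moving the shift into the measure.} As in the proof of Lemma~\ref{app:lem:cm_directional}, I substitute $y=\varepsilon+sk$ and use the Radon--Nikodym density
\[
R_s(y)=\exp\!\left(\tfrac{s}{\sigma}\widehat k(y)-\tfrac{s^2}{2\sigma^2}\|k\|_{\mathcal H_\Sigma}^2\right).
\]
The new feature is that the weight $\widehat h(\varepsilon)$ is also shifted. Since $\widehat h$ is linear (a $\gamma$-a.e. limit of continuous linear functionals), it satisfies
\[
\widehat h(y-sk)=\widehat h(y)-s\,\langle h,k\rangle_{\mathcal H_\Sigma}/\sigma
\]
in the $\xi$-normalization. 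Indeed, shifting $\varepsilon$ by $sk$ shifts $\xi_j$ by $s\langle k,e_j\rangle/(\sigma\sqrt{\lambda_j})$. This gives
\[
D_hF(\mu+sk)=\frac1\sigma\,\mathbb E\!\left[g(\mu+\varepsilon)\Big(\widehat h(\varepsilon)-\tfrac{s}{\sigma}\langle h,k\rangle_{\mathcal H_\Sigma}\Big)R_s(\varepsilon)\right].
\]

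\textbf{Differentiating at $s=0$.} I differentiate at $s=0$, using $\partial_sR_s|_{0}=\widehat k(\varepsilon)/\sigma$. The product rule yields
\[
\frac{1}{\sigma^2}\,\mathbb E\!\left[g(\mu+\varepsilon)\big(\widehat h\,\widehat k-\langle h,k\rangle_{\mathcal H_\Sigma}\big)\right],
\]
which is \eqref{app:eq:second-directional-cm}. Symmetry in $h,k$ is then immediate from the formula, because $\widehat h\widehat k$ and $\langle h,k\rangle_{\mathcal H_\Sigma}$ are both symmetric.

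\textbf{Main obstacle.} The step I expect to need the most care is justifying differentiation under the expectation. For $|s|\le 1$, I would bound the difference quotient of the integrand by
\[
\|g\|_\infty\,\big(|\widehat h|+C\big)\big(|\widehat k|+C\big)\,e^{|\widehat k|/\sigma}.
\]
This dominating function is integrable because $\widehat h,\widehat k$ are jointly Gaussian with finite variances $\|h\|^2_{\mathcal H_\Sigma}$ and $\|k\|^2_{\mathcal H_\Sigma}$, so all of their exponential moments are finite. The mean value theorem on $s\mapsto(\widehat h-s c)R_s$, together with dominated convergence, then finishes the argument. A second point to check is that the linear-shift identity for $\widehat h$ holds $\gamma$-a.s. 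This follows from the $L^2$ convergence of the partial sums and the fact that $\gamma_s\ll\gamma$.
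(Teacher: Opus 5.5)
Your proof is correct, but it is organized differently from the paper's. The paper does not reuse Lemma~\ref{app:lem:cm_directional}. It sets $\Phi(s,t)=F_{N_{\mathrm{pow}}}(\mu+sh+tk)$, moves the joint shift $sh+tk$ into a single two-parameter Cameron--Martin density $R_{s,t}$, and differentiates $R_{s,t}$ in $s$ and $t$ at the origin. There, the term $-\langle h,k\rangle_{\mathcal H_\Sigma}$ comes from the cross term $-\frac{st}{\sigma^2}\langle h,k\rangle_{\mathcal H_\Sigma}$ in $\|sh+tk\|_{\mathcal H_\Sigma}^2$. You instead start from the first-order identity at $\mu+sk$ and need only a one-parameter density. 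You obtain the same term from the translation rule $\widehat h(y-sk)=\widehat h(y)-\frac{s}{\sigma}\langle h,k\rangle_{\mathcal H_\Sigma}$ for the measurable linear functional. The price is the extra almost-sure argument (exact identity for the partial sums, then passage to the limit using $\gamma_s\sim\gamma$), which you correctly flag. In return, your route computes exactly the iterated derivative $\frac{d}{ds}D_hF_{N_{\mathrm{pow}}}(\mu+sk)$ at $s=0$, which Lemma~\ref{app:lem:lipschitz-from-second-derivative} later uses as $\psi'(t)$. It also supplies an explicit integrable dominating function, whereas the paper only asserts that differentiation under the expectation is justified. The paper's joint-density route is shorter and makes the symmetry in $h,k$ visible already in $R_{s,t}$. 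Finally, your tracking of the $\sigma$ factors matches the statement's normalization of $\widehat h$. The paper's proof briefly garbles this: it writes the Karhunen--Lo\`eve series for $\widehat h$ with an extra $1/\sigma$ while asserting variance $\|h\|_{\mathcal H_\Sigma}^2$.
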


\begin{proof}
Fix \(\mu\in\mathcal H\) and \(h,k\in\mathcal H_\Sigma\).
For \((s,t)\in\mathbb R^2\), define
\begin{align}
\Phi(s,t)
:=
F_{N_{\mathrm{pow}}}(\mu+s h+t k)
=
\int_{\mathcal H}
g_{N_{\mathrm{pow}}}(\mu+s h+t k+x)\,\gamma(dx).
\end{align}
Using the change of variables \(y=x+s h+t k\), we write
\begin{align}
\Phi(s,t)
=
\int_{\mathcal H}
g_{N_{\mathrm{pow}}}(\mu+y)\,\gamma_{s,t}(dy),
\end{align}
where
\begin{align}
\gamma_{s,t}(A)
=
\gamma(A-s h-t k).
\end{align}

Since \(s h+t k\in\mathcal H_{\Sigma}\), the Cameron--Martin theorem gives
\(\gamma_{s,t}\ll\gamma\), with Radon--Nikodym derivative
\begin{align}
\frac{d\gamma_{s,t}}{d\gamma}(y)
=
\exp\!\left(
\frac{1}{\sigma}s\widehat h(y)
+
\frac{1}{\sigma}t\widehat k(y)
-
\frac{1}{2\sigma^{2}}
\|s h+t k\|_{\mathcal H_{\Sigma}}^2
\right).
\end{align}
Denote this density by \(R_{s,t}(y)\). Then
\begin{align}
\Phi(s,t)
=
\mathbb E\!\left[
g_{N_{\mathrm{pow}}}(\mu+\varepsilon)
R_{s,t}(\varepsilon)
\right].
\end{align}

By the Karhunen--Lo\`eve representation,
\begin{align}
\widehat h(\varepsilon)
=
\sum_{j:\lambda_j>0}
\frac{\langle h,e_j\rangle_{\mathcal H}}
{\sigma\sqrt{\lambda_j}}
\xi_j,
\qquad
\widehat k(\varepsilon)
=
\sum_{j:\lambda_j>0}
\frac{\langle k,e_j\rangle_{\mathcal H}}
{\sigma\sqrt{\lambda_j}}
\xi_j.
\end{align}
These are centered Gaussian random variables with finite variances
\begin{align}
\mathbb E|\widehat h(\varepsilon)|^2
=
\|h\|_{\mathcal H_{\Sigma}}^2
<\infty,
\qquad
\mathbb E|\widehat k(\varepsilon)|^2
=
\|k\|_{\mathcal H_{\Sigma}}^2
<\infty.
\end{align}
Moreover,
\begin{align}
\mathbb E[
\widehat h(\varepsilon)
\widehat k(\varepsilon)
]
=
\langle h,k\rangle_{\mathcal H_{\Sigma}}.
\end{align}

Expanding the Cameron--Martin norm gives
\begin{align}
\|s h+t k\|_{\mathcal H_{\Sigma}}^2
=
s^2\|h\|_{\mathcal H_{\Sigma}}^2
+
2st\langle h,k\rangle_{\mathcal H_{\Sigma}}
+
t^2\|k\|_{\mathcal H_{\Sigma}}^2.
\end{align}
Therefore,
\begin{align}
R_{s,t}(\varepsilon)
=
\exp\!\left(
\frac{s}{\sigma}\widehat h(\varepsilon)
+
\frac{t}{\sigma}\widehat k(\varepsilon)
-
\frac{1}{2\sigma^{2}}s^2\|h\|_{\mathcal H_{\Sigma}}^2
-
\frac{1}{\sigma^{2}}st\langle h,k\rangle_{\mathcal H_{\Sigma}}
-
\frac{1}{2\sigma^{2}}t^2\|k\|_{\mathcal H_{\Sigma}}^2
\right).
\end{align}

Since \(g_{N_{\mathrm{pow}}}\) is bounded and \(R_{s,t}\) is the exponential of a
two-dimensional Gaussian random vector, differentiation under the expectation is
justified locally around \((s,t)=(0,0)\). A direct differentiation gives
\begin{align}
\left.
\frac{\partial^2}{\partial s\,\partial t}
R_{s,t}(\varepsilon)
\right|_{(s,t)=(0,0)}
=
\frac{1}{\sigma^{2}}\left(\widehat h(\varepsilon)
\widehat k(\varepsilon)
-
\langle h,k\rangle_{\mathcal H_{\Sigma}}\right).
\end{align}
Hence
\begin{align}
D^2F_{N_{\mathrm{pow}}}(\mu)[h,k]
=
\partial_s\partial_t\Phi(0,0)
=
\frac{1}{\sigma^{2}}\mathbb E\!\left[
g_{N_{\mathrm{pow}}}(\mu+\varepsilon)
\left(
\widehat h(\varepsilon)
\widehat k(\varepsilon)
-
\langle h,k\rangle_{\mathcal H_{\Sigma}}
\right)
\right].
\end{align}
This is exactly
\eqref{app:eq:second-directional-cm}. Symmetry follows because
\begin{align}
\widehat h(\varepsilon)\widehat k(\varepsilon)
=
\widehat k(\varepsilon)\widehat h(\varepsilon)
\end{align}
and
\begin{align}
\langle h,k\rangle_{\mathcal H_{\Sigma}}
=
\langle k,h\rangle_{\mathcal H_{\Sigma}}.
\end{align}
The proof is complete.
\end{proof}

\begin{lemma}[Lipschitz continuity of the natural functional gradient]
\label{app:lem:lipschitz-from-second-derivative}
Assume that \(g_{N_{\mathrm{pow}}}\) is bounded, and define $\|g_{N_{\mathrm{pow}}}\|_\infty:=\sup_{\xi\in\mathcal H}|g_{N_{\mathrm{pow}}}(\xi)|<\infty$
Then the natural functional gradient
$\nabla_{\mathcal H_{\Sigma}}F_{N_{\mathrm{pow}}}$ is Lipschitz continuous along Cameron--Martin directions: for any $\mu,\nu\in\mathcal H$ such that $\mu-\nu\in\mathcal H_{\Sigma}$, we have
\begin{align}
\left\|
\nabla_{\mathcal H_\Sigma}
F_{N_{\mathrm{pow}}}(\mu)
-
\nabla_{\mathcal H_\Sigma}
F_{N_{\mathrm{pow}}}(\nu)
\right\|_{\mathcal H_\Sigma}
\le
\frac{2\|g_{N_{\mathrm{pow}}}\|_\infty}{\sigma^2}
\,
\|\mu-\nu\|_{\mathcal H_\Sigma}.
\end{align}
\end{lemma}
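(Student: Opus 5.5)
The plan is a mean-value argument along the segment from $\nu$ to $\mu$, with the Hessian controlled by the second directional derivative identity of Lemma~\ref{app:lem:second-directional}. Set $\delta:=\mu-\nu\in\mathcal H_\Sigma$ and $\mu_s:=\nu+s\delta$ for $s\in[0,1]$. Since $\mathcal H_\Sigma$ is a Hilbert space, we can compute the norm of $G:=\nabla_{\mathcal H_\Sigma}F_{N_{\mathrm{pow}}}(\mu)-\nabla_{\mathcal H_\Sigma}F_{N_{\mathrm{pow}}}(\nu)$ by duality:
\begin{align}
\|G\|_{\mathcal H_\Sigma}=\sup_{\|k\|_{\mathcal H_\Sigma}\le 1}\langle G,k\rangle_{\mathcal H_\Sigma}.
\end{align}
By Lemma~\ref{app:lem:natgrad_closed}, $\langle G,k\rangle_{\mathcal H_\Sigma}=D_kF_{N_{\mathrm{pow}}}(\mu)-D_kF_{N_{\mathrm{pow}}}(\nu)$. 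Take $\phi(s):=D_kF_{N_{\mathrm{pow}}}(\mu_s)$. Then $\phi'(s)=D^2F_{N_{\mathrm{pow}}}(\mu_s)[\delta,k]$, and hence
\begin{align}
\langle G,k\rangle_{\mathcal H_\Sigma}=\int_0^1 D^2F_{N_{\mathrm{pow}}}(\mu_s)[\delta,k]\,ds.
\end{align}
It therefore suffices to prove the uniform bilinear bound
\begin{align}
\bigl|D^2F_{N_{\mathrm{pow}}}(\mu')[h,k]\bigr|\le \frac{2\|g_{N_{\mathrm{pow}}}\|_\infty}{\sigma^2}\|h\|_{\mathcal H_\Sigma}\|k\|_{\mathcal H_\Sigma}
\end{align}
for all $\mu'\in\mathcal H$ and $h,k\in\mathcal H_\Sigma$.

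The bilinear bound follows from the formula in Lemma~\ref{app:lem:second-directional}, by pulling out $\|g_{N_{\mathrm{pow}}}\|_\infty$ and estimating each term separately:
\begin{align}
\bigl|D^2F_{N_{\mathrm{pow}}}(\mu')[h,k]\bigr|\le\frac{\|g_{N_{\mathrm{pow}}}\|_\infty}{\sigma^2}\Bigl(\mathbb E|\widehat h(\varepsilon)\widehat k(\varepsilon)|+|\langle h,k\rangle_{\mathcal H_\Sigma}|\Bigr).
\end{align}
For the first term, Cauchy--Schwarz together with $\mathbb E|\widehat h|^2=\|h\|_{\mathcal H_\Sigma}^2$ gives $\mathbb E|\widehat h\widehat k|\le\|h\|_{\mathcal H_\Sigma}\|k\|_{\mathcal H_\Sigma}$. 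Here $\widehat h$ carries the normalization without $\sigma$, which is the standard Gaussian functional with $\xi_j\sim\mathcal N(0,1)$. The second term is bounded by Cauchy--Schwarz in $\mathcal H_\Sigma$. Adding the two gives the constant $2$.

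Plugging the bilinear bound with $h=\delta$ into the integral and taking the supremum over $k$ yields
\begin{align}
\|G\|_{\mathcal H_\Sigma}\le \frac{2\|g_{N_{\mathrm{pow}}}\|_\infty}{\sigma^2}\|\mu-\nu\|_{\mathcal H_\Sigma}.
\end{align}

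The main obstacle is justifying that $\phi$ is $C^1$, so that the fundamental theorem of calculus applies. Lemma~\ref{app:lem:second-directional} gives the mixed derivative only at a point. I would instead write $\phi(s)$ via the Cameron--Martin density $R_{s}$ as an expectation of $g_{N_{\mathrm{pow}}}(\nu+\varepsilon)$ times an explicit smooth function of the Gaussian pair $(\widehat\delta,\widehat k)$. Dominated convergence then gives continuity of $\phi'$ in $s$, because $g_{N_{\mathrm{pow}}}$ is bounded and the density factors have Gaussian moments locally uniformly in $s$. A secondary check is the consistency of the $\sigma$-normalizations between Lemmas~\ref{app:lem:cm_directional} and~\ref{app:lem:second-directional}; I would use the stated form of the latter as given.
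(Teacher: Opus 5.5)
Your proposal is correct and follows essentially the same route as the paper. Both arguments use three steps: a uniform bound $|D^2F_{N_{\mathrm{pow}}}(\mu')[h,k]|\le 2\|g_{N_{\mathrm{pow}}}\|_\infty\sigma^{-2}\|h\|_{\mathcal H_\Sigma}\|k\|_{\mathcal H_\Sigma}$ obtained from the second-derivative identity via Cauchy--Schwarz, integration of $D^2F_{N_{\mathrm{pow}}}$ along the segment, and the Riesz representation of Lemma~\ref{app:lem:natgrad_closed} to pass to the gradient norm.

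Your extra step justifying differentiability of $s\mapsto D_kF_{N_{\mathrm{pow}}}(\mu_s)$ through the explicit Cameron--Martin density is something the paper takes for granted. You do not need continuity of $\phi'$, though: the mean value theorem together with the uniform bound already suffices.
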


\begin{proof}
Fix \(\mu\in\mathcal H\) and \(h,k\in\mathcal H_{\Sigma}\).
By Lemma~\ref{app:lem:second-directional},
\begin{align}
D^2F_{N_{\mathrm{pow}}}(\mu)[h,k]
=
\frac{1}{\sigma^{2}}\mathbb E\!\left[
g_{N_{\mathrm{pow}}}(\mu+\varepsilon)
\left(
\widehat h(\varepsilon)
\widehat k(\varepsilon)
-
\langle h,k\rangle_{\mathcal H_{\Sigma}}
\right)
\right].
\end{align}

The Gaussian random variables
\begin{align}
\widehat h(\varepsilon)
=
\sum_{j:\lambda_j>0}
\frac{\langle h,e_j\rangle_{\mathcal H}}
{\sqrt{\lambda_j}}
\xi_j,
\qquad
\widehat k(\varepsilon)
=
\sum_{j:\lambda_j>0}
\frac{\langle k,e_j\rangle_{\mathcal H}}
{\sqrt{\lambda_j}}
\xi_j
\end{align}
are centered with variances
\begin{align}
\mathbb E|\widehat h(\varepsilon)|^2
=
\|h\|_{\mathcal H_{\Sigma}}^2,
\qquad
\mathbb E|\widehat k(\varepsilon)|^2
=
\|k\|_{\mathcal H_{\Sigma}}^2.
\end{align}
Therefore, by Cauchy--Schwarz,
\begin{align}
\mathbb E\!\left[
|\widehat h(\varepsilon)\widehat k(\varepsilon)|
\right]
\le
\|h\|_{\mathcal H_{\Sigma}}\|k\|_{\mathcal H_{\Sigma}}.
\end{align}

Using boundedness of \(g_{N_{\mathrm{pow}}}\), we obtain
\begin{align}
|D^2F_{N_{\mathrm{pow}}}(\mu)[h,k]|
&\le
\frac{1}{\sigma^{2}}\|g_{N_{\mathrm{pow}}}\|_\infty
\mathbb E\!\left[
|\widehat h(\varepsilon)\widehat k(\varepsilon)|
\right]
+
\frac{1}{\sigma^{2}}\|g_{N_{\mathrm{pow}}}\|_\infty
|\langle h,k\rangle_{\mathcal H_{\Sigma}}|
\\
&\le
\frac{2}{\sigma^{2}}
\|g_{N_{\mathrm{pow}}}\|_\infty
\|h\|_{\mathcal H_{\Sigma}}\|k\|_{\mathcal H_{\Sigma}}.
\end{align}

Now fix \(\mu,\nu\in\mathcal H\) such that
\begin{align}
d:=\nu-\mu\in\mathcal H_{\Sigma}.
\end{align}
For \(t\in[0,1]\), define
\begin{align}
\psi(t)
:=
D_hF_{N_{\mathrm{pow}}}(\mu+t d).
\end{align}
Then
\begin{align}
\psi'(t)
=
D^2F_{N_{\mathrm{pow}}}(\mu+t d)[h,d].
\end{align}
Hence,
\begin{align}
D_hF_{N_{\mathrm{pow}}}(\nu)
-
D_hF_{N_{\mathrm{pow}}}(\mu)
=
\int_0^1
D^2F_{N_{\mathrm{pow}}}(\mu+t d)[h,d]
\,dt.
\end{align}
Taking absolute values and applying the previous estimate yields
\begin{align}
|D_hF_{N_{\mathrm{pow}}}(\nu)-D_hF_{N_{\mathrm{pow}}}(\mu)|
&\le
\int_0^1
\frac{2}{\sigma^{2}}\|g_{N_{\mathrm{pow}}}\|_\infty
\|h\|_{\mathcal H_{\Sigma}}
\|d\|_{\mathcal H_{\Sigma}}
\,dt
\\
&=
\frac{2}{\sigma^{2}}\|g_{N_{\mathrm{pow}}}\|_\infty
\|h\|_{\mathcal H_{\Sigma}}
\|\nu-\mu\|_{\mathcal H_{\Sigma}}.
\end{align}

By Lemma~\ref{app:lem:natgrad_closed},
\begin{align}
D_hF_{N_{\mathrm{pow}}}(\mu)
=
\left\langle
\nabla_{\mathcal H_{\Sigma}}F_{N_{\mathrm{pow}}}(\mu),
h
\right\rangle_{\mathcal H_{\Sigma}}.
\end{align}
Therefore,
\begin{align}
\left\|
\nabla_{\mathcal H_\Sigma}F_{N_{\mathrm{pow}}}(\nu)
-
\nabla_{\mathcal H_\Sigma}F_{N_{\mathrm{pow}}}(\mu)
\right\|_{\mathcal H_\Sigma}
\le
\frac{2\|g_{N_{\mathrm{pow}}}\|_\infty}{\sigma^2}
\|\nu-\mu\|_{\mathcal H_\Sigma}.
\end{align}

This completes the proof.
\end{proof}

\begin{lemma}[Second moment of the projected Monte--Carlo natural gradient estimator]
\label{app:lem:mc-second-moment}
Let \((e_k)_{k\ge1}\) be an eigenbasis of \(\Sigma\), and define
\(\mathcal H_{d_{\mathrm{eff}}}:=\operatorname{span}\{e_1,\dots,e_m\}.\)
Let \(P_{d_\mathrm{eff}}:\mathcal H\to\mathcal H_{d_{\mathrm{eff}}}\) be the \(\mathcal H\)-orthogonal projection. Define the projected Monte--Carlo estimator
\begin{align}
\widehat g(\mu) := \frac{1}{B\sigma^{2}} \sum_{b=1}^{B} g_{N_{\mathrm{pow}}}(\mu+\varepsilon^{(b)})P_{d_\mathrm{eff}}\varepsilon^{(b)},
\end{align}
where $\varepsilon^{(b)} \overset{i.i.d.}{\sim}\mathcal N(0,\sigma^2\Sigma)$.
Then \(\widehat g(\mu)\) is an unbiased estimator of $P_{d_\mathrm{eff}}\nabla_{\mathcal H_\Sigma}F_{N_{\mathrm{pow}}}(\mu)$, and satisfies
\begin{align}
\mathbb E\!\left[
\big\|
\widehat g(\mu)
\big\|_{\mathcal H_\Sigma}^2
\right]
\le
\frac{
\|g_{N_{\mathrm{pow}}}\|_\infty^2 d_{\mathrm{eff}}
}{
B\sigma^{2}
}.
\end{align}
\end{lemma}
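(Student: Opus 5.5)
The plan is to treat the two claims separately: unbiasedness first, then the second-moment bound. Both reduce to the Karhunen--Lo\`eve coordinates of $\varepsilon$.

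\emph{Unbiasedness.} $P_{d_{\mathrm{eff}}}$ is a bounded linear operator on $\mathcal H$. The Bochner integral $m(\mu)=\mathbb E[g_{N_{\mathrm{pow}}}(\mu+\varepsilon)\varepsilon]$ is well defined, as shown in the proof of Lemma~\ref{app:lem:natgrad_closed}. Hence $P_{d_{\mathrm{eff}}}$ commutes with the expectation. Since the $\varepsilon^{(b)}$ are identically distributed, linearity gives
\[
\mathbb E[\widehat g(\mu)]=\sigma^{-2}P_{d_{\mathrm{eff}}}\,\mathbb E[g_{N_{\mathrm{pow}}}(\mu+\varepsilon)\varepsilon]=P_{d_{\mathrm{eff}}}\nabla_{\mathcal H_\Sigma}F_{N_{\mathrm{pow}}}(\mu),
\]
by the closed form \eqref{app:eq:natgrad_closed}.

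\emph{Single-sample moment.} Write $\varepsilon=\sigma\sum_k\sqrt{\lambda_k}\,\xi_k e_k$. Then $P_{d_{\mathrm{eff}}}\varepsilon=\sigma\sum_{k\le d_{\mathrm{eff}}}\sqrt{\lambda_k}\,\xi_k e_k$, which lies in $\mathcal H_\Sigma$ because it is a finite sum of eigenvectors with $\lambda_k>0$. From the Cameron--Martin inner product \eqref{eq:cm_ip},
\[
\|P_{d_{\mathrm{eff}}}\varepsilon\|_{\mathcal H_\Sigma}^2=\sum_{k\le d_{\mathrm{eff}}}\frac{\sigma^2\lambda_k\xi_k^2}{\lambda_k}=\sigma^2\sum_{k\le d_{\mathrm{eff}}}\xi_k^2 ,
\]
so $\mathbb E\|P_{d_{\mathrm{eff}}}\varepsilon\|_{\mathcal H_\Sigma}^2=\sigma^2 d_{\mathrm{eff}}$. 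This is where the $d_{\mathrm{eff}}$ enters: the $\lambda_k$ cancel exactly, so there is no dependence on the spectrum. Set $Y_b:=\sigma^{-2}g_{N_{\mathrm{pow}}}(\mu+\varepsilon^{(b)})P_{d_{\mathrm{eff}}}\varepsilon^{(b)}$. Bounding $|g_{N_{\mathrm{pow}}}|\le\|g_{N_{\mathrm{pow}}}\|_\infty$ gives
\[
\mathbb E\|Y_b\|_{\mathcal H_\Sigma}^2\le \frac{\|g_{N_{\mathrm{pow}}}\|_\infty^2\,d_{\mathrm{eff}}}{\sigma^2}.
\]

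\emph{Averaging over $B$ samples.} Write $\bar Y:=\mathbb E Y_b=P_{d_{\mathrm{eff}}}\nabla_{\mathcal H_\Sigma}F_{N_{\mathrm{pow}}}(\mu)$. The $Y_b$ are i.i.d.\ in the Hilbert space $\mathcal H_\Sigma$, so the cross terms vanish after centering, and
\[
\mathbb E\|\widehat g(\mu)-\bar Y\|_{\mathcal H_\Sigma}^2=\tfrac1B\,\mathbb E\|Y_1-\bar Y\|_{\mathcal H_\Sigma}^2\le\tfrac1B\,\mathbb E\|Y_1\|_{\mathcal H_\Sigma}^2\le\frac{\|g_{N_{\mathrm{pow}}}\|_\infty^2 d_{\mathrm{eff}}}{B\sigma^2}.
\]

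This last step is the main obstacle. The $1/B$ factor appears cleanly only for the centered (variance) term. The uncentered second moment decomposes as
\[
\mathbb E\|\widehat g(\mu)\|_{\mathcal H_\Sigma}^2=\|\bar Y\|_{\mathcal H_\Sigma}^2+\mathbb E\|\widehat g(\mu)-\bar Y\|_{\mathcal H_\Sigma}^2 ,
\]
and the squared-mean term $\|\bar Y\|_{\mathcal H_\Sigma}^2$ does not decay in $B$. As literally stated, the second-moment bound therefore looks false for large $B$ whenever $P_{d_{\mathrm{eff}}}\nabla_{\mathcal H_\Sigma}F_{N_{\mathrm{pow}}}(\mu)\neq 0$.

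I would first try to read the statement as a variance bound and prove it exactly as in the display above. If the uncentered version is needed downstream, Jensen gives $\|\bar Y\|_{\mathcal H_\Sigma}^2\le\mathbb E\|Y_1\|_{\mathcal H_\Sigma}^2$, so
\[
\mathbb E\|\widehat g(\mu)\|_{\mathcal H_\Sigma}^2\le\|P_{d_{\mathrm{eff}}}\nabla_{\mathcal H_\Sigma}F_{N_{\mathrm{pow}}}(\mu)\|_{\mathcal H_\Sigma}^2+\frac{\|g_{N_{\mathrm{pow}}}\|_\infty^2 d_{\mathrm{eff}}}{B\sigma^2}\le\Bigl(1+\tfrac1B\Bigr)\frac{\|g_{N_{\mathrm{pow}}}\|_\infty^2 d_{\mathrm{eff}}}{\sigma^2}.
\]
That is the bound I would carry into Lemma~\ref{lem:onestep-stochastic}.
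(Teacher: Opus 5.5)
Your argument matches the paper's proof up to and including the single-sample step. The paper also obtains unbiasedness by linearity and computes $\|P_{d_{\mathrm{eff}}}\varepsilon\|_{\mathcal H_\Sigma}^2=\sigma^2\sum_{k\le d_{\mathrm{eff}}}\xi_k^2$, hence $\mathbb E\|P_{d_{\mathrm{eff}}}\varepsilon\|_{\mathcal H_\Sigma}^2=\sigma^2 d_{\mathrm{eff}}$. It then cites ``independence and Jensen's inequality'' to assert $\mathbb E\|\widehat g(\mu)\|_{\mathcal H_\Sigma}^2\le \frac{1}{B\sigma^4}\|g_{N_{\mathrm{pow}}}\|_\infty^2\,\mathbb E\|P_{d_{\mathrm{eff}}}\varepsilon\|_{\mathcal H_\Sigma}^2$ for the \emph{uncentered} moment. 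That is exactly the step you flag, and your objection is correct. With $Y_b$ and $\bar Y$ as in your proposal, independence gives the exact identity
\[
\mathbb E\|\widehat g(\mu)\|_{\mathcal H_\Sigma}^2=\tfrac1B\,\mathbb E\|Y_1\|_{\mathcal H_\Sigma}^2+\bigl(1-\tfrac1B\bigr)\|\bar Y\|_{\mathcal H_\Sigma}^2 ,
\]
and the second term does not decay in $B$. You can upgrade ``looks false'' to a definite counterexample. Take $\mathcal H=\mathbb R$, $\Sigma=1$, $\sigma=1$, $d_{\mathrm{eff}}=1$, $\mu=0$, and $g_{N_{\mathrm{pow}}}=\mathbf 1_{(0,\infty)}$, i.e.\ $f\equiv 0$ on $S=(0,\infty)$. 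Then $\mathbb E Y_1^2=\tfrac12$ and $\bar Y=1/\sqrt{2\pi}$, so $\mathbb E\,\widehat g(\mu)^2=\frac{1}{2B}+\frac{B-1}{2\pi B}$. This exceeds the claimed bound $1/B$ as soon as $B>1+\pi$, e.g.\ $B=5$. So the lemma as stated is false and the paper's proof does not establish it. What is true is your variance bound $\mathbb E\|\widehat g(\mu)-P_{d_{\mathrm{eff}}}\nabla_{\mathcal H_\Sigma}F_{N_{\mathrm{pow}}}(\mu)\|_{\mathcal H_\Sigma}^2\le \|g_{N_{\mathrm{pow}}}\|_\infty^2 d_{\mathrm{eff}}/(B\sigma^2)$.

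Two refinements to your repair. First, the identity above gives $\mathbb E\|\widehat g(\mu)\|_{\mathcal H_\Sigma}^2\le\mathbb E\|Y_1\|_{\mathcal H_\Sigma}^2\le\|g_{N_{\mathrm{pow}}}\|_\infty^2 d_{\mathrm{eff}}/\sigma^2$ directly, without your factor $(1+\tfrac1B)$. So does Jensen applied to the average, $\|\tfrac1B\sum_b Y_b\|^2\le\tfrac1B\sum_b\|Y_b\|^2$, which is presumably what the paper's ``Jensen'' intended. Second, carrying this $B$-free bound into the one-step lemma would remove the $1/B$ from the iteration complexity. It is better to substitute the bias--variance decomposition $\mathbb E\|\widehat g\|_{\mathcal H_\Sigma}^2=\|P_{d_{\mathrm{eff}}}G_t\|_{\mathcal H_\Sigma}^2+\mathbb E\|\widehat g-P_{d_{\mathrm{eff}}}G_t\|_{\mathcal H_\Sigma}^2$ into the ascent inequality with $L=2\|g_{N_{\mathrm{pow}}}\|_\infty/\sigma^2$. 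Then absorb the first term using a step-size cap $\eta_t\le\sigma^2/(2\|g_{N_{\mathrm{pow}}}\|_\infty)$. This gives the one-step bound with $\eta_t/2$ in place of $\eta_t$ and the same penalty $\|g_{N_{\mathrm{pow}}}\|_\infty^3 d_{\mathrm{eff}}\eta_t^2/(B\sigma^4)$. The $1/B$ scaling of the final complexity then survives up to a constant factor. The tuned constant step size respects the cap once $T\ge 4B/d_{\mathrm{eff}}$.
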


\begin{proof}
Unbiasedness is immediate by linearity of expectation.
Since
\begin{align}
P_{d_\mathrm{eff}}\varepsilon_t^{(b)}
=
\sigma\sum_{k=1}^{d_{\mathrm{eff}}}
\sqrt{\lambda_k}\xi_{t,k}^{(b)}e_k,
\end{align}
we have
\begin{align}
\|P_{d_\mathrm{eff}}\varepsilon_t^{(b)}\|_{\mathcal H_\Sigma}^2
=
\sigma^2
\sum_{k=1}^{d_{\mathrm{eff}}}
(\xi_{t,k}^{(b)})^2.
\end{align}
Therefore
\begin{align}
\mathbb E
\|P_{d_\mathrm{eff}}\varepsilon_t^{(b)}\|_{\mathcal H_\Sigma}^2
=
\sigma^2d_{\mathrm{eff}}.
\end{align}
Using independence and Jensen's inequality,
\begin{align}
\mathbb E[
\|\widehat g(\mu_t)\|_{\mathcal H_\Sigma}^2
\mid\mu_t]
&\le
\frac{1}{B\sigma^{4}}
\|g\|_\infty^2
\mathbb E
\|P_{d_\mathrm{eff}}\varepsilon_t\|_{\mathcal H_\Sigma}^2=
\frac{\|g\|_\infty^2d_{\mathrm{eff}}}{B\sigma^{2}}.
\end{align}
\end{proof}

\begin{lemma}[One-step expected ascent]
\label{app:lem:onestep-stochastic}
Let \(P_{d_\mathrm{eff}}:\mathcal H\to\mathcal H_{d_{\mathrm{eff}}}\) be the \(\mathcal H\)-orthogonal projection.
Suppose that \(\{\mu_t\}\) is generated by the projected update rule. Then, conditioning on \(\mu_t\),
\begin{align}
\mathbb E\!\left[
F_{N_{\mathrm{pow}}}(\mu_{t+1})
\,\big|\,
\mu_t
\right]
\ge\;
&
F_{N_{\mathrm{pow}}}(\mu_t)
+
\eta_t
\big\|
P_{d_\mathrm{eff}}\nabla_{\mathcal H_\Sigma}
F_{N_{\mathrm{pow}}}(\mu_t)
\big\|_{\mathcal H_\Sigma}^{2}
-
\frac{
\|g_{N_{\mathrm{pow}}}\|_\infty^3 d_\mathrm{eff}
}{
B\sigma^{4}
}
\eta_t^2 .
\end{align}
\end{lemma}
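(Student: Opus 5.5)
The plan is the standard descent-lemma argument for smooth functions, run in the Cameron--Martin geometry and using Lemma~\ref{app:lem:lipschitz-from-second-derivative} as the smoothness ingredient. Write the projected update as $\mu_{t+1}=\mu_t+\eta_t\widehat g(\mu_t)$, with $\widehat g$ the projected estimator of Lemma~\ref{app:lem:mc-second-moment}. Since $P_{d_{\mathrm{eff}}}\varepsilon\in\mathcal H_{d_{\mathrm{eff}}}\subset\mathcal H_\Sigma$, the increment $d:=\eta_t\widehat g(\mu_t)$ lies in $\mathcal H_\Sigma$. So $\mu_t+sd$ stays on a Cameron--Martin line and both Lemma~\ref{app:lem:cm_directional} and the Lipschitz bound apply along it.

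\textbf{Step 1 (quadratic lower bound).} Set $\phi(s):=F_{N_{\mathrm{pow}}}(\mu_t+sd)$. Lemma~\ref{app:lem:cm_directional} and Lemma~\ref{app:lem:natgrad_closed} give $\phi'(s)=\langle\nabla_{\mathcal H_\Sigma}F_{N_{\mathrm{pow}}}(\mu_t+sd),d\rangle_{\mathcal H_\Sigma}$. Writing $\phi(1)-\phi(0)=\int_0^1\phi'(s)\,ds$, subtracting $\phi'(0)$, and using Cauchy--Schwarz in $\mathcal H_\Sigma$ with Lemma~\ref{app:lem:lipschitz-from-second-derivative} (constant $L=2\|g_{N_{\mathrm{pow}}}\|_\infty/\sigma^2$), I expect
\begin{align}
F_{N_{\mathrm{pow}}}(\mu_{t+1})\ge F_{N_{\mathrm{pow}}}(\mu_t)+\eta_t\langle\nabla_{\mathcal H_\Sigma}F_{N_{\mathrm{pow}}}(\mu_t),\widehat g(\mu_t)\rangle_{\mathcal H_\Sigma}-\frac{L}{2}\eta_t^2\|\widehat g(\mu_t)\|_{\mathcal H_\Sigma}^2 .
\end{align}
Alternatively, the second-order identity of Lemma~\ref{app:lem:second-directional} gives the same bound directly via $|\phi''|\le L\|d\|^2_{\mathcal H_\Sigma}$.

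\textbf{Step 2 (conditional expectation).} Take expectations conditional on $\mu_t$. By unbiasedness in Lemma~\ref{app:lem:mc-second-moment}, the linear term becomes $\eta_t\langle\nabla F,P_{d_{\mathrm{eff}}}\nabla F\rangle_{\mathcal H_\Sigma}$. Because the $e_k$ are eigenvectors of $\Sigma$, the $\mathcal H$-orthogonal projection $P_{d_{\mathrm{eff}}}$ is also $\mathcal H_\Sigma$-orthogonal: the $\mathcal H_\Sigma$ inner product is diagonal in $(e_k)$ with weights $1/\lambda_k$. Hence $\langle v,P v\rangle_{\mathcal H_\Sigma}=\|Pv\|^2_{\mathcal H_\Sigma}$, which produces the $\eta_t\|P_{d_{\mathrm{eff}}}\nabla F\|^2$ term. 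This diagonalization point needs an explicit check. For the quadratic term, insert the second-moment bound $\|g\|_\infty^2 d_{\mathrm{eff}}/(B\sigma^2)$. This gives the penalty
\begin{align}
\frac{L}{2}\cdot\frac{\|g\|_\infty^2 d_{\mathrm{eff}}}{B\sigma^2}\eta_t^2=\frac{\|g\|_\infty^3 d_{\mathrm{eff}}}{B\sigma^4}\eta_t^2,
\end{align}
which is exactly the claimed constant.

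\textbf{Main obstacle.} Step 2's moment bound is the weak point. Lemma~\ref{app:lem:mc-second-moment} as proved bounds the second moment using a Jensen/independence step that, taken literally, yields the variance-type $1/B$ scaling only for the centered part. A fully rigorous version would split $\mathbb E\|\widehat g\|^2=\|P\nabla F\|^2+\mathrm{Var}$. That split adds a further $-\tfrac{L}{2}\eta_t^2\|P\nabla F\|^2$ term, which can be absorbed into the linear term for $\eta_t\le 1/L$ at the cost of a factor $1/2$. I would first use the lemma as stated, since it is assumed, and note that the absorption argument is the fallback.

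A secondary technical point is justifying differentiation of $\phi$ along the line, i.e.\ continuity of $s\mapsto\phi'(s)$. Lemma~\ref{app:lem:second-directional} handles this, since it shows $\phi$ is $C^2$ with a bounded second derivative.
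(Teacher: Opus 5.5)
You follow the paper's proof exactly: an ascent inequality in $\mathcal H_\Sigma$ with constant $\|g_{N_{\mathrm{pow}}}\|_\infty/\sigma^2$, which you derive more carefully than the paper from Lemma~\ref{app:lem:lipschitz-from-second-derivative}. After that come conditional expectation, unbiasedness together with the $\mathcal H_\Sigma$-orthogonality of $P_{d_{\mathrm{eff}}}$, and substitution of the bound of Lemma~\ref{app:lem:mc-second-moment}. Everything up to that substitution is correct.

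The step you flagged is a genuine failure, shared by the paper, and it cannot be repaired to give the stated constant. Write $g=g_{N_{\mathrm{pow}}}$, $F=F_{N_{\mathrm{pow}}}$, $G_t=\nabla_{\mathcal H_\Sigma}F(\mu_t)$ and $X^{(b)}=\sigma^{-2}g(\mu_t+\varepsilon^{(b)})P_{d_{\mathrm{eff}}}\varepsilon^{(b)}$. The summands are not centered, so
\begin{align}
\mathbb E\big[\|\widehat g(\mu_t)\|_{\mathcal H_\Sigma}^2\,\big|\,\mu_t\big]=\Big(1-\frac1B\Big)\|P_{d_{\mathrm{eff}}}G_t\|_{\mathcal H_\Sigma}^2+\frac1B\,\mathbb E\|X^{(1)}\|_{\mathcal H_\Sigma}^2 .
\end{align}
The bound of Lemma~\ref{app:lem:mc-second-moment} simply drops the first term, so it is not valid in general for $B>1$.

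Worse, the displayed inequality of the statement is itself false for large $B$. Suppose $P_{d_{\mathrm{eff}}}G_t\neq0$, and take $\eta_t=2\|g\|_\infty/\|P_{d_{\mathrm{eff}}}G_t\|_{\mathcal H_\Sigma}^2$ and $B>\|g\|_\infty^2d_{\mathrm{eff}}\eta_t^2/\sigma^4$. Then the right-hand side strictly exceeds $\|g\|_\infty$, using $F(\mu_t)\ge0$. But $F\le\|g\|_\infty$ everywhere. So no argument, yours or the paper's, can establish the lemma as written.

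What your descent-lemma argument actually proves is exactly your fallback. For $\eta_t\le\sigma^2/(2\|g\|_\infty)$,
\begin{align}
\mathbb E\big[F(\mu_{t+1})\,\big|\,\mu_t\big]\ge F(\mu_t)+\frac{\eta_t}{2}\|P_{d_{\mathrm{eff}}}G_t\|_{\mathcal H_\Sigma}^2-\frac{\|g\|_\infty^3d_{\mathrm{eff}}}{B\sigma^4}\eta_t^2 .
\end{align}
You should present that corrected statement rather than invoke the flawed moment bound "as assumed".
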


\begin{proof}
For simplicity, write $F:=F_{N_{\mathrm{pow}}},\; g:=g_{N_{\mathrm{pow}}},\; G_t:=\nabla_{\mathcal H_\Sigma}F(\mu_t)$.
Since $\widehat g(\mu_t)\in\mathcal H_{d_{\mathrm{eff}}}\subset\mathcal H_\Sigma$,
the ascent lemma in the Hilbert space \(\mathcal H_\Sigma\) applies. Namely, for any
\(d\in\mathcal H_\Sigma\),
\begin{align}
F(\mu_t+d)
\ge
F(\mu_t)
+
\langle G_t,d\rangle_{\mathcal H_\Sigma}
-
\frac{\|g_{N_{\mathrm{pow}}}\|_\infty}{\sigma^2}\|d\|_{\mathcal H_\Sigma}^2.
\end{align}
Taking
\begin{align}
d=\eta_t\widehat g(\mu_t),  
\end{align}
we obtain
\begin{align}
F(\mu_{t+1})
\ge
F(\mu_t)
+
\eta_t
\langle G_t,\widehat g(\mu_t)\rangle_{\mathcal H_\Sigma}
-
\frac{\|g_{N_{\mathrm{pow}}}\|_\infty}{\sigma^2}
\eta_t^2
\|\widehat g(\mu_t)\|_{\mathcal H_\Sigma}^2.
\end{align}

Taking conditional expectation given \(\mu_t\),
\begin{align}
\mathbb E[
F(\mu_{t+1})\mid\mu_t]
\ge
&
F(\mu_t)
+
\eta_t
\left\langle
G_t,
\mathbb E[\widehat g(\mu_t)\mid\mu_t]
\right\rangle_{\mathcal H_\Sigma}
-
\frac{\|g_{N_{\mathrm{pow}}}\|_\infty}{\sigma^2}
\eta_t^2
\mathbb E[
\|\widehat g(\mu_t)\|_{\mathcal H_\Sigma}^2
\mid\mu_t].
\end{align}

By linearity of \(P_{d_\mathrm{eff}}\) and unbiasedness of the Monte--Carlo estimator,
\begin{align}
\mathbb E[\widehat g(\mu_t)\mid\mu_t]
=
P_{d_\mathrm{eff}}G_t.
\end{align}
Here \(P_{d_\mathrm{eff}}\) is the \(\mathcal H\)-orthogonal projection onto \(\mathcal H_{d_{\mathrm{eff}}}\). Since
\(\Sigma e_k=\lambda_k e_k\), the same coordinate projection is also orthogonal with
respect to \(\mathcal H_\Sigma\) on \(\mathcal H_\Sigma\). Hence
\begin{align}
\langle G_t,P_{d_\mathrm{eff}}G_t\rangle_{\mathcal H_\Sigma}
=
\|P_{d_\mathrm{eff}}G_t\|_{\mathcal H_\Sigma}^2=\frac{\sigma^2\|g\|_\infty^2d_{\mathrm{eff}}}{B\sigma^{2}}.
\end{align}
Lemma~\ref{app:lem:mc-second-moment} provides an upper bound of the projected estimator. 
Substituting this bound gives the claim.
\end{proof}

\begin{theorem}[Projected iteration complexity in a Hilbert space]
\label{app:thm:complexity}
Let \(\{\mu_t\}_{t\ge 0}\subset\mathcal H\) be generated by the projected update rule with a constant stepsize \(\eta>0\). Then the iterates satisfy
\begin{align}
\min_{0\le t\le T-1}
\mathbb E\!\left[
\big\|
P_{d_\mathrm{eff}}\nabla_{\mathcal H_\Sigma}
F_{N_{\mathrm{pow}}}(\mu_t)
\big\|_{\mathcal H_\Sigma}
\right]
\le
\varepsilon
\end{align}
provided that
\begin{align}
T
\ge
\frac{
4\,
\|g_{N_{\mathrm{pow}}}\|_\infty^{4}\,
d_{\mathrm{eff}}
}{
B\,\sigma^{4}\varepsilon^{4}
}.
\end{align}
\end{theorem}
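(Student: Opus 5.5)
The plan is the standard nonconvex stochastic-ascent argument: telescope the one-step inequality of Lemma~\ref{app:lem:onestep-stochastic}, use the fact that $F_{N_{\mathrm{pow}}}$ is bounded, tune the constant stepsize, and pass from second moments to first moments by Jensen's inequality. Throughout, write $F:=F_{N_{\mathrm{pow}}}$, $G_t:=\nabla_{\mathcal H_\Sigma}F(\mu_t)$, and
\begin{align}
C:=\frac{\|g_{N_{\mathrm{pow}}}\|_\infty^3 d_{\mathrm{eff}}}{B\sigma^4}.
\end{align}

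First, take total expectations in Lemma~\ref{app:lem:onestep-stochastic} with $\eta_t\equiv\eta$. This gives
\begin{align}
\eta\,\mathbb E\|P_{d_{\mathrm{eff}}}G_t\|_{\mathcal H_\Sigma}^2\le \mathbb E[F(\mu_{t+1})]-\mathbb E[F(\mu_t)]+C\eta^2 .
\end{align}
Summing over $t=0,\dots,T-1$ makes the right-hand side telescope to $\mathbb E[F(\mu_T)]-F(\mu_0)+TC\eta^2$. Since $0\le g_{N_{\mathrm{pow}}}\le\|g_{N_{\mathrm{pow}}}\|_\infty$ pointwise, we have $0\le F\le\|g_{N_{\mathrm{pow}}}\|_\infty$, so the telescoped difference is at most $\|g_{N_{\mathrm{pow}}}\|_\infty$. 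Bounding the minimum by the average then yields
\begin{align}
\min_{0\le t\le T-1}\mathbb E\|P_{d_{\mathrm{eff}}}G_t\|_{\mathcal H_\Sigma}^2\le \frac{\|g_{N_{\mathrm{pow}}}\|_\infty}{T\eta}+C\eta .
\end{align}

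Second, choose the constant stepsize to balance the two terms, namely $\eta=\sqrt{\|g_{N_{\mathrm{pow}}}\|_\infty/(TC)}$. With this choice the bound becomes
\begin{align}
2\sqrt{\frac{\|g_{N_{\mathrm{pow}}}\|_\infty C}{T}}=\sqrt{\frac{4\|g_{N_{\mathrm{pow}}}\|_\infty^4 d_{\mathrm{eff}}}{B\sigma^4T}} .
\end{align}
Third, apply Jensen's inequality, $\mathbb E\|X\|\le(\mathbb E\|X\|^2)^{1/2}$, at the index $t$ attaining the minimum of the second moments. This gives
\begin{align}
\min_t\mathbb E\|P_{d_{\mathrm{eff}}}G_t\|_{\mathcal H_\Sigma}\le\left(\frac{4\|g_{N_{\mathrm{pow}}}\|_\infty^4 d_{\mathrm{eff}}}{B\sigma^4T}\right)^{1/4}.
\end{align}
The right-hand side is at most $\varepsilon$ exactly when $T\ge 4\|g_{N_{\mathrm{pow}}}\|_\infty^4 d_{\mathrm{eff}}/(B\sigma^4\varepsilon^4)$, which is the claimed condition.

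The main obstacle is the stepsize. The statement says only ``constant stepsize $\eta>0$'', but the bound holds only for the tuned choice (or any $\eta$ within a constant factor of it). I will state explicitly that $\eta$ is taken to be $\sqrt{\|g_{N_{\mathrm{pow}}}\|_\infty/(TC)}$, which depends on the horizon $T$. If the $T$-independence matters, the alternative is to take $\eta$ tuned for the threshold horizon $T_0:=\lceil 4\|g_{N_{\mathrm{pow}}}\|_\infty^4 d_{\mathrm{eff}}/(B\sigma^4\varepsilon^4)\rceil$ and run the argument over $T_0$ steps. Since $\min_{0\le t\le T-1}$ is nonincreasing in $T$, the guarantee then carries over to every $T\ge T_0$.

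A secondary point is that Lemma~\ref{app:lem:onestep-stochastic} must hold at every iterate. This requires $\mu_{t+1}-\mu_t\in\mathcal H_{d_{\mathrm{eff}}}\subset\mathcal H_\Sigma$ for the projected update, so that the Cameron--Martin ascent lemma (from Lemma~\ref{app:lem:lipschitz-from-second-derivative}) applies along every step. This holds by construction of the projected estimator.
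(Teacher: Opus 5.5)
Your proposal is correct and follows the paper's proof essentially step for step: you telescope the one-step ascent lemma, use $0\le F_{N_{\mathrm{pow}}}\le\|g_{N_{\mathrm{pow}}}\|_\infty$, take the balancing stepsize $\eta=\sqrt{B\sigma^4/(\|g_{N_{\mathrm{pow}}}\|_\infty^2 d_{\mathrm{eff}}T)}$ (which is exactly your $\sqrt{\|g_{N_{\mathrm{pow}}}\|_\infty/(TC)}$), and pass to first moments by Jensen. Your remark that this $\eta$ depends on $T$, and your fix of tuning $\eta$ for $T_0$ and using that the running minimum is nonincreasing in $T$, addresses a point the paper skips when it simply ``optimizes over $\eta$''.
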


\begin{proof}
By Lemma~\ref{app:lem:onestep-stochastic},
\begin{align}
\mathbb E\!\left[
F_{N_{\mathrm{pow}}}(\mu_{t+1})
\,\middle|\,
\mu_t
\right]
\ge
&
F_{N_{\mathrm{pow}}}(\mu_t)
+
\eta
\big\|
P_{d_\mathrm{eff}}\nabla_{\mathcal H_\Sigma}
F_{N_{\mathrm{pow}}}(\mu_t)
\big\|_{\mathcal H_\Sigma}^{2}
-
\frac{
\|g_{N_{\mathrm{pow}}}\|_\infty^3 d_{\mathrm{eff}}
}{
B\sigma^{4}
}
\eta^2.
\end{align}

Taking full expectation and summing over \(t=0,\dots,T-1\),
\begin{align}
\eta
\sum_{t=0}^{T-1}
\mathbb E
\Big\|
P_{d_\mathrm{eff}}\nabla_{\mathcal H_\Sigma}
F_{N_{\mathrm{pow}}}(\mu_t)
\Big\|_{\mathcal H_\Sigma}^{2}
\le
&
\mathbb E[F_{N_{\mathrm{pow}}}(\mu_T)]
-
F_{N_{\mathrm{pow}}}(\mu_0)
+
\frac{
\|g_{N_{\mathrm{pow}}}\|_\infty^3 d_{\mathrm{eff}}
}{
B\sigma^{4}
}
\eta^2T.
\end{align}

Since
\[
F_{N_{\mathrm{pow}}}(\mu)
\le
\|g_{N_{\mathrm{pow}}}\|_\infty,
\]
we obtain
\begin{align}
\eta
\sum_{t=0}^{T-1}
\mathbb E
\Big\|
P_{d_\mathrm{eff}}\nabla_{\mathcal H_\Sigma}
F_{N_{\mathrm{pow}}}(\mu_t)
\Big\|_{\mathcal H_\Sigma}^{2}
\le
\|g_{N_{\mathrm{pow}}}\|_\infty
+
\frac{
\|g_{N_{\mathrm{pow}}}\|_\infty^3 d_{\mathrm{eff}}
}{
B\sigma^{4}
}
\eta^2T.
\end{align}

Dividing by \(\eta T\),
\begin{align}
\frac1T
\sum_{t=0}^{T-1}
\mathbb E
\Big\|
P_{d_\mathrm{eff}}\nabla_{\mathcal H_\Sigma}
F_{N_{\mathrm{pow}}}(\mu_t)
\Big\|_{\mathcal H_\Sigma}^{2}
\le
\frac{
\|g_{N_{\mathrm{pow}}}\|_\infty
}{
\eta T
}
+
\frac{
\|g_{N_{\mathrm{pow}}}\|_\infty^3 d_{\mathrm{eff}}
}{
B\sigma^{4}
}
\eta.
\end{align}

Hence
\begin{align}
\min_{0\le t\le T-1}
\mathbb E
\Big\|
P_{d_\mathrm{eff}}\nabla_{\mathcal H_\Sigma}
F_{N_{\mathrm{pow}}}(\mu_t)
\Big\|_{\mathcal H_\Sigma}^{2}
\le
\frac{
\|g_{N_{\mathrm{pow}}}\|_\infty
}{
\eta T
}
+
\frac{
\|g_{N_{\mathrm{pow}}}\|_\infty^3 d_{\mathrm{eff}}
}{
B\sigma^{4}
}
\eta.
\end{align}

Optimizing the right-hand side over \(\eta>0\) gives
\[
\eta
=
\sqrt{
\frac{
B\sigma^{4}
}{
\|g_{N_{\mathrm{pow}}}\|_\infty^2d_{\mathrm{eff}}T
}
}.
\]
Substituting this value yields
\begin{align}
\min_{0\le t\le T-1}
\mathbb E
\Big\|
P_{d_\mathrm{eff}}\nabla_{\mathcal H_\Sigma}
F_{N_{\mathrm{pow}}}(\mu_t)
\Big\|_{\mathcal H_\Sigma}^{2}
\le
\frac{
2\|g_{N_{\mathrm{pow}}}\|_\infty^2
\sqrt{d_{\mathrm{eff}}}
}{
\sigma^{2}\sqrt{B}\sqrt{T}
}.
\end{align}

Finally, Jensen's inequality implies
\begin{align}
\min_{0\le t\le T-1}
\mathbb E
\Big\|
P_{d_\mathrm{eff}}\nabla_{\mathcal H_\Sigma}
F_{N_{\mathrm{pow}}}(\mu_t)
\Big\|_{\mathcal H_\Sigma}
\le
\sqrt{
\frac{
2\|g_{N_{\mathrm{pow}}}\|_\infty^2
\sqrt{d_{\mathrm{eff}}}
}{
\sigma^{2}\sqrt{B}\sqrt{T}
}
}.
\end{align}

Therefore, it suffices that
\begin{align}
\frac{
2\|g_{N_{\mathrm{pow}}}\|_\infty^2
\sqrt{d_{\mathrm{eff}}}
}{
\sigma^{2}\sqrt{B}\sqrt{T}
}
\le
\varepsilon^2,
\end{align}
which is equivalent to
\begin{align}
T
\ge
\frac{
4\,
\|g_{N_{\mathrm{pow}}}\|_\infty^{4}\,
d_{\mathrm{eff}}
}{
\sigma^{4}B\,\varepsilon^{4}
}.
\end{align}
\end{proof}

\section{Synthetic Experiment Analysis}

This appendix provides additional details for the synthetic narrow-passage experiment in Sec.~VI-A.  
Sec.~C.1 specifies the environment geometry and the trajectory scoring function used in the benchmark.  
Sec.~C.2 summarizes the optimization methods and the evaluation protocol for all baselines.  
Sec.~C.3 relates the observed success rates and smoothness statistics to the geometric structure of the feasible region and the update mechanisms of the different methods.

\subsection{Environment and Objective}

We consider a one-dimensional trajectory
\begin{align}
y : [0,1] \rightarrow \mathbb{R},
\end{align}
discretized at $L=100$ time steps (100\,Hz over a 1\,s horizon).  
The environment is defined by four rectangular obstacle regions in the $(t,y)$ plane,
\begin{align}
\mathcal{B}_i = [t_i^0, t_i^1] \times [y_i^0, y_i^1],
\end{align}
with
\begin{align}
\begin{aligned}
\mathcal{B}_1 &: t\in[0.2,0.25],\; y\in[-1.0,4.0],\\
\mathcal{B}_2 &: t\in[0.4,0.6],\; y\in[-2.0,2.0],\\
\mathcal{B}_3 &: t\in[0.7,1.0],\; y\in[0.5,5.0],\\
\mathcal{B}_4 &: t\in[0.7,1.0],\; y\in[-5.0,-0.5].
\end{aligned}
\end{align}
The feasible set is the complement of the union of these boxes.  
This geometry induces a narrow, time-fragmented corridor: at each time interval only a small subset of $y(t)$ is feasible, and admissible regions shift discontinuously across time.

For a discretized trajectory $\mathbf{y}=(y_1,\dots,y_L)$, we define the per-time-step penetration score
\[
s_t =
\min_i
\begin{cases}
-\min(y_t-y_i^0,\; y_i^1-y_t), & (t,y_t)\in\mathcal{B}_i,\\
0, & \text{otherwise}.
\end{cases}
\]
Thus, $s_t\le 0$, and $s_t=0$ indicates that the trajectory is collision-free at time step $t$.

The trajectory score is defined by separating collision and collision-free cases:
\[
f(\mathbf{y}) =
\begin{cases}
\exp\!\left(-\lambda_{\mathrm{jerk}}\,\bar{J}(\mathbf{y})\right),
& \text{if } s_t=0 \;\; \forall t,\\[5pt]
\frac{1}{L}\sum_{t=1}^{L}s_t,
& \text{otherwise},
\end{cases}
\]
where $\lambda_{\mathrm{jerk}}=10^{-4}$ and $\bar{J}(\mathbf{y})$ denotes the average absolute jerk of the trajectory. 
The jerk is computed using a third-order finite-difference operator $A_{\mathrm{jerk}}$:
\[
\mathbf{j}=A_{\mathrm{jerk}}\mathbf{y},
\qquad
\bar{J}(\mathbf{y})
=
\frac{1}{L}
\sum_{t=1}^{L} |j_t|.
\]

For trajectories that intersect obstacles, the score is given by the average penetration penalty and remains non-positive. 
For collision-free trajectories, the exponential bonus assigns higher scores to trajectories with lower average absolute jerk. 
Thus, the objective first prioritizes feasibility and then favors smoother feasible trajectories.

\subsection{Optimization Methods}

We compare the proposed natural functional gradient (NFG) method with representative baseline optimizers. 
Our method directly maximizes the score $f(\mathbf{y})$ defined in Sec.~C.1 using Monte-Carlo estimation with $N=100$ trajectory perturbations per iteration. 
All baseline methods instead solve a minimization problem and aim to reduce collision while encouraging smooth feasible trajectories.

STOMP minimizes the transformed objective $J(\mathbf{y}) = 1 - f(\mathbf{y})$ using stochastic finite-difference updates. 
At each iteration, $N=100$ trajectory perturbations are sampled from the same trajectory covariance used by the proposed method, and the update direction is estimated from exponentially weighted trajectory costs. 
Lower-cost trajectories receive larger update weights through normalized exponential reweighting.

CHOMP also minimizes $J(\mathbf{y}) = 1 - f(\mathbf{y})$, but relies on analytic gradient updates instead of stochastic sampling. 
For collision trajectories, the gradient is computed from the outward normal direction corresponding to the deepest penetration region. 
When the trajectory lies exactly at the midpoint between two penetration boundaries, the update direction is selected randomly between the two outward directions to avoid ambiguous gradient assignments. 
For collision-free trajectories, the objective reduces to an exponential penalty on the average absolute jerk, and gradients are computed from the corresponding finite-difference jerk operator. 
CHOMP updates trajectories deterministically aside from this tie-breaking procedure and does not use trajectory sampling.

MPPI follows a rollout-based optimization formulation. 
Rather than directly optimizing $f(\mathbf{y})$, it minimizes a running cost composed of obstacle penalties and a terminal goal objective:
\[
J = \sum_t J_{\mathrm{obs}}(t) + \sum_t J_{\mathrm{goal}}(t),
\]
where $J_{\mathrm{obs}}$ corresponds to the box penetration penalty and
\[
J_{\mathrm{goal}}(t) = (y_t - y_{\mathrm{goal}})^2
\]
encourages the trajectory to reach the target state. 
Trajectory perturbations are generated using Wiener-process noise, and trajectory updates are computed through exponential reweighting of sampled rollouts.

\begin{figure}[t]
    \centering
    \includegraphics[width=0.99\linewidth]{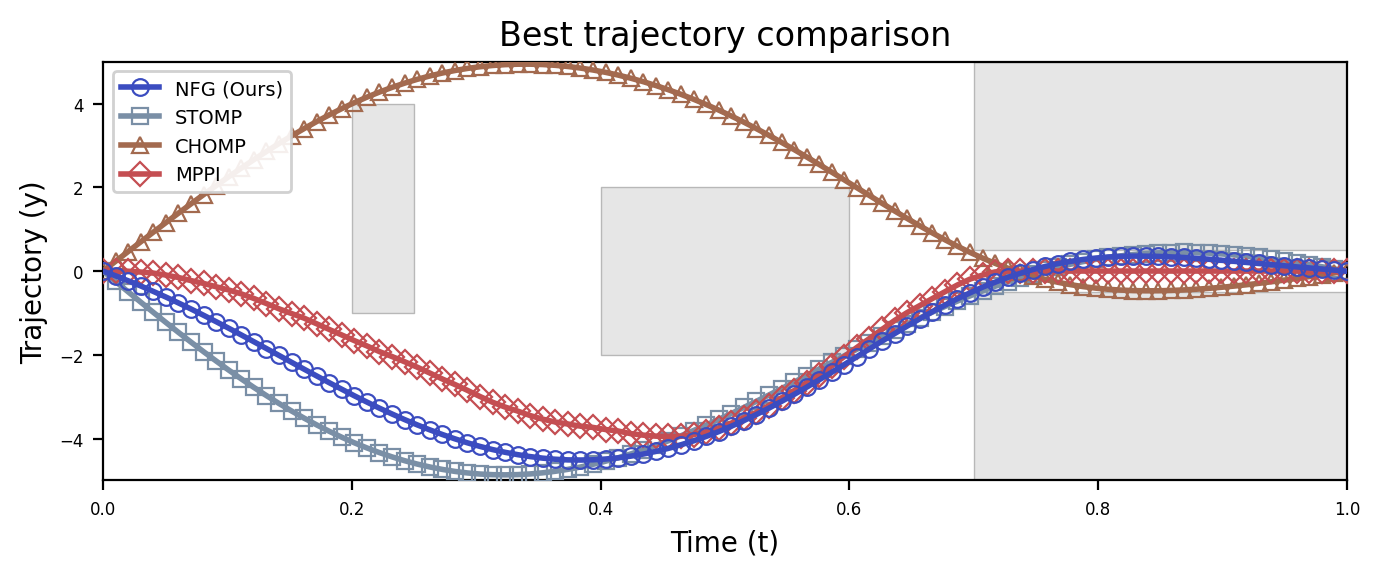}
    \vspace{-3mm}
    \caption{Best trajectories obtained by each method in the synthetic box-avoiding scenario.}
    \label{app:fig:supp_synthetic_traj}
\end{figure}

\subsection{Observed Behavior and Failure Analysis}

All methods are evaluated over five random seeds with identical environment settings. 
The initial trajectory for all runs is a zero-valued trajectory, corresponding to linear interpolation between the initial and final states. 
A trial is counted as successful if the resulting trajectory remains collision-free over the full horizon. 
Reported success rates and smoothness statistics in Table III correspond to averages over these five runs.

The narrow and temporally fragmented feasible corridor occupies only a small region in trajectory space, so most trajectory perturbations intersect at least one obstacle. 
Representative optimized trajectories are illustrated in Fig.~\ref{app:fig:supp_synthetic_traj}. 
STOMP occasionally discovers feasible trajectories, but many sampled perturbations receive similar collision penalties, making consistent progress toward the narrow corridor difficult. 
CHOMP often deforms the trajectory toward the feasible region through local gradient updates, but the optimization can remain sensitive to initialization and local penetration geometry. 
MPPI reaches the feasible corridor more frequently, but the resulting trajectories exhibit substantially larger jerk due to stochastic rollout variability and the absence of an explicit smoothness mechanism.

In contrast, the proposed method performs updates on a Gaussian-smoothed surrogate objective within the induced function-space geometry. 
The covariance structure enables smooth trajectory perturbations while aggregating weak feasibility information across sampled trajectories. 
As a result, the method consistently discovers feasible trajectories that pass through the narrow corridor while maintaining relatively smooth trajectory evolution, yielding the highest success rate and lower jerk than MPPI, as reported in Sec.~VI-A.

\section{Simulation Experiment Analysis}

This appendix provides additional details on the simulation experiments in constrained cabinet environments.
The focus is to clarify the evaluation pipeline used for comparing different planners under a unified execution and collision-checking model.
Since the compared methods produce trajectories with heterogeneous representations and implicit assumptions, all outputs are post-processed into a common time-parameterized form and re-evaluated in a physics-based simulator.

\subsection{Experimental Setup of the Proposed Method}

All simulation experiments are conducted using a Franka Research 3 manipulator equipped with an Inspire left hand. 
The task consists of grasping a cylindrical object from a shelf and inserting it into a target cabinet while avoiding collisions with the surrounding environment. 
The manipulated object is a cylinder with radius $3\,\mathrm{cm}$ and height $30\,\mathrm{cm}$.

The cabinet environment has dimensions of $60\,\mathrm{cm} \times 140\,\mathrm{cm} \times 60\,\mathrm{cm}$, and the cabinet floor is positioned $10\,\mathrm{cm}$ above the table surface. 
Scene difficulty is controlled by varying the cabinet door opening angle, resulting in four environments: free space, fully open, quarter closed, and half closed. 
All environments share the same initial and target robot configurations, while the geometric constraints near the insertion region become progressively tighter as the cabinet opening decreases.

The proposed method generates joint-space trajectories over a fixed discretized horizon and performs optimization directly in function space using Gaussian trajectory perturbations. 
Additional implementation details regarding trajectory discretization, optimization parameters, and execution settings are provided in the supplementary material.

In our implementation, trajectories are represented with a temporal resolution of $100\,\mathrm{Hz}$ over a $5\,\mathrm{s}$ horizon.
At each iteration, $N=100$ candidate trajectories are sampled and evaluated using a trajectory-level objective that reflects geometric feasibility and motion regularity.
All candidates are optimized and compared under this shared discrete-time representation.

Let $\xi^{(j)} = \{\mathbf{q}_t^{(j)}\}_{t=1}^{H}$ denote the $j$-th sampled joint trajectory, where $\mathbf{q}_t^{(j)} \in \mathbb{R}^n$ and $H$ is the planning horizon.
To discourage excessive joint motion, we define a path-length regularization term
\[
C_{\mathrm{len}}^{(j)} = \sum_{d=1}^{n} \sum_{t=1}^{H-1}
\left| q_{t+1,d}^{(j)} - q_{t,d}^{(j)} \right|.
\]

Geometric feasibility is evaluated in the simulator at every time step.
Let $n_t^{(j)}$ denote the number of contact points at time $t$, and let $d_t^{(j)}$ denote the minimum signed distance between the robot and the environment, where $d_t^{(j)} < 0$ indicates penetration.
The instantaneous collision score is defined as
\[
\phi_t^{(j)} =
\begin{cases}
0, & d_t^{(j)} > 0, \\
n_t^{(j)} \, d_t^{(j)}, & d_t^{(j)} \le 0.
\end{cases}
\]

For each trajectory, the collision signal is aggregated using the worst-case value along the horizon,
\[
C_{\mathrm{col}}^{(j)} = \min_{t \in \{1,\dots,H\}} \phi_t^{(j)}.
\]

The final objective for each sampled trajectory is given by
\[
f(\xi^{(j)}) = C_{\mathrm{col}}^{(j)} - \lambda \, C_{\mathrm{len}}^{(j)},
\]
where $\lambda > 0$ balances collision avoidance and joint-space path length.

\subsection{Unified Evaluation Pipeline}

The compared planners produce joint trajectories under different internal representations and execution assumptions.
Some return sparse waypoints, while others optimize time-discretized trajectories or rely on internal smoothing and collision approximations.
To enable a consistent execution-level comparison, all planner outputs are post-processed into a common time-parameterized joint-space trajectory and evaluated using the same simulation and collision-checking pipeline.

Let $\{\mathbf{q}_i\}_{i=1}^{L}$ denote the joint-space waypoints produced by a planner, where $\mathbf{q}_i \in \mathbb{R}^n$.
After unwrapping angular joints to remove $2\pi$ discontinuities, we compute the incremental joint-space distances
\begin{align}
\Delta s_i = \|\mathbf{q}_{i+1} - \mathbf{q}_i\|_2, \quad i = 1, \dots, L-1,
\end{align}
and define the cumulative arc-length
\begin{align}
s_1 = 0, \quad
s_i = \sum_{j=1}^{i-1} \Delta s_j, \quad i = 2, \dots, L,
\end{align}
with total path length $S = s_L$.

Each waypoint is assigned a time stamp by linearly mapping arc-length to a fixed duration $T$,
\begin{align}
t_i = \frac{s_i}{S} T, \quad i = 1, \dots, L.
\end{align}
This defines a continuous-time parameterization of the discrete path that is independent of the planner’s internal time representation.

A uniform time grid is then defined as
\begin{align}
\tau_k = \frac{k-1}{f}, \quad k = 1, \dots, M, \qquad M = Tf,
\end{align}
where $f = 100\,\mathrm{Hz}$ and $T = 5\,\mathrm{s}$.
This grid defines the shared execution resolution used for all planners.

For each joint dimension $d \in \{1, \dots, n\}$, the resampled trajectory is obtained by linear interpolation between consecutive waypoints,
\begin{align}
q_d(\tau_k) = q_{i,d} + \frac{\tau_k - t_i}{t_{i+1}-t_i}
\bigl(q_{i+1,d} - q_{i,d}\bigr),
\quad t_i \le \tau_k < t_{i+1}.
\end{align}
This yields a time-parameterized joint trajectory
\begin{align}
\mathbf{q}(\tau_k) = \bigl(q_1(\tau_k), \dots, q_n(\tau_k)\bigr), \quad k = 1, \dots, M.
\end{align}

The resampled trajectory is executed in MuJoCo, and collision checks are performed at every time step.
A trial is counted as successful if and only if the entire time-parameterized trajectory is collision-free.

Because the compared planners rely on different internal representations, collision models, and time discretizations, their outputs are evaluated under different native execution assumptions.
The unified evaluation therefore provides a common execution-level reference in which all methods are assessed under identical simulation and collision-checking conditions.
Table~\ref{app:tab:traj_representation} summarizes the main structural differences in how each method represents and executes trajectories.

Although CHOMP and STOMP also optimize time-discretized trajectories with the same nominal horizon and sampling rate, their update rules are defined within their own internal execution models.
As a result, some trajectories that are feasible under native settings can exhibit large inter-step joint changes when resampled and executed at a fixed rate.
These effects should be interpreted as differences in how planner-specific assumptions translate under a shared execution model, rather than as a statement of correctness of one execution model over another.

\begin{table}[t]
\vspace{4pt} 
\centering
\small
\setlength{\tabcolsep}{6pt}
\renewcommand{\arraystretch}{1.0}
\caption{Summary of structural differences in trajectory representation and execution among the compared planners.}
\vspace{4pt}
\label{app:tab:traj_representation}
\begin{tabular}{p{0.24\columnwidth} p{0.33\columnwidth} p{0.33\columnwidth}}
\toprule
\textbf{Method} &
\textbf{What is optimized} &
\textbf{How it is executed} \\
\midrule
BiRRT &
Sparse geometric waypoints &
Smoothed and time-parameterized \\

CHOMP &
Time-discretized joint trajectory &
Executed on fixed grid \\

STOMP &
Stochastic time trajectory &
Executed on fixed grid \\

cuRobo &
Convexified approximate motion &
Executed under internal model \\

\textbf{NFG (Ours)} &
Function-space joint trajectory &
Executed under unified model \\
\bottomrule
\end{tabular}
\end{table}

\subsection{Analysis on Baseline Execution Behaviors}
To examine execution-level feasibility under a shared reference model, all planner outputs are re-evaluated using the unified post-processing and collision-checking pipeline described in Section~B.2.
Each output is first converted into a fixed-length, time-parameterized joint trajectory and then executed in MuJoCo, where collision checks are performed at every time step.
In our experimental evaluation, we observed a distinct discrepancy: several baseline methods (cuRobo, STOMP, and CHOMP) frequently reported finding valid, collision-free solutions based on their internal metrics. However, when these same solutions were subjected to the unified evaluation pipeline described above, they often resulted in collisions. We analyzed the generated trajectories and identified that these failures were not necessarily due to algorithmic errors, but rather due to structural differences in geometric approximation and temporal discretization handling.

\subsubsection{BiTRRT}
For sampling-based planners such as BiTRRT, the outputs typically consist of sparse geometric waypoints that connect the start and goal configurations. When these sparse configurations are linearly interpolated to match the 100 Hz shared execution resolution, collisions were occasionally observed along the resampled trajectory segments.

This phenomenon highlights a trade-off inherent in the configuration of the planner. While it is possible to mitigate execution-level collisions by reducing the planning range (i.e., the maximum length of edges in the tree) or increasing the resolution of edge collision checking, such adjustments generally increase the computational burden. In our experiments, stricter edge constraints often led to increased planning times or a higher frequency of timeouts, resulting in incomplete solutions. This indicates that performance under the unified evaluation is highly dependent on the balance between planning efficiency and the granularity of the generated path.

\begin{figure}[h]
    \centering
    \includegraphics[width=0.99\linewidth]{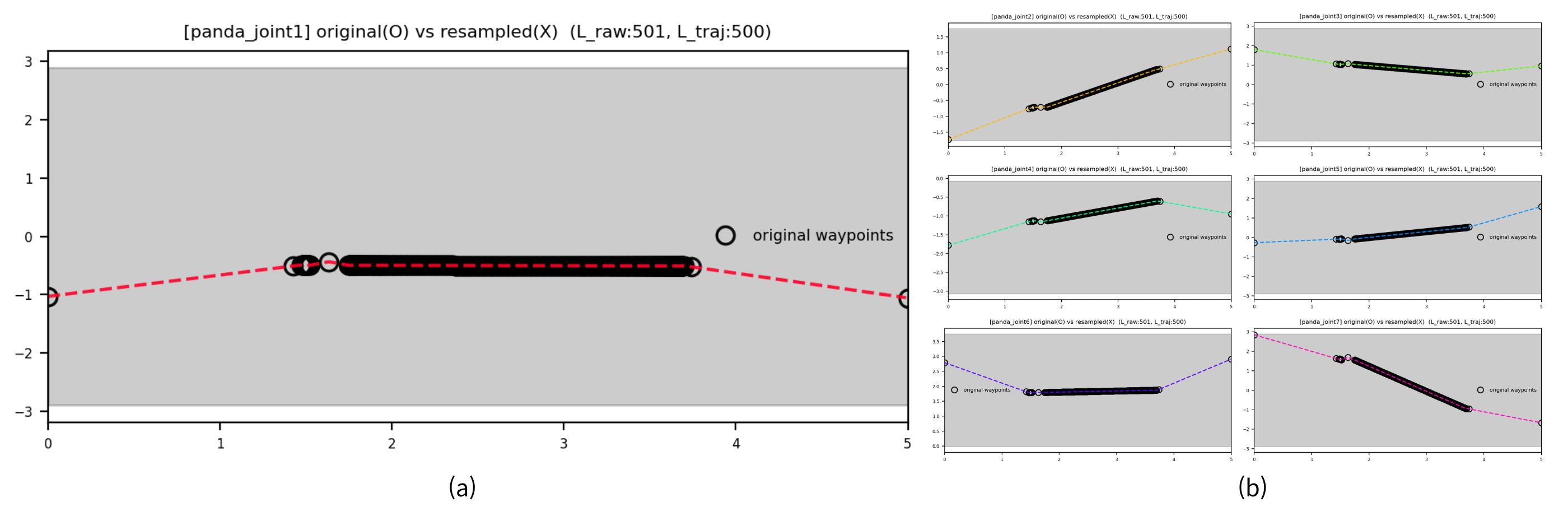}
    \vspace{-3mm}
    \caption{Trajectory comparison for the 7-DOF Panda manipulator. The sequences of black circles represent the discrete, original waypoints optimized by CHOMP, while the red dotted lines represent the linearly interpolated trajectory used for execution. A significant discontinuity is observed between the initial configuration ($t=0$) and the first optimized waypoint, creating a linear path that may violate collision constraints during continuous execution.}
    \label{app:fig:stomp_fail}
\end{figure}

\begin{figure}[h]
    \centering
    \includegraphics[width=0.99\linewidth]{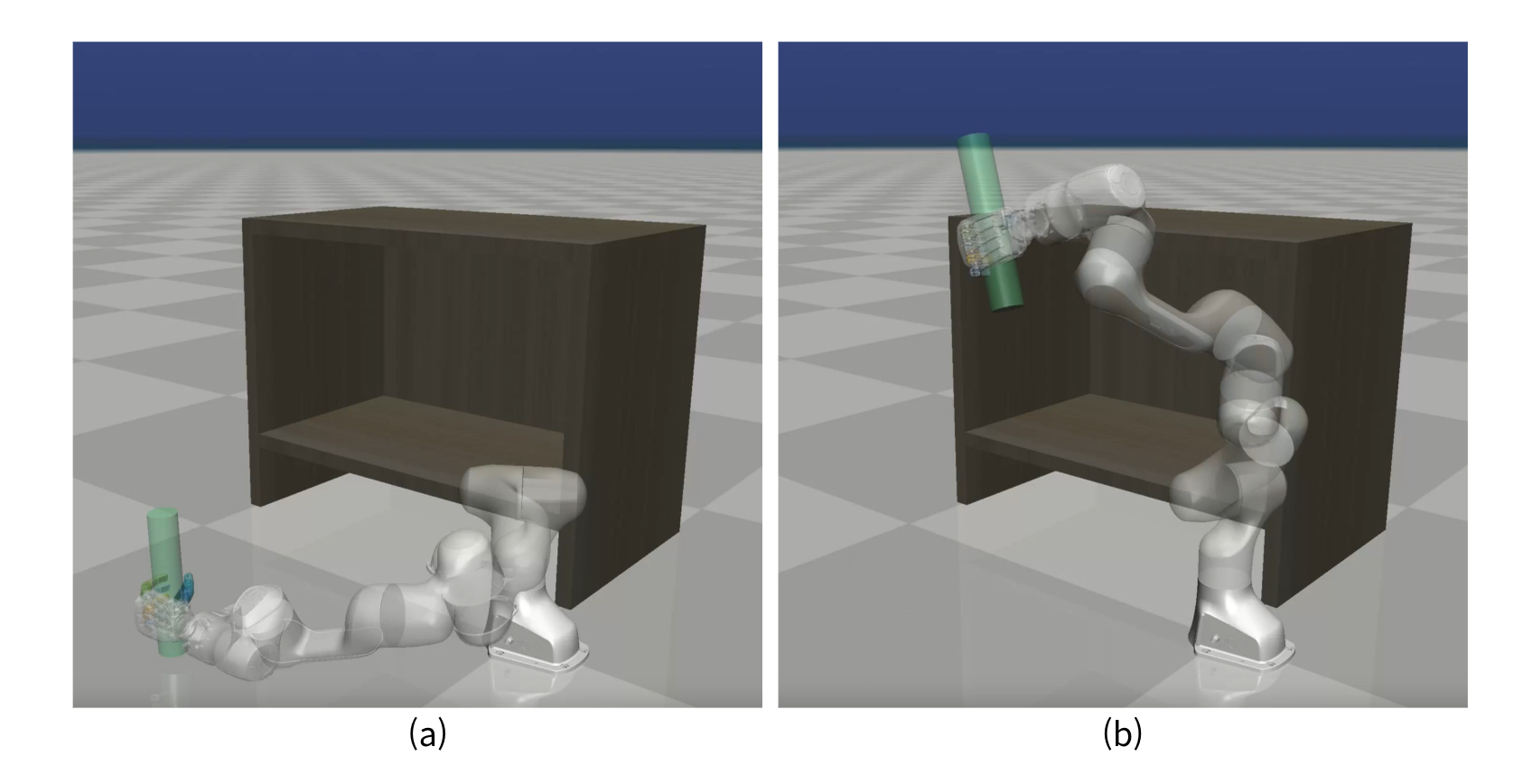}
    \vspace{-3mm}
    \caption{Visualization of the kinematic discontinuity in the MuJoCo environment. (a) The initial robot configuration and (b) the immediate next waypoint generated by CHOMP. The large spatial displacement between these two consecutive states implies that the linearly interpolated path—required for controller execution—inevitably intersects with the cabinet geometry, resulting in a collision despite both individual endpoints being collision-free.}
    \label{app:fig:stomp_sim_fail}
\end{figure}

\subsubsection{CHOMP and STOMP}

For discrete-time optimization methods such as CHOMP and STOMP, the discrepancies observed under the unified evaluation were primarily driven by kinematic inconsistencies arising from the discretization of the trajectory. These methods optimize a fixed set of waypoints to minimize a cost function.

Our analysis revealed that while the optimized waypoints themselves were mostly collision-free, the solvers can exhibit discontinuities under the unified execution model suitable for high-frequency control. As illustrated in the supplementary figures (e.g., Fig.~\ref{app:fig:stomp_fail}), we observed significant joint-space jumps, particularly between the fixed start configuration and the first adjacent optimized waypoint (and similarly near the goal configuration). Fig.~\ref{app:fig:stomp_sim_fail} visualizes the physical consequence of this discontinuity: the robot is required to transition between two vastly different configurations—from a resting pose to an insertion pose—within a single discrete time step.

Although the discrete states at steps $t$ and $t+1$ (e.g., the start state and the first waypoint) were valid configurations in isolation, they were sufficiently distant in configuration space that the linear path connecting them intersected with obstacles. When Eq. (143) was applied to interpolate these sparse waypoints for the 100 Hz controller, the intermediate states generated during this transition violated collision constraints. 

This phenomenon indicates that checking feasibility only at discrete knot points may be insufficient for complex manipulation tasks where obstacles lie in the manifold between waypoints. Consequently, a divergence occurs between the feasibility under native planner assumptions (based on discrete knot points) and the actual execution feasibility (which requires continuous motion through the interpolated path).

\subsubsection{cuRobo}

For cuRobo, we utilized the built-in functionality to approximate the attached cylindrical object and robot links using collision spheres. While this approximation is efficient for collision checking within the planner's native framework, it introduces a geometric discrepancy compared to the high-fidelity meshes used in the MuJoCo simulator.

This discrepancy is primarily associated with the configuration of the primitive-based approximation. As illustrated in the supplementary figures, the method relies on pre-defined collision spheres (e.g., the yellow spheres in Fig.~\ref{app:fig:curobo_fail}(a)) to represent the robot's volume. Consequently, the planner's ability to navigate tight spaces becomes sensitive to the size and distribution of these primitives.

If the collision spheres are set to be smaller or sparsely distributed, the planner may not detect potential collisions in the interstitial spaces between spheres or in the gaps between the primitive approximation and the actual mesh surface (as shown in Fig.~\ref{app:fig:curobo_fail}(b)). This can lead to trajectories that are valid under the sphere model but result in contact during high-fidelity execution. Conversely, if the spheres are inflated to cover these gaps, the effective volume of the robot increases. In the "half-closed" cabinet scenario, this expanded geometry can occupy the limited available clearance, potentially preventing the planner from finding a feasible solution due to the restricted free space.

While increasing the number of collision spheres could reduce these geometric gaps, doing so generally increases the computational cost of the collision checking routines. This presents a trade-off where improving geometric accuracy to match the unified evaluation standard may affect the computational efficiency that characterizes the method.

\begin{figure}[h]
    \centering
    \includegraphics[width=0.99\linewidth]{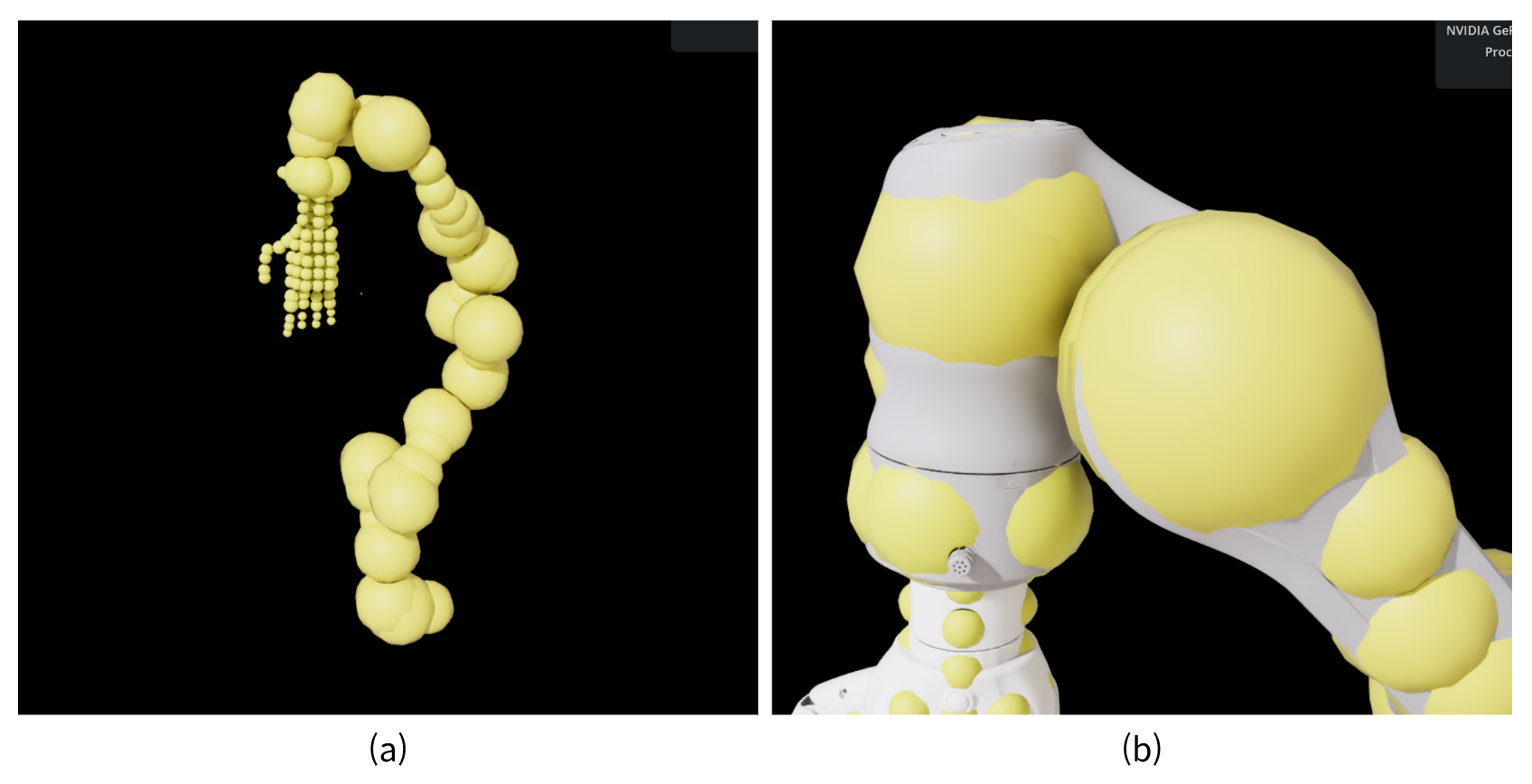}
    \vspace{-3mm}
    \caption{(a) Illustration of the user-defined collision spheres (yellow) approximating the robot links. (b) Comparison showing the volumetric mismatch between the collision primitives and the original mesh. Reducing the sphere size or density results in interstitial gaps where collisions fail to be detected. Conversely, inflating the spheres to cover these gaps introduces excessive padding, which artificially restricts robot movement and can prevent the discovery of feasible trajectories in constrained environments.}
    \label{app:fig:curobo_fail}
\end{figure}

These observations highlight how differences in internal trajectory representations, collision models, and execution assumptions can influence outcomes when evaluated under a common, rigorous reference pipeline. While sampling-based planners rely on post-processing for smoothness and discrete optimizers depend on knot-point feasibility, the proposed NFG method operates directly within the function space. By avoiding auxiliary modeling assumptions—such as convexified collision geometry or separate smoothing stages—our approach naturally aligns with the unified execution model, demonstrating intrinsic robustness in constrained environments.
\end{document}